\documentclass{article}

\usepackage{arxiv}

\usepackage[utf8]{inputenc} 
\usepackage[T1]{fontenc}    
\usepackage{url}            
\usepackage{nicefrac}       

\usepackage{graphicx}
\usepackage{natbib} 
\usepackage{multibib}
\usepackage{hyperref} 
\hypersetup{ 
    colorlinks=true,
    linkcolor=blue,  
    citecolor=blue,  
    urlcolor=blue,  
    }

\usepackage{amsmath}
\usepackage{amsfonts} 
\usepackage{bm} 
\usepackage{dsfont} 

\usepackage{amsthm} 
\newtheorem{remark}{Remark}[section]%

\newtheorem{assumption}{Assumption}

\newtheorem{theorem}{Theorem}[section]
\newtheorem{lemma}{Lemma}[section]

\usepackage{booktabs}  
\usepackage{adjustbox}
\usepackage{multirow}
\usepackage{multicol}
\usepackage{makecell}
\usepackage{rotating} 

\usepackage{caption}  
\usepackage{subcaption}   
\usepackage{mwe}
\newlength{\tempheight}
\newlength{\tempwidth}
\newcommand{\rowname}[1]
{\rotatebox{90}{\makebox[\tempheight][c]{{#1}}}}
\newcommand{\columnname}[1]
{\makebox[\tempwidth][c]{{#1}}}

\usepackage{algorithm} 
\usepackage{algorithmicx} 
\usepackage{algpseudocode} 

\title{MNAR-$k$-means: A $k$-means Clustering for Data Missing Not at Random with Magnitude-Decaying Probability}

\author{
 Xin Guan \\
  Graduate School of Information Sciences\\
  Tohoku University\\
  Sendai, Miyagi 980-8579, Japan\\
  \texttt{guan.xin.c5@tohoku.ac.jp} \\
}

\begin{document}
\maketitle

\begin{abstract}
The classical $k$-means clustering, based on distances computed from all data features, cannot be directly applied to incomplete data with missing values. A natural extension of $k$-means to missing data is to involve only the observed positions in clustering, which is equivalent to imputing missing values by corresponding cluster means. 
However, for data missing not at random (MNAR), since missingness is related to data values, such a mean-imputation-based method may lead to the distortion of estimated cluster centers, resulting in a poor clustering result. 
Since MNAR mechanisms are very common in reality, it is necessary to improve the performance of $k$-means-based clustering methods for such data.  
In this paper, we focus on a magnitude-decaying MNAR scenario where data is more likely to be missing at positions with smaller absolute values, and we propose a novel $k$-means clustering method based on the constraint of the size of imputation values, which enjoys a good mathematical interpretation. 
Moreover, we establish the statistical consistency of the estimated cluster centers of the proposed method to the true cluster centers of fully observed data, and solve the optimization of the proposed loss function via an alternative minimization algorithm. 
Simulation experiments verify the effect of the proposed method in improving clustering results and reducing the bias of estimated cluster centers. 
Applications to real-world missing data further show the utility of the proposed method.   
\end{abstract}

\section{Introduction}

The $k$-means clustering is one of the most widely used clustering methods, which gives a partition for a dataset based on the nearest cluster center of each data point. However, the requirement for a fully observed dataset of $k$-means limits its capacity for missing data. 

In reality, the missingness of data is usually caused by many reasons, which is called the \textit{missing mechanism}. According to \cite{little2019statistical}, there are three main types of missing mechanisms: missing completely at random (MCAR), missing at random (MAR), and missing not at random (MNAR). 
The MCAR mechanism indicates that the missingness of data is independent to the data value. The MAR mechanism indicates that the missingness of each position depends on the observed part of data. The MNAR mechanism indicates that the missingness of each position depends on the corresponding data value. Usually, the MCAR and MAR mechanisms are collectively referred to as ignorable missingness, while the MNAR mechanism is often non-ignorable. 

To apply $k$-means to missing data, the traditional strategy is two-stage, that is, apply missing data handling techniques to construct a new complete data matrix first, and then conduct the $k$-means clustering. 
The simplest method is complete-case analysis that excludes incomplete data points from analysis, which does not work when the missing proportion is large \cite{hathaway2001fuzzy}. 
The classical imputation methods fill in missing entries by a single value estimated from observed entries, such as mean imputation based on the sample mean in each feature, and regression imputation based on a regression model on observed features, whereas these methods may lead to a bias of imputation values if the hidden probabilistic model about missingness is complicated \cite{ghahramani1995learning}. 
Some more sophisticated methods can achieve more accurate imputation. For example, multiple imputation gives a set of imputation values and then pool the imputation results to reduce the uncertainty of single imputation \cite{buuren2011mice,honaker2011amelia}; imputation using machine learning and deep learning algorithms can also obtain a high accuracy, including $K$-nearest neighbors \cite{Hulse2014,liu2016knn}, random forest \cite{stekhoven2012missforest}, traditional neural network \cite{Choudhury2019nn,spinelli2020} and generative adversarial network \cite{yoon2018gain}. 
However, these imputation methods usually pursue a perfect recovery of the original fully observed dataset and thus have a high computation cost.

On the other hand, many integrated methods and algorithms for missing data clustering have also been developed, which mainly provide new measurements for dissimilarity between two incomplete data points, as a substitute of the Euclidean distance used in full observed data clustering. 
For example, \cite{abdallah2014mean} consider the data distribution density in each feature and derive an approximate value of the distance between two incomplete data points that only relies on the means and variances of the data distribution in each feature. 
\cite{datta2018clustering} construct a dissimilarity of two incomplete data points by adding a penalty term to the Euclidean distance in observed features of both points, where the penalty term is given by the sum of observed probabilities in missing features of these two points. 
However, these substitutes of the Euclidean distance are not guaranteed to be a distance measure, limiting further theoretical analysis. 
\cite{mesquita2017} propose a measurement called expected Euclidean distance that is indeed a distance measure, whereas it requires a special data distribution. 
In addition, some other methods consider representing each missing value by an interval instead of a single value, such as \cite{li2013hybrid,li2016robust}, to obtain a more robust clustering result, which often lacks mathematical interpretations and statistical guarantees. 
Furthermore, a few discussions on the missing mechanism are mentioned in these methods.

Apart from the above methods, another simple and intuitive method is to rewrite the $k$-means clustering to be a special matrix decomposition problem, then by introducing a mask matrix to indicate missing positions, we can directly minimize the $k$-means loss of matrix form over observed positions only. 
Such a method is known as $k$-POD (i.e., $k$-means for partially observed data), and was proposed by \cite{chi2016k} and \cite{wang2019k} independently. The $k$-POD clustering is also regarded as a natural and efficient extension of $k$-means to missing data for its easy implementation and flexibility for various missingness, and thus has received much attention. 
For example, \cite{lithio2018efficient} propose a variant of $k$-POD by using the Hartigan-Wong algorithm \cite{hartigan1979} to speed up the original Lloyd's algorithm \cite{Lloyd1982}, and \cite{aschenbruck2023imputation} extend the application of $k$-POD to mixed type data with missingness, and \cite{agliz2025joint} adopt the idea of $k$-POD for dimensional reduction.

However, the $k$-POD clustering performs poorly for missing data under MNAR mechanisms. 
In fact, the $k$-POD clustering is essentially equivalent to filling in missing values of each data point by corresponding values of the cluster center to which each data point belongs. This implies that the clustering performance depends heavily on the accuracy of the estimated cluster centers.  
Moreover, we note that the estimated cluster centers are given by the averages of data points belonging to each cluster, where only observed values are involved in the calculation, and the accuracy of estimated cluster centers may vary under different missing mechanisms. 
For the MCAR mechanism, since missingness is independent with data values, then the dissimilarity between the estimated cluster centers given by imputed data and the true cluster centers given by original fully observed data will converge to zero when the sample size is large enough, as long as cluster labels are estimated correctly. 
However, for the MNAR mechanism, since missingness is non-ignorable, such a property may no longer hold, leading to the bias of estimated cluster centers, and consequently a poor clustering result.

Figure~\ref{fig_intro_example} gives a toy example of data in $\mathbb{R}^2$ with 30\% missingness, showing the failure of $k$-POD clustering method under an MNAR mechanism. 
The left panel is the result of the classical $k$-means clustering on the original fully observed data, which has 39 points with incorrect labels. 
However, under the MNAR mechanism, the $k$-POD method (central panel) gives biased cluster centers and a poor clustering result with 87 data points being mis-clustered.

\begin{figure}[!t]
    \centering
    \begin{minipage}{0.25\textwidth}
    \centering
        \includegraphics[width=\linewidth]{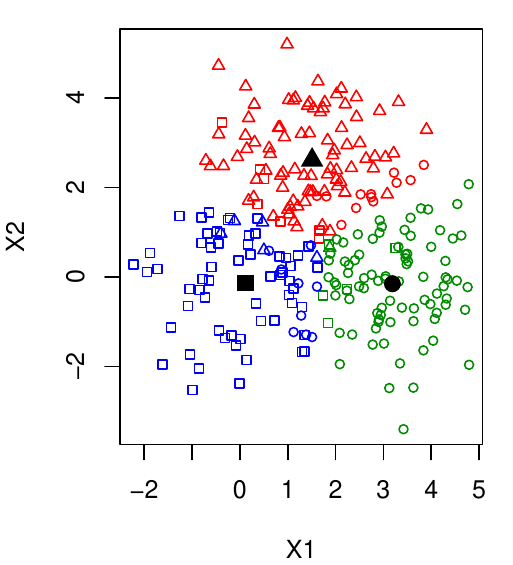}
        {\small $k$-means}
    \end{minipage}
    \begin{minipage}{0.25\textwidth}
    \centering
        \includegraphics[width=\linewidth]{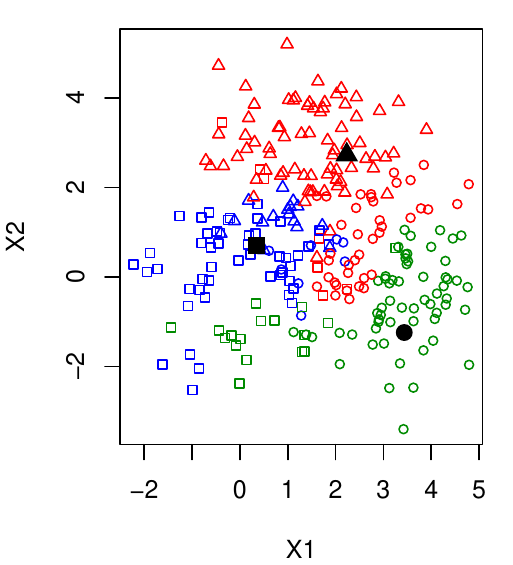}
        {\small $k$-POD}
    \end{minipage}
    \begin{minipage}{0.25\textwidth}
    \centering
        \includegraphics[width=\linewidth]{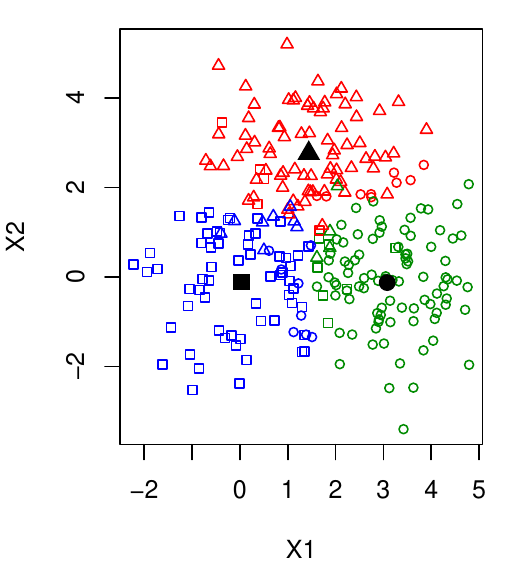}
        {\small Proposed}
    \end{minipage}
    \caption{The clustering results of data in $\mathbb{R}^2$ with 30\% missingness by different methods. From left to right: $k$-means on original fully observed data, $k$-POD and proposed method on missing data under an MNAR mechanism. Each point represents an original fully observed data point. Shape represents the true cluster label, and color represents the estimated cluster label. Three solid black points are estimated cluster centers. }
    \label{fig_intro_example}
\end{figure}

Our aim is to reduce the bias and improve the performance of $k$-means-based clustering methods for data missing not at random. 
Since the MNAR mechanisms are common but usually complex in reality, it is difficult to provide a universal solution applicable for all scenarios. In this work, we focus on a common scenario of data with magnitude-decaying missing probability, where data is more likely to be missing at positions with smaller absolute values. 

For this purpose, we note that in the above scenario, it is more possible that the true value of a missing position is close to zero, and cluster-mean imputation can lead to an overestimation for cluster centers. 
Therefore, it is necessary to control the size of imputation values, which inspires us to penalize the imputed data matrix at missing positions when doing clustering. 
Figure~\ref{fig_intro_example} (right panel) shows a better clustering result for the toy example via penalization, where only 41 data points are mis-clustered (similar to $k$-means). Our contributions are as follows: 
\begin{itemize}
    \item We analyze the reason for the failure of existing $k$-POD clustering under MNAR mechanisms;   
    \item We propose a novel $k$-means clustering method for missing data under a magnitude-decaying MNAR mechanism, which enjoys a good mathematical interpretation; 
    \item We establish the statistical consistency of the estimated cluster centers obtained by the proposed method, and give sufficient conditions under which the estimator converges to the true cluster centers of fully observed data;
    \item We propose an algorithm for optimizing the proposed loss based on alternative minimization, which has a convergence guarantee and linear computation time. 
\end{itemize}

The structure of the rest of this paper is as follows. In Section~\ref{sec_methodology}, we analyze why the $k$-POD clustering fails under MNAR mechanisms, and introduce the proposed method and its mathematical interpretation. In Section~\ref{sec_theoretical_analysis}, we analyze the theoretical properties of the proposed method. In Section~\ref{sec_optimization}, we introduce the optimization of the proposed loss function, including the proposed algorithm and other details of implementation. In Section~\ref{sec_simulations}, we evaluate the effect of the proposed method via numerical experiments on synthetic missing data, and we verify the utility of the proposed method by applying it to real-world missing data in Section~\ref{sec_applications}. Section~\ref{sec_conclusion} is the conclusion and discussions.

\section{Methodology}
\label{sec_methodology}

\subsection{Notations and preliminaries}

Denote by $\bm{X}\in\mathbb{R}^{n\times p}$ the data matrix consisting of $n$ data points $\bm{x}_1,\dots,\bm{x}_n$ in $\mathbb{R}^p$. 
For each $\bm{x}_i=(x_{i1},\dots,x_{ip})$, $i=1,\dots,n$, let $\bm{r}_i=(r_{i1},\dots,r_{ip})\in\{0,1\}^p$ be the indicator of missingness. That is, $r_{ij}=1$ means $x_{ij}$ being observed, and $r_{ij}=0$ missing. 
Write $\bm{U}=(u_{il})_{n\times k} \in \{0,1\}^{n\times k}$, where $u_{il}=1$ if $\bm{x}_i$ belongs to cluster $l$. 
Write $\bm{M}=(\mu_{lj})_{k\times p} \in \mathbb{R}^{k\times p}$, where its $l$-th row $\bm{\mu}_l\in \mathbb{R}^p$ is the $l$-th cluster center. 
Write $\|\bm{A}\|_F=\left(\sum_{i=1}^{n}\sum_{j=1}^{p} a_{ij}^2 \right)^{1/2}$ for Frobenius norm of matrix $\bm{A}\in\mathbb{R}^{n\times p}$. Write $\bm{1}_k$ for all-one vector with length $k$. 
The matrix form of $k$-means clustering is given by
\begin{align}
    \min_{\bm{U},\bm{M}} \|\bm{X} -\bm{UM}\|_F^2,
\label{eq_def_km}
\end{align}
where $\bm{U}\bm{1}_k=\bm{1}_n$. 
Moreover, when the data matrix has missingness, we let $\Omega \subset \{1,\dots,n\}\times \{1,\dots,p\}$, where $(i,j)\in\Omega$ if $x_{ij}$ is observed. 
Define a mapping $\mathcal{P}_{\Omega}:\mathbb{R}^{n\times p}\rightarrow \mathbb{R}^{n\times p}$ by $\big[\mathcal{P}_{\Omega}(\bm{Y})\big]_{ij}=y_{ij}\mathds{1}\left((i,j)\in \Omega\right)$ for any matrix $\bm{Y}=(y_{ij})\in\mathbb{R}^{n\times p}$. 
The $k$-POD clustering is given by 
\begin{align}
    \min_{\bm{U},\bm{M}} \| \mathcal{P}_{\Omega}(\bm{X} -\bm{UM})\|_F^2,
\label{eq_def_kpod}
\end{align}
where $\bm{U}\bm{1}_k=\bm{1}_n$. 
It can be seen that Eq.(\ref{eq_def_kpod}) is equivalent to minimizing $\|\widehat{\bm{X}} -\bm{UM}\|_F^2$ with $\widehat{\bm{X}}=\mathcal{P}_{\Omega}(\bm{X})+\mathcal{P}_{\Omega^c}(\bm{UM})$, where $\Omega^c$ is the complement set of $\Omega$.

\subsection{Why $k$-POD fails for MNAR data}
\label{sec_why_kpod_fails}

To explain the failure of $k$-POD for MNAR data, we consider the full observed data points $\bm{x}_1,\dots,\bm{x}_n$ being independently drawn from the same distribution with density function as follows: 
\begin{align*}
    f(\bm{x}_i ; \bm{U}, \bm{M})=\prod_{l=1}^{k} \phi_{p}(\bm{x}_i;\bm{\mu}_l)^{u_{il}},
\end{align*}
where $\bm{U}=(u_{il})_{n\times k}\in\{0,1\}^{n\times k}$ with $\bm{U}\bm{1}_k=\bm{1}_n$ indicates the cluster membership of each $\bm{x}_i$, and $\phi_{p}(\bm{x}_i;\bm{\mu}_l)$ is the density function of Gaussian distribution in $\mathbb{R}^p$ with the mean vector being $\bm{\mu}_l$ and covariance matrix being $\sigma^2\bm{I}_p$.
Moreover, given $\bm{x}_1,\dots,\bm{x}_n$, the density function of each missingness indicator $\bm{r}_i$ can be written as 
\begin{align*}
    f(\bm{r}_i|\bm{x}_1,\dots,\bm{x}_n)
    = \prod_{j=1}^{p} q_{ij}^{r_{ij}}\cdot  \left( 1-q_{ij}\right)^{1-r_{ij}}, 
\end{align*}
where $q_{ij}$ ($i=1,\dots,n$, $j=1,\dots,p$) is the observed probability of $x_{ij}$. Under MNAR mechanisms, each $q_{ij}$ is a function of $x_{ij}$.
Thus, the joint density function of $\bm{x}_1,\dots,\bm{x}_n$ and $\bm{r}_1,\dots,\bm{r}_n$ is given by 
\begin{align*}
    &f(\bm{x}_1,\dots,\bm{x}_n,\bm{r}_1,\dots,\bm{r}_n;\bm{U},\bm{M}) \notag 
    = \prod_{i=1}^{n} f(\bm{x}_i ; \bm{U}, \bm{M}) \cdot f(\bm{r}_i|\bm{x}_1,\dots,\bm{x}_n).
\end{align*}
Write $\bm{X}=(\bm{x}_1^T,\dots,\bm{x}_n^T)^T$ and $\bm{R}=(\bm{r}_1^T,\dots,\bm{r}_n^T)^T$. Denote by $\bm{X}^{o}$ and $\bm{X}^{m}$ the observed and missing elements of $\bm{X}$, receptively. Then, to estimate $(\bm{U},\bm{M})$, we write the log-likelihood function of complete data $(\bm{X}^{o},\bm{X}^{m},\bm{R})$:
\begin{align*}
    &\ell_{n}(\bm{U},\bm{M};\bm{X}^{o},\bm{X}^{m},\bm{R}) \\
    &=\sum_{i=1}^{n} \log \big(  f(\bm{x}_i ; \bm{U},\bm{M}) \cdot f(\bm{r}_i|\bm{x}_1,\dots,\bm{x}_n)  \big)  \\
    &=\sum_{i=1}^{n} \sum_{l=1}^{k} u_{il}\cdot\left\{ - \frac{1}{2\sigma^2} \sum_{j=1}^{p} (x_{ij} - \mu_{lj})^2 \right\}  + \sum_{i=1}^{n} \sum_{j:r_{ij}=0} \log(1-q_{ij}) 
     -\frac{np}{2}\log(2\pi \sigma^2) + \sum_{i=1}^{n} \sum_{j:r_{ij}=1} \log (q_{ij}) ,
\end{align*} 
where the last two terms are constants when given $\bm{X}^{o},\bm{R}$. 
Suppose $2\sigma^2=1$, then $\max_{\bm{U},\bm{M}}\; \ell_{n}(\bm{U},\bm{M};\bm{X}^{o},\bm{X}^{m},\bm{R})$ is equivalent to: 
\begin{align}
    \min_{\bm{U},\bm{M}}\; \sum_{i=1}^{n} \sum_{l=1}^{k} u_{il}\cdot\left\{ \sum_{j=1}^{p} (x_{ij} - \mu_{lj})^2 \right\}  
    +  \sum_{i=1}^{n} \sum_{j:r_{ij}=0} \log\left(\frac{1}{1-q_{ij}} \right) .
    \label{eq_interpretation_min_problem_2} 
\end{align}
In Eq.(\ref{eq_interpretation_min_problem_2}), the first term is exactly the loss function of $k$-means on complete data, and the second term is the negative logarithm of the missing probability of all missing entries. 

Under the MCAR mechanism, since the missing probability is unrelated to $\bm{x}_1,\dots,\bm{x}_n$, then we have each $q_{ij}\in (0,1)$ is a constant. Consequently, the term $\sum_{i=1}^{n} \sum_{j:r_{ij}=0} \log\big(1/(1-q_{ij}) \big)$ is a constant and can be ignored when solving Eq.(\ref{eq_interpretation_min_problem_2}). 
Thereby, with the constrain $x_{ij}=\sum_{l=1}^{k} u_{il}\mu_{lj}$ for all $(i,j)$ satisfying $r_{ij}=0$, the minimization problem Eq.(\ref{eq_interpretation_min_problem_2}) is equivalent to 
\begin{align}
    &\min_{\bm{U},\bm{M}}\; \|\widehat{\bm{X}} - \bm{UM} \|_F^2  \notag\\
    & \text{s.t. } \widehat{\bm{X}}= \mathcal{P}_{\Omega}(\bm{X}) +  \mathcal{P}_{\Omega^c}(\bm{UM}),
\end{align}
which is exactly the objective function of existing $k$-~POD method. 
Unfortunately, since under MNAR mechanisms, the missing probability $1-q_{ij}$ is determined by the missing value $x_{ij}$ that is unknown, then the second term of Eq.(\ref{eq_interpretation_min_problem_2}) can not be ignored for optimization. 
The above probabilistic model shows that the existing $k$-POD method is essentially designed only for the MCAR mechanism, and thus no penalty term is needed. 
Whereas, for MNAR mechanisms, the $k$-POD ignores information of missing distribution, which causes the bias of estimation.

\subsection{Proposed method}
\label{subsec_method}

According to the analysis in Section~\ref{sec_why_kpod_fails}, we can see the key role of modeling and maximizing missing probability $1-q_{ij}$ of missing entries when clustering incomplete data under MNAR mechanisms. 
In general, estimating $q_{ij}$ is not identifiable without additional assumptions about missing mechanisms. 
Through this paper, we focus on a common scenario of MNAR mechanisms with magnitude-decaying missing probability, where data is more likely to be missing at positions with smaller absolute values. 
We can model such a scenario by the following formula, which is widely used for the single-cell RNA sequence data (scRNAseq data), where low-expression genes are more likely to have missing values than high-expression genes due to insufficient sequencing depth \cite{park2019sparse}. 
\begin{assumption}[Decaying squared exponential missingness]
    Given the original fully observed data, the observed probability of each $x_{ij}$ ($i=1,\dots,n$, $j=1,\dots,p$) is given by
    \begin{align}
    \text{Pr}\left( x_{ij}\text{ is observed } \;|\; \bm{x}_1,\dots,\bm{x}_n \right)= 1-\exp(-\lambda^{\ast} x_{ij}^2),
\end{align}
where $\lambda^{\ast} \geq 0$ is a hyper-parameter indicating the strength of missingness. 
\label{assumption_def_MNAR0}
\end{assumption}
Accordingly, we have $q_{ij}=1- \exp(-\lambda^{\ast} x_{ij}^2)$ for any $i=1,\dots,n$, $j=1,\dots,p$, then $\log\left(1/(1-q_{ij}) \right)= \lambda^{\ast} x_{ij}^2$.
It follows that with the constraint $x_{ij}=\sum_{l=1}^{k} u_{il}\mu_{lj}$ for all $(i,j)$ satisfying $r_{ij}=0$, the second term of Eq.(\ref{eq_interpretation_min_problem_2}) is proportional to $\|\mathcal{P}_{\Omega^c}(\bm{UM}) \|_F^2$. 
This suggests us to introduce the term $\|\mathcal{P}_{\Omega^c}(\bm{UM}) \|_F^2$ into clustering procedure. 

To this end, we propose our method as a minimization problem with respect to $\bm{U}\in\{0,1\}^{n\times k},\;\bm{U}\bm{1}_k=\bm{1}_n$ and $\bm{M}\in\mathbb{R}^{k\times p}$: 
\begin{align}
    &\min_{\bm{\bm{U},\bm{M}}}\; \|\widehat{\bm{X}} -\bm{UM}\|_F^2 + \lambda \| \mathcal{P}_{\Omega^c}(\bm{UM})\|_F^2 \notag\\
    &\text{s.t. }  \widehat{\bm{X}}=\mathcal{P}_{\Omega}(\bm{X})+\mathcal{P}_{\Omega^c}(\bm{UM}) 
    \label{eq_def_proposed}.
\end{align}
The first term is equivalent to the loss of $k$-means on the imputed data matrix $\widehat{\bm{X}}$. The second term constrains the imputation values at missing positions, to make the imputed data matrix $\widehat{\bm{X}}$ more close to the ground truth. 
The $\lambda\geq 0$ is a tuning parameter to control the strength of penalization.

The proposed method enjoys good interpretations. 
First, mathematically, since under Assumption~\ref{assumption_def_MNAR0}, the penalty term corresponds to the sum of the negative logarithm of the missing probability of all input missing entries, then minimizing such a penalty term is equivalent to maximizing these missing probabilities. 
This is necessary because under MNAR mechanisms, whether $x_{ij}$ is missing is also a random event, the occurrence probability of which is determined by the original true data values. Consequently, a statistically plausible imputation should yield a high theoretical missing probability, thereby aligning with the missing distribution reflected by the obtained incomplete dataset. 
Second, intuitively, the penalty term acts as a regularizer that directly controls the magnitude of the imputed values. In our considered MNAR scenario, if the $(i,j)$-th position is missing, then the true value of $x_{ij}$ is more likely to be very close to zero.  
However, traditional $k$-POD clustering fills in missing positions simply by the assigned cluster centers without considering the non-uniform missing distribution, which usually leads to over-imputation for the true data value. 
In contrast, our method uses a penalty to push large imputed values toward zero, reducing the imputation bias. 
As a result, our method yields a more accurate estimation of cluster centers and better performance of clustering.

\begin{remark}
We finally give some remarks for the proposed method. 
Compared with related works, the proposed penalty term is novel and non-trivial in the following sense. 
First, existing penalization-based $k$-means (e.g.: regularized, sparse, weighted and constrained $k$-means) use various norms ($\ell_1/\ell_2/\ell_0/$entropy) of centers $\bm{M}$ or feature weights or assignment $\bm{U}$ as penalty terms, mainly for dealing with high-dimensionality, local minima or unbalanced clusters \cite{sun2012regularized,Levrard2018,chang2018sparse,chakraborty2020entropy,raymaekers2022regularized}, whereas, these works focus on the case of complete data, and do not use formula of $\bm{UM}$ or quantities involving missing positions. 
Second, in the field of matrix completion, where the main purpose is to recover the underlying information of the obtained incomplete data matrix, many works consider constraining the approximation complete matrix $\bm{Y}$ by using the spectral norm, operator norm or the rank of $\bm{Y}$ as the penalty term \cite{candes2012exact,mazumder2010spectral,jin2022sparse}. Hastie et al. \cite{hastie2015matrix} uses the Frobenius norm of factorization matrices of $\bm{Y}$, instead of $\|\bm{Y}\|_F$. 
Therefore, the proposed penalty term differs from the existing works and plays an important role in improving the clustering result for MNAR data.

In addition, it should be noted that although the considered scenario anchors missingness at zero, the framework of our method can also work in applications of a non-zero reference point. Specifically, if high missingness is concentrated at a known point $\bm{c}\in\mathbb{R}^p$, we can model such a scenario by substituting the penalty with $\|\mathcal{P}_{\Omega}(\bm{UM}-\bm{1}_p\bm{c})\|_F^2$.    
\end{remark}

\section{Theoretical analysis}
\label{sec_theoretical_analysis}

In this section, we provide a theoretical analysis for the proposed method, and all proofs are provided in the Supplementary materials. 
Let $\bm{\mathrm{x}}_1,\dots,\bm{\mathrm{x}}_n$ be $n$ independent and identically distributed (i.i.d.) random vectors in $\mathcal{X}\subset \mathbb{R}^p$, where each $\bm{\mathrm{x}}_i=(\mathrm{x}_{i1},\dots,\mathrm{x}_{ip})$. 
Let $\mathbb{P}$ be a probability measure supported on $\mathcal{X}$, and $\mathbb{P}_n$ be the empirical measure based on $\bm{\mathrm{x}}_1,\dots,\bm{\mathrm{x}}_n$. 
For the MNAR mechanism given in Assumption~\ref{assumption_def_MNAR0}, we let $\bm{\mathrm{r}}_1,\dots,\bm{\mathrm{r}}_n$ be $n$ independent random vectors in $\{0,1\}^p$, where each $\bm{\mathrm{r}}_i=(\mathrm{r}_{i1},\dots,\mathrm{r}_{ip})$, and $\text{Pr}(\mathrm{r}_{i1}=1\;|\; \bm{\mathrm{x}}_1,\dots,\bm{\mathrm{x}}_n) = 1- \exp(-\lambda^{\ast}\mathrm{x}_{ij}^2)$. 
For simplification, we write $\bm{\mathrm{X}}=(\bm{\mathrm{x}}_1^T,\dots,\bm{\mathrm{x}}_n^T)^T$ and $\bm{\mathrm{R}}=(\bm{\mathrm{r}}_1^T,\dots,\bm{\mathrm{r}}_n^T)^T$. 
Then, based on the definition of cluster assignment $\bm{U}$ (i.e., $\bm{U}\bm{1}_k=\bm{1}_n$), we can write the empirical loss function of proposed method on the incomplete sample $\{(\bm{\mathrm{x}}_1,\bm{\mathrm{r}}_1),\dots,(\bm{\mathrm{x}}_n,\bm{\mathrm{r}}_n)\}$ as the function with respect to $\bm{M}$, i.e., 
\begin{align*}
    \widehat{L}_n^{\lambda}(\bm{M})
    =\frac{1}{n}\sum_{i=1}^{n} \min_{l=1,\dots,k} \sum_{j=1}^{p} \mathrm{r}_{ij} ( \mathrm{x}_{ij} - \mu_{lj} )^2 + (1-\mathrm{r}_{ij})\lambda\mu_{lj}^2. 
\end{align*}
Moreover, we can define the estimated cluster centers by the unconstrained minimizer of $\widehat{L}_n^{\lambda}$, i.e.,
\begin{align*}
    \widehat{\bm{\mathrm{M}}}^{\lambda} = \mathop{\arg\min}_{\bm{M}\in\mathcal{M}} \widehat{L}_n^{\lambda}(\bm{M}),
\end{align*}
where the minimizer is supposed to be unique. In addition, the set $\mathcal{M}=\{\bm{M}\;|\; \bm{\mu}_l\in\mathcal{X}, \|\bm{\mu}_l\|_2\leq B,\forall l=1,\dots,k\}$, where $B>0$ is a pre-given constant. 
Accordingly, the expected counterpart of $\widehat{L}_n^{\lambda}$ is given by 
\begin{align*}
    L^{\lambda}(\bm{M})= \mathbb{E}_{\bm{\mathrm{x}}_1,\bm{\mathrm{r}}_1} \left[  \min_{l=1,\dots,k} \sum_{j=1}^{p} \mathrm{r}_{1j} ( \mathrm{x}_{1j} - \mu_{lj} )^2 + (1-\mathrm{r}_{1j})\lambda\mu_{lj}^2  \right],
\end{align*}
and we let 
\begin{align*}
    \bm{M}^{\ast,\lambda} =\mathop{\arg\min}_{\bm{M}\in\mathcal{M}} L^{\lambda}(\bm{M}),
\end{align*}
where the minimizer is supposed to be unique.

Our first theoretical result gives the consistency of the estimated cluster centers. 
\begin{assumption}
\label{assumption_X_compact_sub-Gaussian}
    Suppose that the data space $\mathcal{X}$ is compact and $\bm{\mathrm{x}}_1$ is sub-Gaussian. 
\end{assumption}
\begin{theorem}
\label{theorem_consistency}
    Under the above settings and Assumption~\ref{assumption_X_compact_sub-Gaussian}, we have for any $\epsilon>0$, $\lim_{n\rightarrow\infty} \textnormal{Pr}(\| \text{vec}( \widehat{\bm{\mathrm{M}}}^{\lambda}) - \text{vec}(\bm{M}^{\ast,\lambda} ) \|_2 >\epsilon)=0$. 
\end{theorem}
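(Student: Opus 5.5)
This is a standard M-estimation consistency argument in the style of Pollard's theorem for $k$-means. It has two ingredients: (i) a uniform law of large numbers, $\sup_{\bm{M}\in\mathcal{M}}|\widehat{L}_n^{\lambda}(\bm{M})-L^{\lambda}(\bm{M})|\to 0$ in probability; and (ii) an identifiability (well-separatedness) property of $\bm{M}^{\ast,\lambda}$. The identifiability comes from compactness of $\mathcal{M}$, continuity of $L^{\lambda}$, and the assumed uniqueness of the minimizer.

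Write the per-sample loss as
\[
g_{\bm{M}}(\bm{\mathrm{x}},\bm{\mathrm{r}})=\min_{l=1,\dots,k}\sum_{j=1}^{p}\mathrm{r}_{j}(\mathrm{x}_{j}-\mu_{lj})^2+(1-\mathrm{r}_{j})\lambda\mu_{lj}^2 .
\]
The pairs $(\bm{\mathrm{x}}_i,\bm{\mathrm{r}}_i)$ are i.i.d., because $\mathrm{r}_{ij}$ depends only on $\mathrm{x}_{ij}$. The set $\mathcal{M}$ is closed and bounded in $\mathbb{R}^{k\times p}$ (norm bound $B$, and $\mathcal{X}$ is compact), hence compact. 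Since $\mathcal{X}$ is compact, $\|\bm{\mathrm{x}}\|_2\le D$ for some $D$; sub-Gaussianity gives the same integrability if one prefers not to use compactness.

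\textbf{Step 1 (Lipschitz envelope).} Each inner sum is a quadratic in $\bm{\mu}_l$, and a minimum of Lipschitz functions is Lipschitz with the same constant. So for $\bm{M},\bm{M}'\in\mathcal{M}$,
\[
|g_{\bm{M}}-g_{\bm{M}'}|\le \max_l\sum_j\bigl(2|\mathrm{x}_j|+2B(1+\lambda)\bigr)|\mu_{lj}-\mu'_{lj}|\le C\,\|\text{vec}(\bm{M})-\text{vec}(\bm{M}')\|_2,
\]
with $C=2\sqrt{p}\,(D+B(1+\lambda))$. In addition, $0\le g_{\bm{M}}\le (D+B)^2+\lambda B^2$ uniformly.

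\textbf{Step 2 (uniform LLN).} Take a finite $\delta$-net $\{\bm{M}_1,\dots,\bm{M}_N\}$ of $\mathcal{M}$. The weak law of large numbers at each $\bm{M}_s$ gives $\max_s|\widehat{L}_n^{\lambda}(\bm{M}_s)-L^{\lambda}(\bm{M}_s)|\to 0$ in probability. Step 1 bounds the oscillation of both $\widehat{L}_n^{\lambda}$ and $L^{\lambda}$ between net points by $C\delta$. Hence $\limsup_n \Pr\bigl(\sup_{\mathcal{M}}|\widehat{L}_n^{\lambda}-L^{\lambda}|>3C\delta\bigr)=0$, and letting $\delta\to 0$ gives the claim. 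Equivalently, one can say that the class is Glivenko--Cantelli: it is Lipschitz in a finite-dimensional compact parameter with an integrable envelope. If only sub-Gaussianity is used, the constant $C$ becomes a random $C(\bm{\mathrm{x}})$ whose empirical mean converges to a finite limit, and the same argument goes through.

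\textbf{Step 3 (identifiability and conclusion).} Fix $\epsilon>0$. The set $\mathcal{M}_\epsilon=\{\bm{M}\in\mathcal{M}:\|\text{vec}(\bm{M})-\text{vec}(\bm{M}^{\ast,\lambda})\|_2\ge\epsilon\}$ is compact. By Step 1, $L^{\lambda}$ is continuous. Because the minimizer is unique, $\eta:=\min_{\mathcal{M}_\epsilon}L^{\lambda}-L^{\lambda}(\bm{M}^{\ast,\lambda})>0$; if $\mathcal{M}_\epsilon$ is empty there is nothing to prove. On the event that the maximum of $\widehat{\bm{\mathrm{M}}}^{\lambda}$'s distance to $\bm{M}^{\ast,\lambda}$ is at least $\epsilon$, i.e. $\widehat{\bm{\mathrm{M}}}^{\lambda}\in\mathcal{M}_\epsilon$, we have
\[
\eta\le L^{\lambda}(\widehat{\bm{\mathrm{M}}}^{\lambda})-L^{\lambda}(\bm{M}^{\ast,\lambda})\le \bigl[L^{\lambda}(\widehat{\bm{\mathrm{M}}}^{\lambda})-\widehat{L}_n^{\lambda}(\widehat{\bm{\mathrm{M}}}^{\lambda})\bigr]+\bigl[\widehat{L}_n^{\lambda}(\bm{M}^{\ast,\lambda})-L^{\lambda}(\bm{M}^{\ast,\lambda})\bigr]\le 2\sup_{\mathcal{M}}|\widehat{L}_n^{\lambda}-L^{\lambda}|.
\]
The middle inequality uses $\widehat{L}_n^{\lambda}(\widehat{\bm{\mathrm{M}}}^{\lambda})\le\widehat{L}_n^{\lambda}(\bm{M}^{\ast,\lambda})$. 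Therefore the probability in the theorem is at most $\Pr(\sup_{\mathcal{M}}|\widehat{L}_n^{\lambda}-L^{\lambda}|\ge\eta/2)$, which tends to $0$ by Step 2.

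The main obstacle is Step 2, the uniform convergence. The pointwise LLN is immediate, and the work lies in controlling the $\min$ over clusters together with the random masks $\mathrm{r}_{ij}$. The Lipschitz bound of Step 1 handles both, because neither the minimum nor the masking increases the Lipschitz constant. A minor technical point is measurability of $\widehat{\bm{\mathrm{M}}}^{\lambda}$; I would take it as given, as the theorem statement implicitly does.
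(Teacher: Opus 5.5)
Your proof is correct. Its skeleton matches the paper's: a uniform law of large numbers, well-separatedness of $\bm{M}^{\ast,\lambda}$ from compactness plus uniqueness, and the chain $L^{\lambda}(\widehat{\bm{\mathrm{M}}}^{\lambda})-L^{\lambda}(\bm{M}^{\ast,\lambda})\le 2\sup_{\mathcal{M}}|\widehat{L}_n^{\lambda}-L^{\lambda}|$. The real difference is in how you get uniform convergence.

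You get it elementarily. The loss is Lipschitz in $\text{vec}(\bm{M})$, so a finite $\delta$-net of the compact set $\mathcal{M}$ together with the pointwise weak law suffices.

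The paper instead uses empirical-process machinery. Symmetrization bounds $\mathbb{E}\sup_{\mathcal{M}}|\widehat{L}_n^{\lambda}-L^{\lambda}|$ by $2\mathfrak{R}_n(\mathcal{G}_k)\le 2k\,\mathfrak{R}_n(\mathcal{G})$, using a Bartlett--Mendelson result to strip the minimum over centers. The single-center class has VC dimension at most $3p+1$, because it lies in a $(3p+1)$-dimensional linear span. Dudley's entropy integral then gives an $O(n^{-1/2})$ bound. Convergence in probability follows by truncating the envelope at $\sqrt{n}$, applying Talagrand's inequality to the bounded part and Markov's inequality to the tail.

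The two routes trade off as follows. The paper's route yields an explicit $n^{-1/2}$ rate for the expected uniform deviation, and handles an unbounded envelope through a finite fourth moment. Yours is shorter and more transparent, but gives no rate as written. If compactness of the support were dropped, your argument with the random Lipschitz constant would need only finite second moments.

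Your Step 1 also supplies something the paper leaves implicit. The paper's identifiability lemma passes from $L^{\lambda}(\bm{V}_{n_t})\to m^{\ast,\lambda}$ to $L^{\lambda}(\bm{V}^{\dagger})=m^{\ast,\lambda}$ without ever proving that $L^{\lambda}$ is continuous. Your Lipschitz bound gives continuity directly. It also lets you minimize over the closed set $\{\bm{M}\in\mathcal{M}:\|\text{vec}(\bm{M})-\text{vec}(\bm{M}^{\ast,\lambda})\|_2\ge\epsilon\}$ in one line, instead of the paper's sequence-based argument.
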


The above theorem shows that given a fixed $\lambda$, the estimated cluster centers $\widehat{\bm{\mathrm{M}}}^{\lambda}$ would converge to the minimizer of the expected counterpart of loss function as $n\rightarrow\infty$, where the required Assumption~\ref{assumption_X_compact_sub-Gaussian} is a commonly used setting for deriving the uniform convergence of $\widehat{L}_n^{\lambda}(\cdot)$ to $L^{\lambda}(\cdot)$. 
It should be noted that the limit $\bm{M}^{\ast,\lambda}$ varies with different $\lambda$, while the ``true cluster centers" only rely on the distribution of original fully observed data $\bm{\mathrm{x}}_i$'s.

Our second theoretical result thus gives some conditions under which the estimation of the proposed method on incomplete data could recover the ground truth of cluster structure of the original fully observed data. 
Let $\mathrm{z}_i$ be the random variable denoting which cluster $\bm{\mathrm{x}}_i$ belongs to, and $\text{Pr}(\mathrm{z}_i=l)=1/k$ for any $l=1,\dots,k$. 
Suppose that given $\mathrm{z}_i=l$, the conditional distribution of $\bm{\mathrm{x}}_i$ is a normal distribution $\mathcal{N}(\bm{\mu}_l^{\ast\ast},\sigma^2\bm{I}_p)$. 

\begin{assumption}
\label{assumption_Mstarstar_is_minimizer}
    Suppose that $\bm{\mu}_l^{\ast\ast}$'s are well-separated with each other relative to $\sigma^2$ such that $\bm{M}^{\ast\ast}=((\bm{\mu}_1^{\ast\ast})^T,\dots,(\bm{\mu}_k^{\ast\ast})^T)^T$ uniquely minimizes the $k$-means on fully observed data in the population level, i.e.,
    \begin{align*}
        \bm{M}^{\ast\ast} = \mathop{\arg\min}_{\bm{M}\in\mathcal{M}} \mathbb{E}_{\bm{\mathrm{x}}_1} \left[ \min_{l=1,\dots,k}\| \bm{\mathrm{x}}_1 - \bm{\mu}_l \|_2^2 \right],
    \end{align*}
    where we refer $\bm{M}^{\ast\ast}$ as ``true cluster centers".
\end{assumption}

\begin{theorem}
\label{theorem_converge_to_truth}
    Under Assumptions~\ref{assumption_def_MNAR0}-\ref{assumption_Mstarstar_is_minimizer} and suppose that $\bm{M}^{\ast\ast}$ satisfies $\mu_{lj}^{\ast\ast}\neq \mu_{l'j}^{\ast\ast}$ for any $l\neq l'$ and $j=1,\dots,p$. If $\lambda=1-1/(2\sigma^2\lambda^{\ast}+1)$ and $\sigma^2\log(n)\rightarrow 0$ as $n\rightarrow\infty$, then we have the estimated cluster centers $\widehat{\bm{\mathrm{M}}}^{\lambda}$ would converge to the true cluster centers $\bm{M}^{\ast\ast}$ in probability as $n\rightarrow\infty$. 
\end{theorem}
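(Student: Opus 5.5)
The plan is to split the convergence into two pieces,
\[
\|\mathrm{vec}(\widehat{\bm{\mathrm{M}}}^{\lambda})-\mathrm{vec}(\bm{M}^{\ast\ast})\|_2 \le \|\mathrm{vec}(\widehat{\bm{\mathrm{M}}}^{\lambda})-\mathrm{vec}(\bm{M}^{\ast,\lambda})\|_2+\|\mathrm{vec}(\bm{M}^{\ast,\lambda})-\mathrm{vec}(\bm{M}^{\ast\ast})\|_2 ,
\]
with $\lambda=1-1/(2\sigma^2\lambda^\ast+1)$.

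\textbf{First term.} I will bound it by Theorem~\ref{theorem_consistency}. Since $\sigma$ may depend on $n$, I will re-check that its uniform convergence argument survives. The class of losses is indexed by the compact set $\mathcal{M}$ and is uniformly bounded on the compact $\mathcal{X}$, with constants depending only on $B$ and the diameter of $\mathcal{X}$. The argmax (well-separated minimizer) step then gives the required convergence in probability.

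\textbf{Second term: the population loss with $\sigma^2\to 0$.} I will compute $L^\lambda$ explicitly in the limit $\sigma^2\to0$. In that regime the mixture concentrates at the points $\bm{\mu}^{\ast\ast}_l$. More precisely, the condition $\sigma^2\log n\to0$ gives $\max_i\|\bm{\mathrm{x}}_i-\bm{\mu}^{\ast\ast}_{\mathrm{z}_i}\|_\infty\lesssim\sigma\sqrt{\log(np)}\to0$ with high probability. So every sample point lies in a shrinking neighbourhood of its own centre. The separation hypotheses $\mu^{\ast\ast}_{lj}\ne\mu^{\ast\ast}_{l'j}$ and Assumption~\ref{assumption_Mstarstar_is_minimizer} should then force the assignment inside the minimum to be the correct one for $\bm{M}$ near $\bm{M}^{\ast\ast}$. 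On that event the loss decouples over clusters and coordinates. For a point of cluster $l$ and coordinate $j$, with $x=\mu^{\ast\ast}_{lj}+\sigma\xi$ and observation probability $q(x)=1-e^{-\lambda^\ast x^2}$, the coordinate contribution is
\[
\mathbb{E}\big[q(x)(x-\mu)^2+(1-q(x))\lambda\mu^2\big].
\]
Differentiating in $\mu$, the stationary point is
\[
\mu=\frac{\mathbb{E}[q(x)x]}{\mathbb{E}[q(x)]+\lambda\,\mathbb{E}[1-q(x)]}.
\]

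\textbf{Gaussian integrals.} I will use
\[
\mathbb{E}[e^{-\lambda^\ast x^2}]=\frac{1}{\sqrt{1+2\sigma^2\lambda^\ast}}\exp\Big(-\frac{\lambda^\ast m^2}{1+2\sigma^2\lambda^\ast}\Big),
\qquad
\mathbb{E}[xe^{-\lambda^\ast x^2}]=\frac{m}{1+2\sigma^2\lambda^\ast}\,\mathbb{E}[e^{-\lambda^\ast x^2}],
\]
where $m=\mu^{\ast\ast}_{lj}$. Writing $s=1/(1+2\sigma^2\lambda^\ast)$ and $E=\mathbb{E}[e^{-\lambda^\ast x^2}]$, the numerator is $m(1-sE)$. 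The denominator is $1-E+\lambda E = 1-sE$, precisely because $1-\lambda=s$. Hence the coordinatewise minimizer equals $m$ exactly, and this is where the specific choice of $\lambda$ enters. Together with strict convexity in $\mu$ (the coefficient is $1-sE>0$), this makes $\bm{M}^{\ast\ast}$ a local minimizer of $L^\lambda$ on the good-assignment region.

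\textbf{Main obstacle: from local to global.} I must show $\bm{M}^{\ast,\lambda}=\bm{M}^{\ast\ast}$, or at least that $\bm{M}^{\ast,\lambda}\to\bm{M}^{\ast\ast}$, globally over $\mathcal{M}$. The difficulty is that configurations with wrong assignments, for example two centres merged, must be shown to have strictly larger loss. I will first compare $L^\lambda$ to the limiting objective as $\sigma\to0$. In that limit the data become a $k$-point distribution at the $\bm{\mu}^{\ast\ast}_l$ and $\lambda\to0$. The loss then becomes the $k$-means loss on the $k$ points, weighted by observation probabilities, whose unique minimizer places one centre at each atom with value zero. By uniform continuity on compact $\mathcal{M}$, any minimizer of $L^\lambda$ must lie in a neighbourhood of $\bm{M}^{\ast\ast}$ for small $\sigma$. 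Inside that neighbourhood the exact computation above pins it down. A possible subtlety is coordinates with $m=0$ or observation probability near zero when $\sigma\to0$; in that case the centre coordinate is still forced to $0=m$ by the penalty, which I expect to handle directly.
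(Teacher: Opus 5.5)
Your route is different from the paper's. You go through the population minimizer $\bm{M}^{\ast,\lambda}$ and a $\sigma\to0$ limiting objective. The paper stays at the sample level. Lemma~\ref{lemma_proof_converge_to_truth_about_U} shows that, with centres $\bm{M}^{\ast\ast}$, all $n$ points receive their true label; it uses Bernstein plus a union bound over $i\le n$, and this is where $\sigma^2\log n\to0$ enters. Lemma~\ref{lemma_proof_converge_to_truth_about_M} then does your calibration computation on the expected loss with those labels held fixed.

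In your version, the step ``on that event the loss decouples \dots\ inside that neighbourhood the exact computation pins it down'' fails. The event $\max_i\|\bm{\mathrm{x}}_i-\bm{\mu}^{\ast\ast}_{\mathrm{z}_i}\|_\infty\to0$ is about the sample, whereas $L^{\lambda}$ and $\bm{M}^{\ast,\lambda}$ depend on $n$ only through $\sigma$. Inside the expectation defining $L^{\lambda}$, the minimum does \emph{not} always pick the true label. This fails on Gaussian tails. More importantly, it fails on points with $\bm{\mathrm{r}}=\bm{0}$: their loss is $\lambda\min_t\|\bm{\mu}_t\|_2^2$, so they go to the smallest-norm centre whatever their cluster. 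In cluster $l$ they have probability $\prod_j\mathbb{E}[e^{-\lambda^\ast\mathrm{x}_j^2}\mid\mathrm{z}=l]\to\prod_j e^{-\lambda^\ast(\mu^{\ast\ast}_{lj})^2}>0$, which does not vanish as $\sigma\to0$. So $1-\lambda=s$ only makes $\bm{M}^{\ast\ast}$ the minimizer of the true-label surrogate $k^{-1}\sum_l\mathbb{E}[\,\cdot\mid\mathrm{z}=l]$. In general $\bm{M}^{\ast,\lambda}\neq\bm{M}^{\ast\ast}$: there is an $O(\lambda)$ shift from all-missing points, plus tail terms. You need a perturbation bound comparing these with the curvature $1-sE$.

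The transfer from $\bm{M}^{\ast,\lambda}$ to $\widehat{\bm{\mathrm{M}}}^{\lambda}$ also needs more than re-checking uniform convergence. (Under the Gaussian model $\mathcal{X}$ cannot be compact, so use the sub-Gaussian envelope, whose moments are uniform in $\sigma\le1$.) Theorem~\ref{theorem_consistency} and Lemma~\ref{lemma_identifiability} are for a fixed law. With $\sigma=\sigma_n$ and $\lambda=\lambda_n$, the separation gap $\tilde{\epsilon}$ depends on $n$, and you need it bounded away from zero uniformly. When all $\mu^{\ast\ast}_{lj}\neq0$, your $L^0$ supplies this and makes $\bm{M}^{\ast,\lambda}$ unnecessary. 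Indeed $\sup_{\bm{M}}|\widehat{L}_n^{\lambda_n}-L^0|\to0$ in probability, and $L^0$ vanishes exactly at relabellings of $\bm{M}^{\ast\ast}$. This holds for any $\lambda_n\to0$, so in your route the calibrated $\lambda$ is not doing the work.

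The real gap is a coordinate with $\mu^{\ast\ast}_{lj}=0$, which the hypotheses allow (e.g.\ $\bm{\mu}^{\ast\ast}_1=\bm{0}$). There both the observation probability and $\lambda$ are $O(\sigma^2)$, and $L^0$ is flat in $\mu_{lj}$. The gap of $L^{\lambda_n}$ at distance $\epsilon$ is only of order $\sigma^2\epsilon^2$, since $1-sE\approx3\lambda^\ast\sigma^2$ when $m=0$. Against the $O_P(n^{-1/2})$ uniform deviation, the argmin argument gives only $|\hat{\mu}_{lj}|=O_P((\sigma^2\sqrt{n})^{-1/2})$. That is useless for, e.g., $\sigma^2=1/n$, which the hypotheses allow.

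Pinning the coordinate at the population level does not fix this; you need a sample-level argument. Work on the event that every point with at least one observed coordinate is assigned to its own cluster. Establishing this event is what needs the union bound over $n$ points and $\sigma^2\log n\to0$; your argument invokes that condition only in the misapplied decoupling step. On that event, the closed-form update gives $|\hat{\mu}_{lj}|\le\max_{i:\mathrm{z}_i=l}|\mathrm{x}_{ij}|=O_P(\sigma\sqrt{\log n})\to0$. Here the update is $\hat{\mu}_{lj}=\sum_{i\in\widehat{C}_l}\mathrm{r}_{ij}\mathrm{x}_{ij}\big/\sum_{i\in\widehat{C}_l}\{\mathrm{r}_{ij}+\lambda(1-\mathrm{r}_{ij})\}$, with $\widehat{C}_l$ the estimated cell of centre $l$.
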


The above theorem shows that in the high signal-to-noise ratio (SNR) regime $\sigma^2\log(n)\rightarrow 0$, the estimated result of the proposed method can asymptotically recover the ground truth of cluster structure by taking $\lambda=1-1/(2\sigma^2\lambda^{\ast}+1)$. 
The high SNR condition is to ensure a low mis-clustering rate, and such $\lambda$ can ensure the successful recovery of true centers $\bm{M}^{\ast\ast}$ given a true partition of the sample. 
Moreover, the above condition of $\lambda$ includes a scaling factor $(2\sigma^2\lambda^{\ast}+1)$, which reflects the distortion of the data distribution due to the MNAR missingness (Assumption~\ref{assumption_def_MNAR0}). 
Specifically, given $\mathrm{z}_i=l$, under our MNAR mechanism, the missing part of data (i.e., $\mathrm{x}_{ij}|\mathrm{r}_{ij}=0$) actually follows a normal distribution with mean $\mu_{lj}^{\ast\ast}/(2\sigma^2\lambda^{\ast}+1)$ and variance $\sigma^2/(2\sigma^2\lambda^{\ast}+1)$. 
Compared with distribution of full observed data (i.e., $\mathcal{N}(\mu_{lj}^{\ast\ast},\sigma^2)$), it can be seen that the factor $(2\sigma^2\lambda^{\ast}+1)$ characterizes the shrinkage of the distribution. 
As a consequence, the choice of $\lambda=1-1/(2\sigma^2\lambda^{\ast}+1)$ enables the penalty term $(1-\mathrm{r}_{ij})\lambda \mu_{lj}^2$ to calibrate the estimation bias of cluster centers caused by ignoring the missing part of data.

\section{Optimization}
\label{sec_optimization}

\subsection{Algorithm}
\label{subsec_algorithm}

We note that the proposed method is equivalent to an unconstrained problem with respect to membership matrix $\bm{U}\in\{0,1\}^{n\times k}$ with $\bm{U}\bm{1}_k=\bm{1}_n$ and cluster centers $\bm{M}\in\mathbb{R}^{k\times p}$: 
\begin{align}
    \min_{\bm{U},\bm{M}} \|\mathcal{P}_{\Omega}(\bm{X}-\bm{UM})\|_F^2 + \lambda\| \mathcal{P}_{\Omega^c}(\bm{UM}) \|_F^2,
\end{align}
where the loss function can be expanded to be  
\begin{align}
    \widehat{L}_n^{\lambda}(\bm{U},\bm{M})=\sum_{i=1}^{n}\sum_{l=1}^{k} u_{il} \left\{  \sum_{j:r_{ij}=1} (x_{ij}-\mu_{lj})^2 + \lambda \sum_{j:r_{ij}=0} \mu_{lj}^2 \right\}, 
    \label{eq_pkpod_loss_scalar}
\end{align}
where $\bm{R}\in\{0,1\}^{n\times p}$ is the missing indicator matrix satisfying $r_{ij}=1$ if $x_{ij}$ is observed, 0 otherwise. 
Then, our aim is to minimize $\widehat{L}_n^{\lambda}(\bm{U},\bm{M})$ with respect to $\bm{U}$ and $\bm{M}$.  

Given $\bm{M}$, since the membership matrix $\bm{U}$ is binary, then the $\widehat{L}_n^{\lambda}$ is minimized when each data point is assigned to the cluster center satisfying: it is close to the data point in observed features and close to zero in missing features. 
Given $\bm{U}$, since the $\widehat{L}_n^{\lambda}$ is convex with respect to $\bm{M}$, then it is minimized when the partial derivatives are equal to zero. 
Therefore, we can solve the minimization problem by alternatively minimizing $\widehat{L}_n^{\lambda}(\bm{U},\bm{M})$ with respect to $\bm{U}$ and $\bm{M}$, for which we propose Algorithm~\ref{algorithm_altmin}.

\begin{algorithm}[tb]
    \caption{MNAR-$k$-means algorithm}
    \label{algorithm_altmin}
    \textbf{Input}: Incomplete data matrix $\bm{X}$, Missing indicator $\bm{R}$ \\
    \textbf{Parameter}: Tuning parameter $\lambda$, maximum number of iterations $t_{\max}$\\
    \textbf{Output}: Estimator $\widehat{\bm{U}}$ and  $\widehat{\bm{M}}$
    \begin{algorithmic}[1] 
        \State Initialize $\bm{M}^{(0)}$, set $t=0$.
        \While{$t\leq t_{\max}$ and Eq.(\ref{eq_pkpod_loss_scalar}) does not converge}
        \State Update $\bm{U}^{(t+1)}$ by $u_{il}^{(t+1)}=\mathds{1}(l=l_i^{\ast})$ for any $i=1,\dots,n$, $l=1,\dots,k$, where
        \[
            l_i^{\ast}= \mathop{\arg\min}_{l=1,\dots,k} \sum_{j:r_{ij}=1}\left(x_{ij} - \mu_{lj}^{(t)}\right)^2 + \lambda \sum_{j:r_{ij}=0} \left(\mu_{lj}^{(t)}\right)^2. 
        \]
        \State Update $\bm{M}^{(t+1)}$ by for any $l=1,\dots,k, j=1,\dots,p$, 
        \[
            \mu_{lj}^{(t+1)}=\frac{\sum_{i: r_{ij}=1} u_{il}^{(t+1)} x_{ij}  }{  \sum_{i: r_{ij}=1} u_{il}^{(t+1)} + \lambda\sum_{i:r_{ij}=0} u_{il}^{(t+1)}  }. 
        \]
        \State $t \leftarrow t+1$.
        \EndWhile
        \State \textbf{return} $\bm{U}^{(t+1)}$, $\bm{M}^{(t+1)}$. 
    \end{algorithmic}
\end{algorithm}

\subsection{Convergence and complexity}
\label{subsec_convergence_complexity}

The convergence of Algorithm~\ref{algorithm_altmin} can be guaranteed in the sense that the value of loss function of each iteration converges to a finite limit corresponding to a local minimum. This follows from the fact that both update steps for $\bm{U}^{(t+1)}$ (Line 3) and $\bm{M}^{(t+1)}$ (Line 4) are exact minimization steps and thus the sequence $\left\{ \widehat{L}_n^{\lambda}(\bm{U}^{(t)}, \bm{M}^{(t)}) \right\}_{t=0}^{\infty}$ is non-increasing as $t$ increases, and moreover, $\widehat{L}_n^{\lambda}\geq 0$ for any $(\bm{U},\bm{M})$. In practice, Algorithm~\ref{algorithm_altmin} usually converges within a small number of iterations. 

The computational complexity of Algorithm~\ref{algorithm_altmin} is $O(t_{\max} npk)$, because updating both $\bm{U}$ and $\bm{M}$ need $O(npk)$ operations, and the loop with respect to $t$ stops within $t_{\max}$ iterations.

\subsection{Initialization}
\label{subsec_initialization}

In Algorithm~\ref{algorithm_altmin}, we initialize cluster centers matrix $\bm{M}^{(0)}$ as follows. 
First, we impute missing values of $\bm{X}$ by corresponding columns' means of observed values, to obtain a new complete data matrix $\bm{X}^{(0)}\in\mathbb{R}^{n\times p}$. 
Second, we apply the $k$-means++ method proposed by \cite{arthur2007k} on the imputed data matrix $\bm{X}^{(0)}$ to get $k$ centers, being $k$ rows of $\bm{M}^{(0)}$. 

Moreover, to overcome the local minimum, multiple initializations are used independently, and we select the result with the smallest value of $\widehat{L}_n^{\lambda}$ to be the output.

\subsection{Selection of tuning parameter}
\label{subsec_tuningparameter}

We use the instability index as the criterion for tuning parameter $\lambda$, which was first proposed by \cite{wang2010consistent} and has been widely used for tuning parameters in the field of clustering. 
The main idea is that a good value for the tuning parameter should yield a stable clustering in response to minor disruption to the sample. 

In practice, the instability index is calculated as follows. (1) The input dataset with sample size $n$ is randomly divided into three subsets, two of which consist of $m$ data points as training sets and the third one as a validation set. (2) Conduct the proposed clustering method with some $\lambda$ on both training sets separately and obtain two estimation results of cluster centers. (3) Based on the two estimation results, predict cluster labels for data points in the validation set. (4) Calculate the disagreement between two prediction results. 
Repeat the procedure several times, and the instability index for the parameter $\lambda$ is given by the average disagreement. 

\section{Simulations}
\label{sec_simulations}

In this section, we validate the performance of the proposed method on synthetic datasets. 
We evaluate both the accuracy of clustering and the estimation of cluster centers. 
For the accuracy of clustering, we use the Classification Error Rate (CER) as the evaluation index, that is, $\text{CER}(\widehat{\mathcal{C}}, \mathcal{C}^{\ast\ast}) = 2\left\{n(n-1)\right\}^{-1} \sum_{i>i'} \left| \mathds{1}_{ \widehat{\mathcal{C}}(i,i') } -  \mathds{1}_{ \mathcal{C}^{\ast\ast}(i,i') } \right|$. Here, $\widehat{\mathcal{C}}$ is the estimated partition of the dataset given by $\widehat{\bm{U}}$, and $\mathcal{C}^{\ast\ast}$ is the true partition of the dataset, and $\mathds{1}_{ \mathcal{C}(i,i') } =1$ if the $i$-th and $i'$-th data points belong to the same cluster according to the partition $\mathcal{C}$, 0 otherwise. 
For the estimation of cluster centers, we use the mean-squared error (MSE) of the estimated cluster centers as the evaluation index, that is, $\text{MSE}(\widehat{\bm{M}},\bm{M}^{\ast\ast})=\sum_{l=1}^{k}\min_{l'=1,\dots,k} \|\hat{\bm{\mu}}_l - \bm{\mu}^{\ast\ast}_{l'}\|^2$, where $\widehat{\bm{M}}$ is the estimated cluster centers, and $\bm{M}^{\ast\ast}=((\bm{\mu}_1^{\ast\ast})^T,\dots,(\bm{\mu}_k^{\ast\ast})^T)^T$ is the true cluster centers. 
All codes for reproduction can be found in \url{https://github.com/GXguanxin/MNAR-k-means}. 

\subsection{Effect under Assumption~\ref{assumption_def_MNAR0}}
\label{subsec_experiment_MNAR0}

We evaluate the effect of improving the existing $k$-POD for missing data under the MNAR mechanism satisfying Assumption~\ref{assumption_def_MNAR0} with different values of $\lambda^{\ast}$, corresponding to different missing proportions. 

To construct such a missing dataset, we first generate a dataset containing $n$ fully observed data points $\bm{x}_i \in \mathbb{R}^2$ ($i=1,\dots,n$), which consists of $k$ clusters. For any $l=1,\dots,k$, the probability of each $\bm{x}_i$ belonging to $l$-th cluster is equal to $1/k$, and these $\bm{x}_i$'s in $l$-th cluster are drawn independently from a Gaussian distribution $\mathcal{N}(\bm{\mu}_l^{\ast\ast},\Sigma)$ with $\Sigma=\text{diag}(1,\dots,1)$.  
The following settings of true cluster centers are considered in this section: 
\begin{itemize}
    \item $k=2$: $\bm{\mu}_1^{\ast\ast}=(0,0)$, $\bm{\mu}_2^{\ast\ast}=(a,0)$;
    \item $k=3$: $\bm{\mu}_1^{\ast\ast}=(0,0)$, $\bm{\mu}_2^{\ast\ast}=(a,0)$, $\bm{\mu}_3^{\ast\ast}=(a/2,\sqrt{3}a/2)$;
    \item $k=4$: $\bm{\mu}_1^{\ast\ast}=(0,0)$, $\bm{\mu}_2^{\ast\ast}=(a,0)$, $\bm{\mu}_3^{\ast\ast}=(a,0)$, $\bm{\mu}_4^{\ast\ast}=(a,a)$,
\end{itemize}
where $a\in \{2,3,4\}$ are considered to express different separations between true cluster centers. 
Secondly, for each $x_{ij}$ ($i=1,\dots,n$, $j=1,2$), we artificially let it be missing by the following mechanism: $\text{Pr}\left( x_{ij} \text{ is missing } \right) = \exp(-\lambda^{\ast} x_{ij}^2)$, where $\lambda^{\ast}\in \{0.1, 0.5, 1, 2, 10\}$ are considered to express different missing proportions from 50\% to 10\%. 
In addition, we let $n=100k$, which means each cluster has $100$ data points, and for implementation, we exclude those data points that all elements $x_{ij}$ are missing. 

Figure~\ref{fig_CER_MSE_MNAR0} illustrates how the results of CER and MSE of the proposed method vary with different $\lambda^{\ast}$ in different settings of $a$ and $k$. The averaged values of 30 repetitions are reported, as well as standard deviations. As the benchmark, the black dotted line is the result of $k$-means on the original fully observed dataset. 
We can see that in almost all settings, the CER and MSE values of the proposed method become smaller as $\lambda^{\ast}$ becomes larger, which corresponds to less missingness in the dataset. 
Moreover, compared with $k$-POD method, the proposed method has smaller CER and MSE values in almost all settings, showing a significant improvement in the accuracy of clustering and estimated cluster centers. 
Furthermore, in most settings ($\lambda^{\ast}\geq 0.5$, i.e, missing proportion less than 40\%), the result of the proposed method is very close to that of $k$-means on the original fully observed dataset, showing its ability to recover the clustering performance of the $k$-means method even under MNAR mechanisms. 

\begin{figure}[!t]
    \centering
    \includegraphics[width=0.33\linewidth]{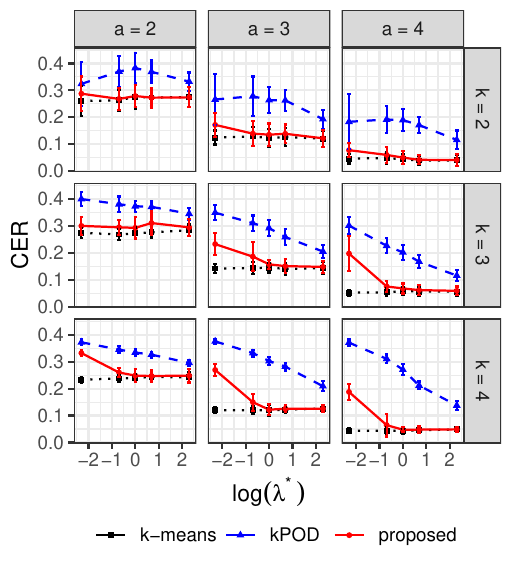}
    \includegraphics[width=0.33\linewidth]{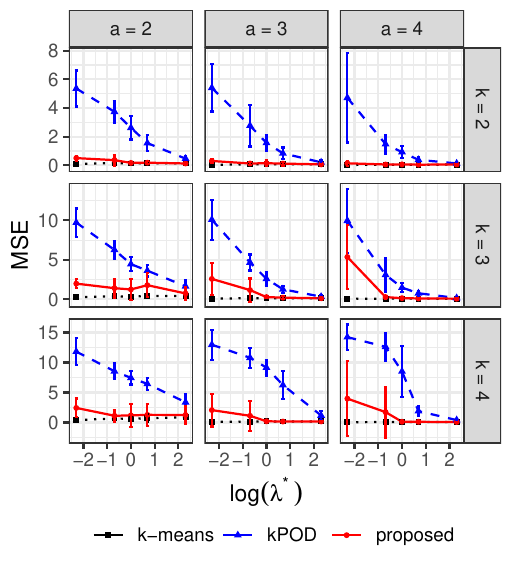}
    \caption{The results of CER and MSE of the proposed method under the MNAR mechanism satisfying Assumption~\ref{assumption_def_MNAR0}. The $\lambda^{\ast}\in \{0.1, 0.5, 1, 2, 10\}$ express different missing proportions from 50\% to 10\%. The black dotted line is the result of $k$-means on the original fully observed dataset. Blue dashed line and red solid line are for $k$-POD and proposed method, respectively. }
    \label{fig_CER_MSE_MNAR0}
\end{figure}

\subsection{Comparison with other methods}
\label{subsec_experiment_comparison}

We compare the proposed method with other methods for data under different missing mechanisms and with different missing proportions. 

The missing data is generated in a similar way to that in Section~\ref{subsec_experiment_MNAR0}. 
The following settings of true cluster centers in $\mathbb{R}^{50}$ are considered: 
\begin{itemize}
    \item $k=3$: $\bm{\mu}_1^{\ast\ast}=(0,0,0,\dots,0)$, $\bm{\mu}_2^{\ast\ast}=(3,0,0,\dots,0)$, $\bm{\mu}_3^{\ast\ast}=(3/2,3\sqrt{3}/2,0,\dots,0)$;
    \item $k=4$: $\bm{\mu}_1^{\ast\ast}=(0,0,0,\dots,0)$, $\bm{\mu}_2^{\ast\ast}=(3,0,0,\dots,0)$, $\bm{\mu}_3^{\ast\ast}=(0,3,0,\dots,0)$, $\bm{\mu}_4^{\ast\ast}=(3,3,0,\dots,0)$,
\end{itemize}
which means that only the first two features of data are relevant to the true cluster structure, while the other 48 features are irrelevant features. 
Moreover, we consider general missing mechanisms that the $(i,j)$-th position is more likely to be missing if the value of $x_{ij}$ is more close to zero. 
The following commonly used missing mechanisms are considered:  
\begin{itemize}
    \item MNAR0: $\text{Pr}\left( x_{ij} \text{ is missing } \right) = \exp(-\lambda^{\ast} x_{ij}^2)$;
    \item MNAR1: $\text{Pr}\left( x_{ij} \text{ is missing } \right) = 1/\left(1+\exp(\phi^{\ast} x_{ij}^2)\right)$;
    \item MNAR2: For any $j=1,\dots,p$, let $x_{ij}$ be missing if its absolute value $|x_{ij}|$ is less than the bottom $\alpha^{\ast}$ quantile of $\{|x_{1j}|,\dots,|x_{nj}|\}$,
\end{itemize}
where hyper-parameters $\lambda^{\ast}>0$, $\phi^{\ast}>0$ and $\alpha^{\ast}\in (0,1)$ take different values to meet different missing proportions from 10\% to 50\%. 
We consider 6 peer methods as follows: 
\begin{itemize}
    \item $k$-means: We conduct $k$-means on original fully observed dataset as benchmark. The built-in R function {\fontfamily{qcr}\selectfont kmeans} with Lloyd's algorithm \cite{Lloyd1982} is used. 
    \item Mean imputation: We impute the missing entries by corresponding column means of observed values, and then conduct $k$-means on the imputed data matrix. 
    \item Multiple imputation: We impute the missing entries via the popular \textit{mice} model \cite{buuren2011mice}, where the R package {\fontfamily{qcr}\selectfont mice} is used to get five complete data matrices after imputation. We pool the five matrices by element-wise average into a single matrix and then conduct $k$-means. 
    \item KNN imputation: We impute the missing entries of each data point by the average of observed values of its $10$ nearest neighbors, where the R package {\fontfamily{qcr}\selectfont impute} \cite{hastie2021} is used. Then we conduct $k$-means on the imputed data matrix. 
    \item FWPD: We use a modified Euclidean distance \textit{FWPD} proposed by Datta et al. \cite{datta2018clustering} for clustering of missing data. We run the MATLAB code provided by the author to analyze the same simulation data generated in R. 
    \item The $k$-POD clustering. We use the proposed algorithm with a tuning parameter equal to zero as a substitute of the original R package {\fontfamily{qcr}\selectfont kpodclustr} \cite{chi2016k}. 
\end{itemize}
In the implementation of all methods, 100 initializations and a maximum of 100 iterations are used. For our method, the tuning parameter $\lambda$ is selected by the instability criterion from a set of candidates $\{0.1, 2, 4, 6, 8, 10, 12, 14, 16, 18, 20\}$.

Figure~\ref{fig_CER_MSE_different_MNAR} illustrates the results of CER and MSE of different methods in different settings of missing mechanisms and missing proportions. The box plots of 30 repetitions are reported. 
In each panel, 6 boxes are for $k$-means on the original fully observed dataset, mean imputation, multiple imputation via MICE, KNN imputation, $k$-POD clustering and the proposed method, respectively, where FWPD is excluded since it fails to give a reasonable clustering result for data over 30\% missingness and the averaged CER for 10\% missingness exceeds 0.4.
We can see that in almost all settings, the proposed method has the smallest CER and MSE values than other methods, showing its highest clustering accuracy and best performance of estimating cluster centers. 
Moreover, the better performance is more significant for data with a larger missing proportion, implying the better stability of the proposed method to cope with serious and complex missingness. 
In addition, for all settings, we conducted t-test on CER (MSE as well) of proposed method with each peer method, and all $p$-values are less than 0.001, which confirms the statistical significance of the better performance of proposed method.

\begin{figure}[!t]
    \centering
    \begin{minipage}{0.33\textwidth}
    \centering
        \includegraphics[width=\linewidth]{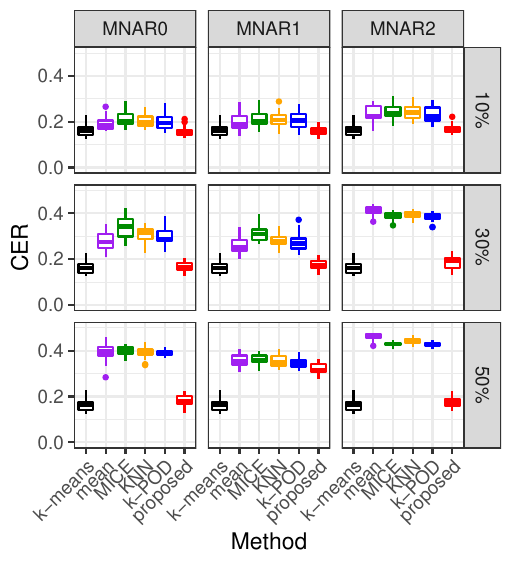}
        {\small $k=3$}
    \end{minipage}
    \begin{minipage}{0.33\textwidth}
    \centering
        \includegraphics[width=\linewidth]{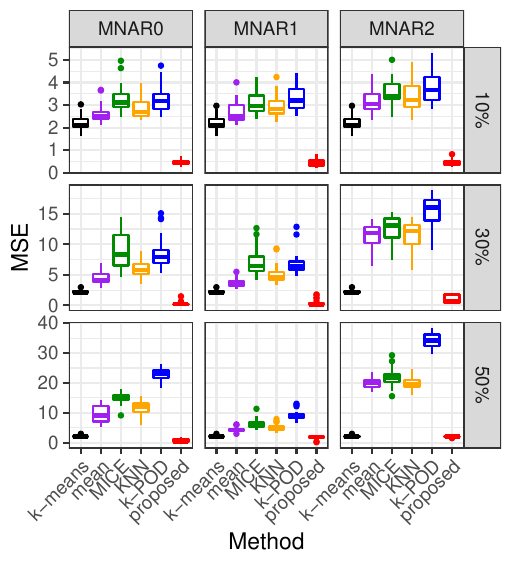}
        {\small $k=3$}
    \end{minipage}
    \par
    \par
    \begin{minipage}{0.33\textwidth}
    \centering
        \includegraphics[width=\linewidth]{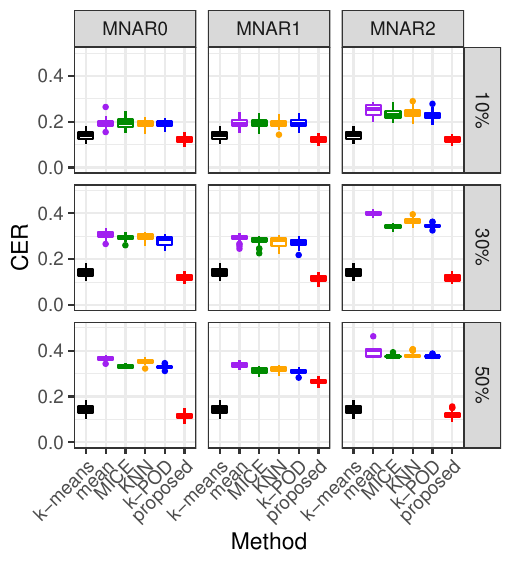}
        {\small $k=4$}
    \end{minipage}
    \begin{minipage}{0.33\textwidth}
    \centering
        \includegraphics[width=\linewidth]{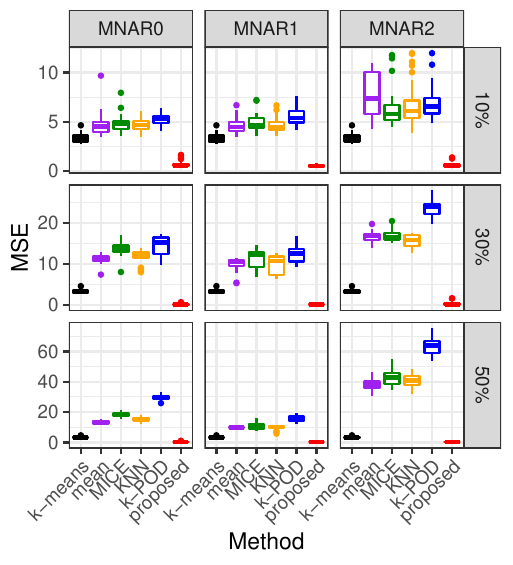}
        {\small $k=4$}
    \end{minipage}
    \caption{The results of CER and MSE of different methods under different missing mechanisms and missing proportions. The top row is for $k=3$, and the bottom row is for $k=4$. In each panel, 6 boxes are: $k$-means on the original fully observed dataset, mean imputation, multiple imputation via MICE, KNN imputation, $k$-POD and the proposed method, respectively. }
    \label{fig_CER_MSE_different_MNAR}
\end{figure}

\subsection{Analysis of Algorithm} 

\subsubsection{Algorithm convergence}
\label{subsec_experiment_convergence}

In Algorithm~\ref{algorithm_altmin}, we iteratively update membership $\bm{U}$ and cluster centers $\bm{M}$ until the loss function converges. As explained in Section~\ref{subsec_convergence_complexity}, the convergence of $\widehat{L}_n^{\lambda}$ is guaranteed. We here numerically verify that $\widehat{L}_n^{\lambda}$ converges within a small number of iterations. 
Table~\ref{table_loss_convergence} provides the number of iterations of the proposed algorithm, where we consider the fully observed data with $p=50$ and $k=3$ and different missingness. The averaged values of 30 repetitions are reported, as well as standard deviations.
It can be seen that the proposed algorithm stops after fewer than about 10 iterations in all settings, which implies the convergence of the loss function of the proposed algorithm. 

\begin{table}[tbhp]
\centering
\caption{The number of iterations of proposed algorithm (standard deviations in brackets) }
\label{table_loss_convergence}
\begin{adjustbox}{center}
\begin{tabular}{@{}cccc@{}}
\toprule
 \makecell{Missing proportion} & MNAR0 & MNAR1 & MNAR2\\
\midrule
10\% & 6.833 (4.70) & 5.867 (2.66) & 6.933 (3.16) \\
30\% & 5.733 (2.72) & 5.567 (2.50) & 6.267 (1.44) \\
50\% & 5.700 (1.93) & 6.367 (1.38) & 6.233 (1.61) \\
 \bottomrule
\end{tabular}
\end{adjustbox}
\end{table}

\subsubsection{Computation time}
\label{subsec_experiment_computation_time}

We here verify the linear computation time with respect to sample size $n$ and data dimension $p$ via numerical experiments, which is explained in Section~\ref{subsec_convergence_complexity}. 
We consider the fully observed data with $k=3$ and the MNAR0 missing mechanism with $\lambda^{\ast}=1$. 
First, we fix $p=10$ and vary the total sample size $n$ from 300 to 1500. Then, we fix $n=300$ and vary $p$ from 10 to 50. 
The averaged computation time of 30 repetitions is reported in Figure~\ref{fig_computation_time}. It can be seen that the computation time linearly increases as $n$ or $p$ becomes larger. 

\begin{center}
\begin{minipage}[t]{0.6\textwidth}
    \centering
    \includegraphics[height=5cm]{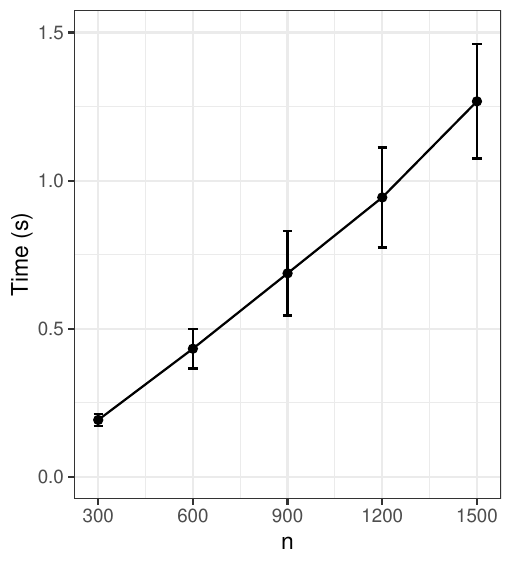}
    \includegraphics[height=5cm]{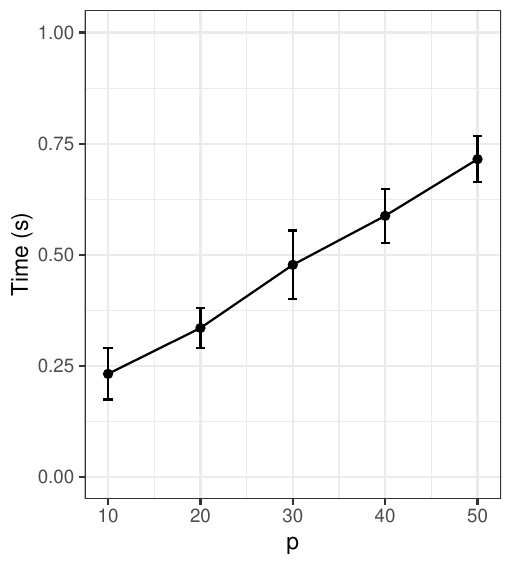}
    \captionsetup{type=figure,width=\linewidth}
    \captionof{figure}{The computation time of the proposed method on missing data with $k=3$ and under the MNAR0 mechanism with $\lambda^{\ast}=1$. \textbf{Left}: fix $p=100$ and vary $n$; \textbf{Right}: fix $n=300$ and vary $p$.}
    \label{fig_computation_time}
\end{minipage}
\hfill
\begin{minipage}[t]{0.35\textwidth}
    \centering
    \includegraphics[height=5cm]{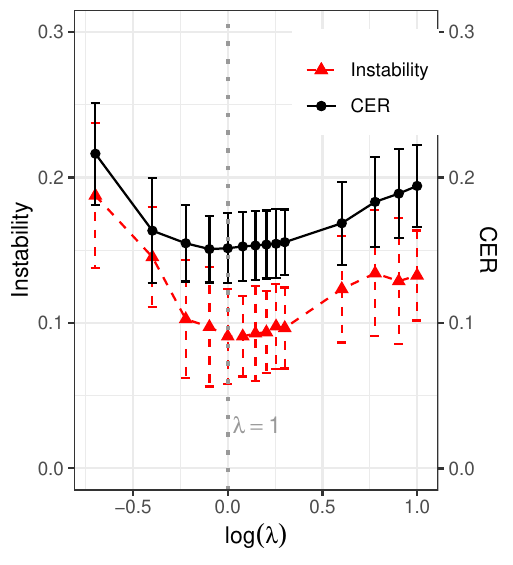}
    \captionsetup{type=figure,width=\linewidth}
    \captionof{figure}{The CER values of the proposed method with different values of $\lambda$, and corresponding instability indexes. The vertical gray dotted line is $\lambda=1$ selected by the instability criterion.}
    \label{fig_tuningparameter}
\end{minipage}
\end{center}

\subsubsection{Stability of initialization}

We here analyze the stability of different starting points $\bm{M}^{(0)}$ generated by the strategy in Section~\ref{subsec_initialization}. We calculate the fluctuation in the final loss function values among different starting points. We consider the case of $k=3$ with 30\% MNAR0 missingness, and fix $n=300$. 

Table~\ref{table_initialization_fluctuation} illustrates the indexes describing fluctuation of final loss values across 100 starting points, including the standard deviation ($\text{sd}$), the range between maximum and minimum ($\max - \min$) and the interquartile range between 75\% quantile and 25\% quantile ($\text{3rd Qu.} - \text{1st Qu.}$), where the averaged results of 30 repetitions are reported. For reference, we also provide the decrease from the corresponding initial loss. 
We can see that in all settings, the standard deviation and the interquartile range are both small when compared with the decrease from corresponding initial loss values, even though the range between maximum and minimum is relatively large. It means that while a few poor starting points may lead to undesirable local minima, the overall final loss values remain highly concentrated with a small standard deviation. 
This further verifies the stability of the proposed alternative minimization algorithm in consistently converging to high-quality local minima, and the effectiveness of the multiple initializations in avoiding poor solutions. 

Additionally, we explored alternative initialization strategies such as ``randomly drawing $k$ points from a column-imputed dataset" and ``using sparse approximations of $k$-POD's output as a starting point". Since these methods yielded performance comparable to the current strategy, we retained the \textit{kmeans++}-based initialization strategy in our methodology.

\begin{table}[tbhp]
\centering
\caption{Fluctuation of final loss values across 100 starting points}
\label{table_initialization_fluctuation}
\begin{adjustbox}{center}
\begin{tabular}{@{}lccccc@{}}
\toprule
\makecell{Missing\\mechanism} & \makecell{Missing\\proportion} & $\text{sd}$ & $\max - \min$ & $\text{3rd Qu.} - \text{1st Qu.}$ & \makecell{Decrease from\\ initial loss} \\
\midrule
MNAR0 & 10\% & 0.034 & 0.302 & 0.002 & 0.292 \\
 & 30\% & 0.088 & 0.648 & 0.001 & 0.907 \\
 & 50\% & 0.320 & 1.238 & 0.154 & 2.105 \\
MNAR1 & 10\% & 0.016 & 0.127 & 0.002 & 0.510 \\
 & 30\% & 0.043 & 0.319 & 0.001 & 0.597 \\
 & 50\% & 0.026 & 0.144 & 0.001 & 0.919 \\
MNAR2 & 10\% & 0.051 & 0.421 & 0.001 & 0.487 \\
 & 30\% & 0.282 & 1.145 & 0.300 & 2.197 \\
 & 50\% & 0.828 & 2.650 & 1.191 & 3.526 \\
 \bottomrule
\end{tabular}
\end{adjustbox}
\end{table}

\subsubsection{Sensitivity of tuning parameter}
\label{subsec_experiment_sensitivity}

We further analyze how the tuning parameter $\lambda$ influences the proposed method, and evaluate the effect of the instability criterion in selecting $\lambda$. 
As an example, we consider the setting of $n=100k$, $p=2$, $k=3$, and true cluster centers are given by $\bm{\mu}_1^{\ast}=(0,0)$, $\bm{\mu}_2^{\ast}=(a,0)$ and $\bm{\mu}_3^{\ast}=(a/2,\sqrt{3}a/2)$, and we consider missing mechanism MNAR0 with $\lambda^{\ast}=1$. 
Figure~\ref{fig_tuningparameter} illustrates CER values of the proposed method with 15 different values of $\lambda$, as well as corresponding instability indexes. The averaged values and standard deviations of 30 repetitions are reported. 
It can be seen that since $\lambda=1$ has the lowest instability, it is selected by the instability criterion, and it indeed has the smallest CER, which implies the utility of the instability criterion for selecting an appropriate tuning parameter. 

\subsection{Analysis of statistical consistency}

In this section, we verify the statistical consistency of estimated cluster centers of proposed method to true cluster centers under conditions given in Theorem~\ref{theorem_converge_to_truth}. 
We generate missing datasets in a similar way as in Section~\ref{subsec_experiment_MNAR0}, where we let $p=2$ and $k=3$ true cluster centers given by $\bm{\mu}_1^{\ast\ast}=\left(-\sqrt{6}/2, -\sqrt{6}/2\right)$, $\bm{\mu}_2^{\ast\ast}=\left( \left\{\sqrt{6}+3\sqrt{2}\right\}/4, \left\{\sqrt{6}-3\sqrt{2}\right\}/4 \right)$ and $\bm{\mu}_3^{\ast\ast}=\left( \left\{\sqrt{6}-3\sqrt{2}\right\}/4, \left\{\sqrt{6}+3\sqrt{2}\right\}/4 \right)$, 
which means that cluster centers are separated from each other in every dimension and satisfy the separation condition. 
We consider the missing mechanism satisfying Assumption~\ref{assumption_def_MNAR0} with $\lambda^{\ast}=1$, which leads to about 30\% missingness. 
Moreover, we let $n\in \{90 ,150, 300 , 900 , 1500, 2100, 3000 ,9000,15000, 30000\}$ and take $\Sigma=\text{diag}(\sigma^2,\sigma^2)$, where $\sigma^2=30/(\log(n))^2$ for each $n$ to satisfy the high SNR condition (i.e., $\sigma^2\log(n)\rightarrow 0$). 

Figure~\ref{fig_consistency_to_truth} (Left) illustrates how the MSE value of proposed method varies with $n$, where we use $\lambda$ given by the theoretically optimal value (i.e., $\lambda=1-1/(2\sigma^2\lambda^{\ast}+1)$). The averaged values of 30 repetitions are reported, as well as standard deviations. 
As the benchmark, the black dotted line is the result of $k$-means on the original fully observed dataset, and the blue dashed line is the result of $k$-POD for reference. 
We can see that as $n$ increases, the MSE value of proposed method with the theoretically optimal $\lambda$ tends to zero. For $n$ larger than 3000, the MSE value of proposed method is as low as that of $k$-means, whereas, the MSE value of $k$-POD method is still significantly large. 
This confirms the convergence of estimated cluster centers of proposed method to true cluster centers under conditions given in Theorem~\ref{theorem_converge_to_truth}. 

Moreover, Figure~\ref{fig_consistency_to_truth} (Right) shows the MSE values of proposed method across various choices of $\lambda$, where the red solid line corresponds to the theoretically optimal $\lambda$ and dashed and dotted lines correspond to different scalings of the theoretically optimal $\lambda$. The averaged values of 30 repetitions are reported. 
We can see that as $n$ increases, the theoretically optimal $\lambda$ almost achieves the uniformly lowest MSE value and converges to zero. However, the MSE values associated with non-optimal $\lambda$ remain relatively large even when $n\geq 10^4$. 
This further confirms the optimality of the theoretical choice of $\lambda$ given in Theorem~\ref{theorem_converge_to_truth}.

\begin{figure}
    \centering
    \includegraphics[width=0.33\linewidth]{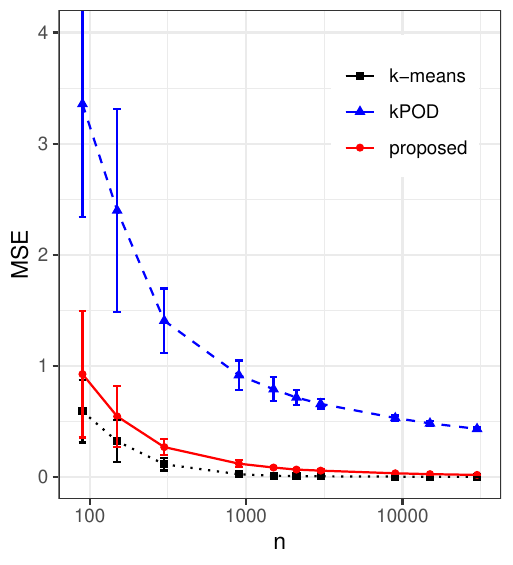}
    \includegraphics[width=0.33\linewidth]{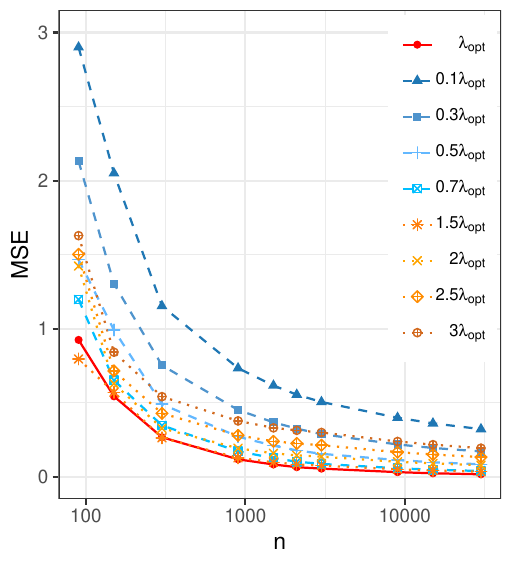}
    \caption{\textbf{Left}: The convergence of estimated cluster centers of proposed method to true cluster centers as sample size $n$ increases. The red solid line is the proposed method with theoretically optimal $\lambda$. The blue dashed line is the result of $k$-POD. The black dotted line is the result of $k$-means on the original fully observed dataset. \textbf{Right}: The MSE values of proposed method with different values of $\lambda$, where $\lambda_{\text{opt}}$ is the theoretically optimal value and $c\lambda_{\text{opt}}$ is $c$ times of theoretically optimal value. }
    \label{fig_consistency_to_truth}
\end{figure}

\section{Applications}
\label{sec_applications}

\subsection{Real-world dataset with artificial missingness}
\label{subsec_application_lymphoma}

In this section, we apply the proposed method to a real-world dataset with artificial missingness. 
The fully observed real dataset is a microarray genomics dataset called \textit{Lymphoma}, which was provided by Jin and Wang \cite{jin2016influential}. It consists of 4026 gene expressions ($p=4026$) and 62 samples ($n=62$), where 42 are Diffuse Large B-Cell Lymphoma (DLBCL), 9 are Follicular Lymphoma (FL), and 11 are Chronic Lymphocytic Leukemia (CLL) ($k=3$). 
Since the original dataset is complete and includes no missingness, we generate artificial missingness to get an incomplete dataset, where three kinds of missing mechanisms used in Section~\ref{subsec_experiment_comparison} are considered, and different hyper-parameters are chosen to meet missing proportions from 10\% to 50\%. 

Table~\ref{table_realdata_lymphoma} illustrates the results of different methods for the \textit{Lymphoma} dataset with different missingness. Here we ignore the multiple imputation method for its extremely long computation time, and exclude FWPD method since it fails to give a reasonable clustering result in almost all settings. 
The tuning parameter $\lambda$ of the proposed method is selected by instability criterion from a set of candidates $\{2,4,6,8,10\}$. 
Since the true cluster centers are unknown, we only report the averaged CER values of 10 repetitions, as well as standard deviations in brackets. The bold font indicates the best result. 
It can be seen that the proposed method performs better than other methods, which has CER values lower than 0.1 in most settings. This result implies the ability of the proposed method to recover the true cluster structure of the large $p$ and small $n$ data with non-random complex missingness. 

\begin{table}[!t]
\centering
\caption{The CER values of different methods for \textit{Lymphoma} datasets (standard deviations in brackets) }
\label{table_realdata_lymphoma}
\begin{adjustbox}{center}
\begin{tabular}{@{}llccccc@{}}
\toprule
 \makecell[l]{Missing\\mechanism} &  \makecell[l]{Missing\\proportion}  & $k$-means & \makecell[c]{Mean\\imputation} & \makecell[c]{KNN\\imputation} & $k$-POD & Proposed \\
\midrule
MNAR0 & 10\% & 0.297 (0.00) & 0.297 (0.00) & 0.297 (0.00) & 0.297 (0.00) & \bf{0.216 (0.13)} \\
  & 30\% & 0.297 (0.00) & 0.279 (0.00) & 0.297 (0.00) & 0.279 (0.00) & \bf{0.029 (0.01)} \\
  & 50\% & 0.297 (0.00) & 0.283 (0.01) & 0.294 (0.01) & 0.295 (0.01) & \bf{0.093 (0.07)} \\
MNAR1 & 10\% & 0.297 (0.00) & 0.295 (0.01) & 0.297 (0.00) & 0.295 (0.01) & \bf{0.080 (0.11)}\\
  & 30\% & 0.297 (0.00) & 0.283 (0.01) & 0.296 (0.00) & 0.283 (0.01) & \bf{0.034 (0.02)} \\
  & 50\% & 0.297 (0.00) & 0.298 (0.01) & 0.296 (0.00) & 0.300 (0.01) & \bf{0.167 (0.11)} \\
MNAR2 & 10\% & 0.297 (0.00) & 0.297 (0.00) & 0.297 (0.00) & 0.297 (0.00) & \bf{0.108 (0.13)} \\
  & 30\% & 0.297 (0.00) & 0.279 (0.00) & 0.291 (0.00) & 0.279 (0.00) & \bf{0.040 (0.01)} \\
  & 50\% & 0.297 (0.00) & 0.279 (0.00) & 0.308 (0.00) & 0.279 (0.00) & \bf{0.067 (0.06)}\\
\bottomrule
\end{tabular}
\end{adjustbox}
\end{table}

\subsection{Real-world incomplete dataset}

In this section, we apply the proposed method to a real-world incomplete dataset. 
We consider an scRNA-seq dataset \textit{Usoskin}, provided by Usoskin et al. \cite{usoskin2015unbiased}. The \textit{Usoskin} dataset contains 622 neuronal cells ($n=622$) that are divided into four sensory subtypes ($k=4$): peptidergic nociceptors (PEP), non-peptidergic nociceptors (NP), neurofilament containing (NF) and tyrosine hydroxylase containing (TH). We here consider 452 genes ($p=452$) of this dataset, where the missingness is less than 90\%. Then, the total missing proportion of the used dataset is about 73\%.

Figure~\ref{fig_realdata_Usoskin} provides the visualization of clustering results of different methods. 
Since the multiple imputation takes an extremely long time and the KNN imputation cannot be applied under a large missing proportion, we instead consider a well-known imputation method specifically for scRNAseq data, \textit{scImpute} proposed by Li et al. \cite{li2018accurate} for comparison. For our method, the best $\lambda\in\{0.001,0.01,0.1,1,10\}$ is selected by the instability criterion. 
For visualization, we use the UMAP method \cite{becht2019dimensionality} to map the high-dimensional data points to a two-dimensional space, before which we impute missingness by zero. 
It can be seen that the proposed method has the lowest CER value, and its clustering result most closely coincides with the true cluster structure. The only clustering error of the proposed method is incorrectly grouping several NF cells to be PEP cells, and incorrectly grouping several NP cells to PEP or TH cells. The total number of mis-clustered cells is only 47. 
In contrast, peer methods have over 100 (or even over 150) cells mis-clustered, where imputation methods almost confuse PEP and NF cells, and $k$-POD fails to discriminate TH cells and PEP cells. 

\begin{figure}[t]
    \centering
    \begin{minipage}{0.3\textwidth}
    \centering
        \includegraphics[width=\linewidth]{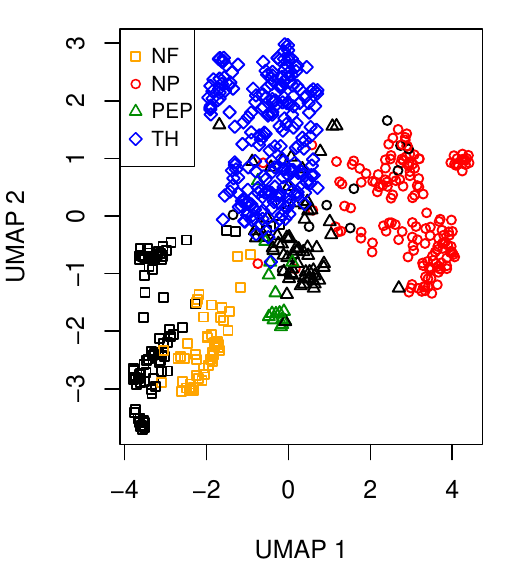}
        {\small \centering{Mean imputation\\(CER = 0.138)} }
    \end{minipage}
    \begin{minipage}{0.3\textwidth}
    \centering
        \includegraphics[width=\linewidth]{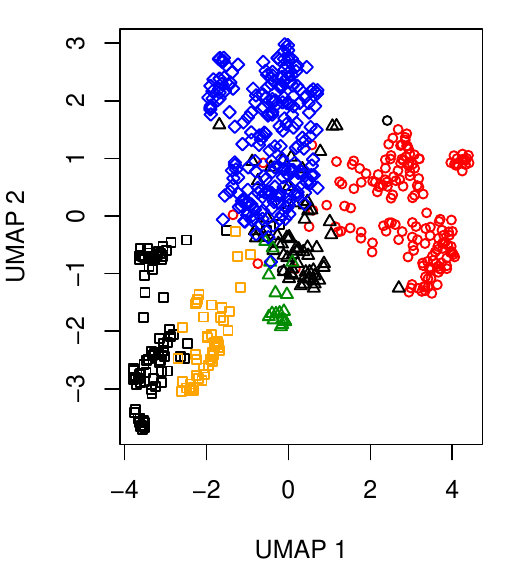}
        {\small scImpute\\(CER = 0.118) }
    \end{minipage}
    \par
    \begin{minipage}{0.3\textwidth}
    \centering
        \includegraphics[width=\linewidth]{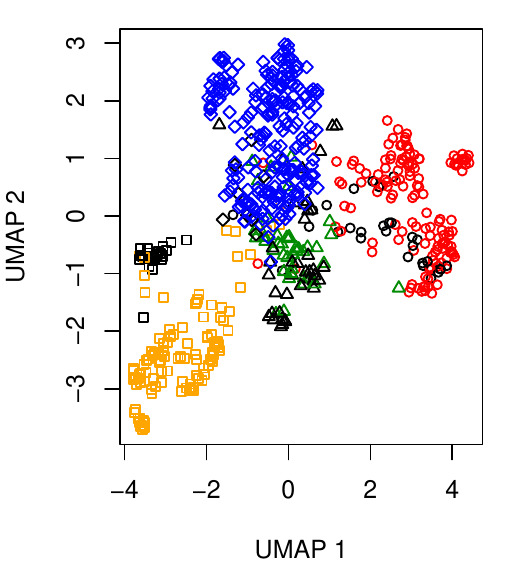}
        {\small $k$-POD\\(CER = 0.132) }
    \end{minipage}
    \begin{minipage}{0.3\textwidth}
    \centering
        \includegraphics[width=\linewidth]{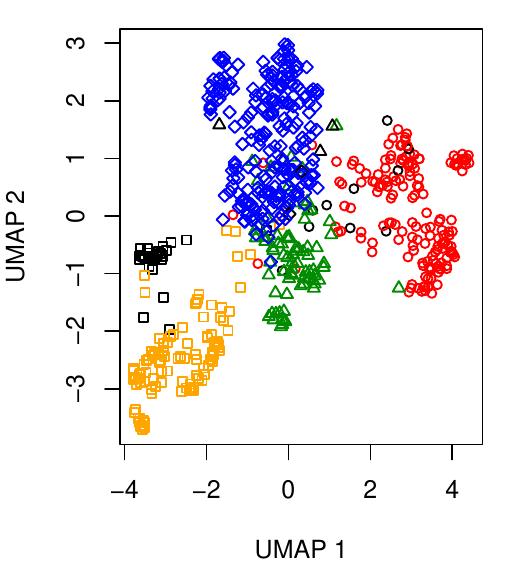}
        {\small \bf{Proposed}\\\bf{(CER = 0.057)}}
    \end{minipage}
    \caption{The UMAP visualization of clustering results of different methods for the \textit{Usoskin} dataset. The shape of points is the true label. The four colors (red, blue, green, orange) represent the correctly estimated labels, while black means mis-clustered points. The reported CER value is the average of 30 repetitions. }
    \label{fig_realdata_Usoskin}
\end{figure}

\section{Conclusion and discussions}
\label{sec_conclusion}

In this paper, we proposed a novel $k$-means clustering for data missing not at random with magnitude-decaying probability.  
Based on the constraint on the imputation values at missing positions, the proposed method alleviates the overestimation of cluster centers caused by simple cluster mean imputation, leading to a better clustering result. 
Moreover, such a penalty can be interpreted as a maximization of missing probability of missing entries, which further reveals the reason of the failure of existing $k$-POD clustering under MNAR mechanisms. 
We also proved the statistical consistency of the estimated cluster centers obtained by the proposed method, showing the ability of our missing data clustering method to recover the underlying cluster structure of fully observed data. 
The optimization was solved by an alternative minimization algorithm, which has a convergence guarantee and linear computation time. 
Simulation experiments verified the effect of the proposed method in improving clustering results and reducing the bias of estimated cluster centers, and confirmed the statistical consistency of estimated cluster centers to true cluster centers. 
Applications to real-world missing data further illustrate the utility of the proposed method.

There are still some limitations for further improvement. 
First, the proposed method is tailored for a specific ``magnitude-decaying" scenario of MNAR mechanisms, where data is more likely to be missing at positions with smaller absolute values, and we model it by Assumption~\ref{assumption_def_MNAR0}. Although our experimental results show that the proposed method remains robust even when Assumption~\ref{assumption_def_MNAR0} does not hold, it is worthwhile to design methods specifically for other MNAR mechanisms in the future. 
Second, the statistical consistency established in this work relies on Assumption~\ref{assumption_def_MNAR0} and Gaussian mixture distribution with true centers differing in each dimension. An important direction for further work is to investigate how to relax these conditions and to prove statistical properties under broader MNAR assumptions and other data distributions. 
Third, since the proposed method is a $k$-means-type clustering method, it still retains limitations common to $k$-means, such as for data with overlapping or imbalanced clusters. It will be interesting to extend our idea to non-linear variants of $k$-means, so as to handle more complex cluster structures.

\bibliographystyle{abbrvnat}
\bibliography{main}

\newpage
\appendix

\section{Proof of Theorem~\ref{theorem_consistency}}

\begin{proof}
    First, we prove the excess risk bound. 
    For any $n\in \mathbb{N}_+$, according to the definition of $\bm{M}^{\ast,\lambda}$,
    we have the excess risk is bounded by
    \begin{align*}
        & L^{\lambda}(\widehat{\bm{\mathrm{M}}}^{\lambda}) - \min_{\bm{M}\in\mathcal{M}} L^{\lambda}(\bm{M}) \\
        &=
        L^{\lambda}(\widehat{\bm{\mathrm{M}}}^{\lambda}) - L^{\lambda}(\bm{M}^{\ast,\lambda})\\
        &=L^{\lambda}(\widehat{\bm{\mathrm{M}}}^{\lambda}) - \widehat{L}_n^{\lambda}(\widehat{\bm{\mathrm{M}}}^{\lambda}) + \widehat{L}_n^{\lambda}(\widehat{\bm{\mathrm{M}}}^{\lambda}) - \widehat{L}_n^{\lambda}(\bm{M}^{\ast,\lambda}) + \widehat{L}_n^{\lambda}(\bm{M}^{\ast,\lambda}) - L^{\lambda}(\bm{M}^{\ast,\lambda})\\
        &\leq 2\cdot \sup_{\bm{M}\in \mathcal{M}} \left| L^{\lambda}(\bm{M}) -  \widehat{L}_n^{\lambda}(\bm{M}) \right| + \widehat{L}_n^{\lambda}(\widehat{\bm{\mathrm{M}}}^{\lambda}) - \widehat{L}_n^{\lambda}(\bm{M}^{\ast,\lambda}) \\
        &\leq 2\cdot \sup_{\bm{M}\in \mathcal{M}} \left| L^{\lambda}(\bm{M}) -  \widehat{L}_n^{\lambda}(\bm{M}) \right|,
    \end{align*}
    where the final inequality is because $\widehat{L}_n^{\lambda}(\widehat{\bm{\mathrm{M}}}^{\lambda}) - \widehat{L}_n^{\lambda}(\bm{M}^{\ast,\lambda}) \leq 0$ according to the definition of $\widehat{\bm{\mathrm{M}}}^{\lambda}$. 
    
    Then, by using Lemma~\ref{lemma_uniform_convergence} and the definition of $\bm{M}^{\ast,\lambda}$, we obtain for any $\tilde{\epsilon}>0$,
    \begin{align*}
        \lim_{n\rightarrow\infty} \text{Pr}\left(  L^{\lambda}(\widehat{\bm{\mathrm{M}}}^{\lambda}) - L^{\lambda}(\bm{M}^{\ast,\lambda}) >  \tilde{\epsilon} \right) = 0.
    \end{align*}
    According to Lemma~\ref{lemma_identifiability}, the identifiability of $\bm{M}^{\ast,\lambda}$ implies that for any $\epsilon>0$, there exists $\tilde{\epsilon}>0$ such that
    \begin{align*}
        \textnormal{Pr}\left( \|\text{vec}(\widehat{\bm{\mathrm{M}}}^{\lambda}) - \text{vec}(\bm{M}^{\ast,\lambda}) \|_2 >\epsilon \right) 
        \leq \textnormal{Pr}\left( L^{\lambda}(\widehat{\bm{\mathrm{M}}}^{\lambda}) - L^{\lambda}(\bm{M}^{\ast,\lambda}) >  \tilde{\epsilon} \right). 
    \end{align*}
    It follows that $\lim_{n\rightarrow\infty} \textnormal{Pr}\left( \|\text{vec}(\widehat{\bm{\mathrm{M}}}^{\lambda}) - \text{vec}(\bm{M}^{\ast,\lambda}) \|_2  >\epsilon \right) = 0$, which completes the proof. 
\end{proof}

\section{Proof of Theorem~\ref{theorem_converge_to_truth}}
\begin{proof}
Our aim is to prove $\widehat{\bm{\mathrm{M}}}^{\lambda}$ converges to $\bm{M}^{\ast\ast}$ in probability. 
Recall that $\bm{M}^{\ast\ast}$ refers to true cluster centers. We denote the cluster label of any $\bm{\mathrm{x}}_i$ associated with $\bm{M}^{\ast\ast}$ by 
\begin{align*}
    \mathrm{z}_i^{\ast\ast}=\mathop{\arg\min}_{l=1,\dots,k}  \| \bm{\mathrm{x}}_{i} - \bm{\mu}_{l}^{\ast\ast} \|_2^2,  
\end{align*}
and write $\bm{\mathrm{U}}^{\ast\ast}=(\mathrm{u}_{il}^{\ast\ast})_{n\times p}$, where $\mathrm{u}_{il}^{\ast\ast}=\mathds{1}(\mathrm{z}_i^{\ast\ast}=l)$. 
Moreover, we here use the formula of empirical loss function $\widehat{L}_n^{\lambda}$ that involves cluster assignment $\bm{U}$, i.e.,  
\begin{align*}
    \widehat{L}_n^{\lambda}(\bm{U},\bm{M})= \frac{1}{n}\sum_{i=1}^{n} \sum_{l=1}^{k} u_{il} \sum_{j=1}^{p} \mathrm{r}_{ij}( \mathrm{x}_{ij} - \mu_{lj} )^2  + \lambda (1-\mathrm{r}_{ij})\lambda \mu_{lj}^2,
\end{align*}
and denoted by $\widehat{\bm{\mathrm{U}}}^{\lambda}=(\hat{\mathrm{u}}_{il}^{\lambda})_{n\times p}$ the estimated cluster assignment associated with $\widehat{\bm{\mathrm{M}}}^{\lambda}$, i.e., 
\begin{align*}
    \hat{\mathrm{u}}_{il}^{\lambda}=\mathds{1}\left( l=\mathop{\arg\min}_{t=1,\dots,k} \sum_{j=1}^{p} \mathrm{r}_{ij}( \mathrm{x}_{ij} - \hat{\mu}_{tj}^{\lambda} )^2  + \lambda (1-\mathrm{r}_{ij})\lambda (\hat{\mu}_{tj}^{\lambda})^2 \right).
\end{align*}
Then, it suffices to prove $(\widehat{\bm{\mathrm{U}}}^{\lambda},\widehat{\bm{\mathrm{M}}}^{\lambda})$ converges to $(\bm{\mathrm{U}}^{\ast\ast},\bm{M}^{\ast\ast})$ in probability. 
Since the estimated $\bm{M}$ and $\bm{U}$ are one-to-one corresponded, then it suffices to prove: 
\begin{itemize}
    \item[(1)] Given $\widehat{\bm{\mathrm{M}}}^{\lambda}=\bm{M}^{\ast\ast}$, the $\bm{\mathrm{U}}^{\ast\ast}$ minimizes $\widehat{L}_n^{\lambda}(\bm{U},\bm{M}^{\ast\ast})$,
    \item[(2)] Given $\widehat{\bm{\mathrm{U}}}^{\lambda}=\bm{\mathrm{U}}^{\ast\ast}$, the $\widehat{\bm{\mathrm{M}}}^{\lambda}$ converges to $\bm{M}^{\ast\ast}$; 
\end{itemize}
which are given by Lemma~\ref{lemma_proof_converge_to_truth_about_U} and Lemma~\ref{lemma_proof_converge_to_truth_about_M}, respectively. 

\end{proof}

\section{Other lemmas and proofs}

\begin{lemma}
\label{lemma_uniform_convergence}
    Under the above settings, we have the uniform convergence of $\widehat{L}_n^{\lambda}$, i.e., for any $\tilde{\epsilon}>0$, 
    \begin{align*}
        \lim_{n\rightarrow\infty} \textnormal{Pr}\left( \sup_{\bm{M}\in \mathcal{M}} \left| L^{\lambda}(\bm{M}) - \widehat{L}^{\lambda}_n(\bm{M}) \right| >  \tilde{\epsilon} \right) =0. 
    \end{align*}
\end{lemma}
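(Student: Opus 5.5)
We prove Lemma~\ref{lemma_uniform_convergence} with a standard Glivenko--Cantelli argument for a parametric class indexed by the compact set $\mathcal{M}$. For $\bm{M}\in\mathcal{M}$, define the loss of a single observation
\begin{align*}
    g_{\bm{M}}(\bm{x},\bm{r}) = \min_{l=1,\dots,k} \sum_{j=1}^{p} r_{j}(x_{j}-\mu_{lj})^2 + (1-r_{j})\lambda\mu_{lj}^2 .
\end{align*}
Then $\widehat{L}_n^{\lambda}(\bm{M})=\frac1n\sum_{i=1}^n g_{\bm{M}}(\bm{\mathrm{x}}_i,\bm{\mathrm{r}}_i)$. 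The pairs $(\bm{\mathrm{x}}_i,\bm{\mathrm{r}}_i)$ are i.i.d., because under Assumption~\ref{assumption_def_MNAR0} each $\bm{\mathrm{r}}_i$ depends only on $\bm{\mathrm{x}}_i$. Hence $L^{\lambda}(\bm{M})=\mathbb{E}\, g_{\bm{M}}(\bm{\mathrm{x}}_1,\bm{\mathrm{r}}_1)$. It suffices to show that $\{g_{\bm{M}}:\bm{M}\in\mathcal{M}\}$ is Glivenko--Cantelli, which we do by bracketing/covering.

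\textbf{Step 1 (envelope and Lipschitz property).} By compactness of $\mathcal{X}$ (Assumption~\ref{assumption_X_compact_sub-Gaussian}), there is $D<\infty$ with $\|\bm{x}\|_2\le D$ on $\mathcal{X}$. Also $\|\bm{\mu}_l\|_2\le B$ on $\mathcal{M}$. So $0\le g_{\bm{M}}\le 2(D^2+B^2)+\lambda B^2=:G$, a bounded envelope. The sub-Gaussian assumption would only be needed to replace compactness by moment bounds. Next, take two center matrices $\bm{M},\bm{M}'\in\mathcal{M}$. Using $|\min_l a_l-\min_l b_l|\le\max_l|a_l-b_l|$ and, for each coordinate, $|(x-\mu)^2-(x-\mu')^2|\le 2(D+B)|\mu-\mu'|$ and $\lambda|\mu^2-\mu'^2|\le 2\lambda B|\mu-\mu'|$, we get
\begin{align*}
    \sup_{\bm{x},\bm{r}}|g_{\bm{M}}(\bm{x},\bm{r})-g_{\bm{M}'}(\bm{x},\bm{r})| \le C\sqrt{p}\,\|\text{vec}(\bm{M})-\text{vec}(\bm{M}')\|_2,
\end{align*}
with $C=2(D+B)+2\lambda B$. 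The same bound holds for $|L^{\lambda}(\bm{M})-L^{\lambda}(\bm{M}')|$.

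\textbf{Step 2 (finite net and pointwise LLN).} Given $\tilde\epsilon>0$, $\mathcal{M}$ is a bounded subset of $\mathbb{R}^{kp}$, so it has a finite $\delta$-net $\{\bm{M}^{(1)},\dots,\bm{M}^{(N)}\}$ with $\delta=\tilde\epsilon/(3C\sqrt p)$ and $N\le(3B\sqrt{k}/\delta)^{kp}$. For each net point, Hoeffding's inequality with the bounded envelope gives
\begin{align*}
    \text{Pr}\left(|\widehat{L}_n^{\lambda}(\bm{M}^{(s)})-L^{\lambda}(\bm{M}^{(s)})|>\tilde\epsilon/3\right)\le 2\exp\left(-2n\tilde\epsilon^2/(9G^2)\right).
\end{align*}
A union bound over the $N$ net points, which is fixed as $n$ grows, sends the total probability to zero.

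\textbf{Step 3 (conclusion).} For any $\bm{M}\in\mathcal{M}$, pick a net point within $\delta$ of it. By Step 1, $\widehat{L}_n^{\lambda}$ and $L^{\lambda}$ each move by at most $\tilde\epsilon/3$ between $\bm{M}$ and that net point. So on the event from Step 2, $\sup_{\bm{M}}|L^{\lambda}-\widehat{L}_n^{\lambda}|\le\tilde\epsilon$, which proves the lemma.

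The only potentially delicate point is the Lipschitz bound through the $\min$ over clusters and the random mask $\bm{\mathrm{r}}$. The elementary inequality for minima handles this, and the mask only selects between two Lipschitz pieces, so I expect no real obstacle.
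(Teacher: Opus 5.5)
Your proof is correct, and it takes a genuinely different route from the paper. The paper never uses compactness of $\mathcal{X}$ in this lemma. It bounds $\mathbb{E}\sup_{\bm{M}\in\mathcal{M}}|L^{\lambda}(\bm{M})-\widehat{L}_n^{\lambda}(\bm{M})|$ by $2\mathfrak{R}_n(\mathcal{G}_k)\le 2k\,\mathfrak{R}_n(\mathcal{G})$. It then controls $\mathfrak{R}_n(\mathcal{G})$ via Dudley's entropy integral and the bound $\text{VCdim}(\mathcal{G})\le 3p+1$, which holds because the single-center losses lie in the span of $r_jx_j^2$, $r_jx_j$, $r_j$, $1$. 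This gives an $O(n^{-1/2})$ bound on the expected supremum. The paper then reaches the probability statement by truncating the envelope $G$ at $\sqrt{n}$, applying Talagrand's inequality to the truncated part, and using Cauchy--Schwarz plus Markov on the tail. Its only distributional input is $\mathbb{E}[G(\bm{\mathrm{x}}_1,\bm{\mathrm{r}}_1)^2]<\infty$, i.e.\ fourth moments, which it takes from sub-Gaussianity.

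You use the other half of Assumption~\ref{assumption_X_compact_sub-Gaussian}. Boundedness of $\mathcal{X}$ gives you a bounded envelope and a deterministic Lipschitz constant in $\bm{M}$. Since the parameter set is finite-dimensional and bounded, this reduces everything to finitely many net points, where Hoeffding and a union bound finish. Your argument is shorter, fully elementary, and gives an explicit exponential tail for each fixed $\tilde{\epsilon}$.

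The paper's machinery buys independence from bounded support. That matters because the lemma is invoked again in the proof of Lemma~\ref{lemma_proof_converge_to_truth_about_M} for Gaussian-mixture data, whose support is not compact. Your remark that compactness can be traded for moments is right and easy to carry out:
\begin{itemize}
\item replace $C\sqrt{p}$ by the integrable random Lipschitz constant $\sqrt{p}\,(2\|\bm{x}\|_2+2B+2\lambda B)$, whose empirical mean is bounded in probability by the law of large numbers;
\item replace Hoeffding by the weak law at each net point.
\end{itemize}
That version needs only $\mathbb{E}\|\bm{\mathrm{x}}_1\|_2^2<\infty$, which is weaker than the fourth moments the paper's argument requires.
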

\begin{proof}
    Consider a function class on $\mathcal{X}\times \{0,1\}^p$ as follows: 
    \begin{align*}
        \mathcal{G}_{k}&=\bigg\{  g_{k,\bm{M}}:(\bm{x},\bm{r})\mapsto \min_{l=1,\dots,k} \sum_{j=1}^{p} r_j(x_j - \mu_{lj})^2 + \lambda(1-r_j)\mu_{lj}^2 \;\bigg|\;  \forall \bm{M}\in\mathcal{M}, \|\bm{\mu}_l\|_2\leq B, l=1,\dots,k \bigg\}.
    \end{align*}
    For a size-$n$ random sample of missing data $(\bm{\mathrm{X}},\bm{\mathrm{R}})=\{(\bm{\mathrm{x}}_1,\bm{\mathrm{r}}_1),\dots,(\bm{\mathrm{x}}_n,\bm{\mathrm{r}}_n)\}$, we can write  
    \begin{align*}
        L^{\lambda}(\bm{M})=\mathbb{E}_{\bm{\mathrm{x}}_1,\bm{\mathrm{r}}_1}\left[ g_{k,\bm{M}}(\bm{\mathrm{x}}_1,\bm{\mathrm{r}}_1) \right]
        \quad \textnormal{and} \quad 
        \widehat{L}^{\lambda}_n(\bm{M})=\frac{1}{n}\sum_{i=1}^{n} g_{k,\bm{M}}(\bm{\mathrm{x}}_i,\bm{\mathrm{r}}_i).
    \end{align*}

    First, we prove the expectation $\mathbb{E}_{\bm{\mathrm{X}},\bm{\mathrm{R}}} \left[ \sup_{\bm{M}\in \mathcal{M}} \left| L^{\lambda}(\bm{M}) - \widehat{L}^{\lambda}_n(\bm{M}) \right| \right]$ is $O(n^{-1/2})$. 

    To this end, we define a function class as follows: 
    \begin{align*}
        \mathcal{G}&=\bigg\{ g_{\bm{\mu}}:(\bm{x},\bm{r})\mapsto \sum_{j=1}^{p} r_j(x_j - \mu_{j})^2 + \lambda(1-r_j)\mu_{j}^2 \;\bigg|\; \forall \bm{\mu} \in \mathcal{X}, \|\bm{\mu}\|_2\leq B \bigg\}.
    \end{align*}
    Let $\mathfrak{R}_n(\mathcal{G}_k)$ and $\mathfrak{R}_n(\mathcal{G})$ be the Rademacher complexity of $\mathcal{G}_k$ and $\mathcal{G}$, respectively, then according to Theorem 12 of \cite{bartlett2002rademacher}, we have 
    \begin{align*}
        \mathfrak{R}_n(\mathcal{G}_k)\leq k\cdot \mathfrak{R}_n(\mathcal{G}). 
    \end{align*}
    Moreover, we define an envelop function $G:\mathcal{X}\times \{0,1\}^p\mapsto \mathbb{R}$ by 
    \begin{align*}
        G(\bm{x},\bm{r})=\sum_{j=1}^{p} x_j^2 + \sum_{j=1}^{p} 2B|x_j| + (\lambda +1)B^2.
    \end{align*}
    Then for any $g_{\bm{\mu}}\in\mathcal{G}$, we have $|g_{\bm{\mu}}(\bm{x},\bm{r})|\leq G(\bm{x},\bm{r})$ holds for any $(\bm{x},\bm{r})\in \mathcal{X}\times \{0,1\}^p$. 
    Under Assumption~\ref{assumption_X_compact_sub-Gaussian} (i.e., $\bm{\mathrm{x}}_1$ is sub-Gaussian), we have $\mathbb{E}_{\bm{\mathrm{x}}_1}[\|\bm{\mathrm{x}}_1\|_2^4]<\infty$, it follows that $\mathbb{E}_{\bm{\mathrm{x}}_1,\bm{\mathrm{r}}_1}[(G(\bm{\mathrm{x}}_1,\bm{\mathrm{r}}_1))^2]<\infty$. We write 
    \begin{align*}
        \sigma_G^2=\mathbb{E}_{\bm{\mathrm{x}}_1,\bm{\mathrm{r}}_1}[(G(\bm{\mathrm{x}}_1,\bm{\mathrm{r}}_1))^2]. 
    \end{align*} 
    Then, according to Equations~(3.8)-(3.13) of \cite{mohri2018foundations}, we have 
    \begin{align*}
        \mathbb{E}_{\bm{\mathrm{X}},\bm{\mathrm{R}}}\left[ \sup_{\bm{M}\in \mathcal{M}} \left| L^{\lambda}(\bm{M}) - \widehat{L}^{\lambda}_n(\bm{M}) \right| \right]
        \leq 2\mathfrak{R}_n(\mathcal{G}_k)
        \leq 2k\mathfrak{R}_n(\mathcal{G}),
    \end{align*}
    Thus, it suffices to bound $\mathfrak{R}_n(\mathcal{G})$. 
    
    According to the Dudley’s entropy integral bound (Theorem~5.22 of \cite{wainwright2019high}), we have
    \begin{align*}
        \mathfrak{R}_n(\mathcal{G}) \leq \mathbb{E}_{\bm{\mathrm{X}},\bm{\mathrm{R}}} \left[  \frac{24}{\sqrt{n}} \int_{0}^{\infty} \sqrt{\log( N( \tau, \mathcal{G}, \mathcal{L}_2(\mathbb{P}_n) )) }\; d\tau \right],
    \end{align*}
    where $N( \tau, \mathcal{G}, \mathcal{L}_2(\mathbb{P}_n) )$ is the $\tau$-covering number of $\mathcal{G}$ with respect to the $\mathcal{L}_2(\mathbb{P}_n)$ norm. 
    Notice that since $G$ is the envelop function of $\mathcal{G}$, then $N( \tau, \mathcal{G}, \mathcal{L}_2(\mathbb{P}_n) )=1$ 
    for $\tau\geq \|G\|_{\mathcal{L}_2(\mathbb{P}_n)}$, implying the integral function equals to zero. It follows that 
    \begin{align*}
        &\int_{0}^{\infty} \sqrt{\log( N( \tau, \mathcal{G}, \mathcal{L}_2(\mathbb{P}_n) )) }\; d\tau 
        =\int_{0}^{\|G\|_{\mathcal{L}_2(\mathbb{P}_n)}} \sqrt{\log( N( \tau, \mathcal{G}, \mathcal{L}_2(\mathbb{P}_n) )) }\; d\tau.
    \end{align*}
    Moreover, Lemma~\ref{lemma_VCdim} shows that $\mathcal{G}$ has its VC-dimension $\text{VCdim}(\mathcal{G})<\infty$. Then, according to the covering number Theorem (Theorem 2.6.7 of \cite{van1996weak}), we have 
    \begin{align*}
        N( \delta\|G\|_{\mathcal{L}_2(\mathbb{P}_n)} , \mathcal{G}, \mathcal{L}_2(\mathbb{P}_n) )
        &\leq C_0 \cdot \text{VCdim}(\mathcal{G})\cdot (16e)^{\text{VCdim}(\mathcal{G})} \left(\frac{1}{\delta}\right)^{2\text{VCdim}(\mathcal{G})}
        \leq \left(\frac{C_1}{\delta}\right)^{2\text{VCdim}(\mathcal{G})}
    \end{align*}
    where $C_0$ is a constant and $C_1$ is a constant satisfying $ C_0 \cdot \text{VCdim}(\mathcal{G})\cdot (16e)^{\text{VCdim}(\mathcal{G})}\leq C_1^{2\text{VCdim}(\mathcal{G})}$. 
    Thus, we obtain 
    \begin{align*}
        \mathfrak{R}_n(\mathcal{G}) 
        &\leq \mathbb{E}_{\bm{\mathrm{X}},\bm{\mathrm{R}}} \left[  \frac{24}{\sqrt{n}} \int_{0}^{\|G\|_{\mathcal{L}_2(\mathbb{P}_n)}}  \sqrt{\log( N( \tau, \mathcal{G}, \mathcal{L}_2(\mathbb{P}_n) )) }\; d\tau \right] \\
        &= \mathbb{E}_{\bm{\mathrm{X}},\bm{\mathrm{R}}} \left[  \frac{24}{\sqrt{n}} \|G\|_{\mathcal{L}_2(\mathbb{P}_n)}\cdot \int_{0}^{1} \sqrt{ \log\left(\left(C_1/\delta \right)^{2\text{VCdim}(\mathcal{G})} \right)  }  \; d\delta
        \right]\\
        &= \mathbb{E}_{\bm{\mathrm{X}},\bm{\mathrm{R}}} \left[  \frac{24}{\sqrt{n}} \|G\|_{\mathcal{L}_2(\mathbb{P}_n)}\cdot 
        \sqrt{ 2\text{VCdim}(\mathcal{G})} \cdot \int_{0}^{1} \sqrt{\log(C_1/\delta)} \; d\delta
        \right]\\ 
        &:= \mathbb{E}_{\bm{\mathrm{X}},\bm{\mathrm{R}}} \left[  \frac{24}{\sqrt{n}} \|G\|_{\mathcal{L}_2(\mathbb{P}_n)} \cdot \sqrt{ 2\text{VCdim}(\mathcal{G})} \cdot C_2 \right],
    \end{align*}
    where $C_2=\int_{0}^{1} \sqrt{\log(C_1/\delta)} \; d\delta <\infty$ is a constant. 
    Because 
    \begin{align*}
        \mathbb{E}_{\bm{\mathrm{X}},\bm{\mathrm{R}}} \left[ \|G\|_{\mathcal{L}_2(\mathbb{P}_n)} \right] 
        &= \mathbb{E}_{\bm{\mathrm{X}},\bm{\mathrm{R}}} \left[ \left( \frac{1}{n} \sum_{i=1}^{n} (G(\bm{\mathrm{x}}_i,\bm{\mathrm{r}}_i) )^2 \right)^{1/2} \right] \\
        &\leq \left( \mathbb{E}_{\bm{\mathrm{X}},\bm{\mathrm{R}}} \left[ \frac{1}{n} \sum_{i=1}^{n} (G(\bm{\mathrm{x}}_i,\bm{\mathrm{r}}_i) )^2 \right] \right)^{1/2} \\
        &= \left( \mathbb{E}_{\bm{\mathrm{x}}_1,\bm{\mathrm{r}}_1} \left[ (G(\bm{\mathrm{x}}_1,\bm{\mathrm{r}}_1) )^2 \right] \right)^{1/2} \\
        &= \sigma_G,
    \end{align*}
    then $\mathfrak{R}_n(\mathcal{G})$ is bounded by
    \begin{align*}
        &\mathfrak{R}_n(\mathcal{G})
        \leq \frac{24\sigma_G}{\sqrt{n}}\cdot \sqrt{ 2\text{VCdim}(\mathcal{G})} \cdot C_2, 
    \end{align*}
    which follows that 
    \begin{align}
        &\mathbb{E}_{\bm{\mathrm{X}},\bm{\mathrm{R}}}\left[ \sup_{\bm{M}\in \mathcal{M}} \left| L^{\lambda}(\bm{M}) - \widehat{L}^{\lambda}_n(\bm{M}) \right| \right] 
        \leq \frac{48 k \cdot \sigma_G}{\sqrt{n}}\cdot \sqrt{ 2\text{VCdim}(\mathcal{G})} \cdot C_2.
        \label{proof_uniform_convergence_eq_1}
    \end{align}

    Second, we prove the tail probability $\text{Pr}\left( \sup_{M\in \mathcal{M}} \left| L^{\lambda}(\bm{M}) - \widehat{L}_n^{\lambda}(\bm{M}) \right| > \tilde{\epsilon} \right)$ tends to zero as $n\rightarrow\infty$ for any $\tilde{\epsilon}>0$. 
 
    To this end, for any fixed $n\in\mathbb{N}_+$, we define 
    \begin{align*}
        &g_{k,\bm{M}}^{-}(\bm{x},\bm{r})=g_{k,\bm{M}}(\bm{x},\bm{r})\cdot \mathds{1}(G(\bm{x},\bm{r})\leq \sqrt{n}) \\
        &g_{k,\bm{M}}^{+}(\bm{x},\bm{r})=g_{k,\bm{M}}(\bm{x},\bm{r})\cdot \mathds{1}(G(\bm{x},\bm{r})> \sqrt{n}).
    \end{align*}
    Because 
    \begin{align*}
        &\sup_{M\in \mathcal{M}} \left| L^{\lambda}(\bm{M}) - \widehat{L}_n^{\lambda}(\bm{M}) \right| \\
        &= \sup_{M\in \mathcal{M}} \left| \mathbb{E}_{\bm{\mathrm{x}}_1,\bm{\mathrm{r}}_1} \left[ g_{k,\bm{M}}(\bm{\mathrm{x}}_1,\bm{\mathrm{r}}_1) \right] - \frac{1}{n}\sum_{i=1}^{n} g_{k,\bm{M}}(\bm{\mathrm{x}}_i,\bm{\mathrm{r}}_i)  \right| \\
        &\leq \sup_{M\in \mathcal{M}} \left| \mathbb{E}_{\bm{\mathrm{x}}_1,\bm{\mathrm{r}}_1} \left[ g_{k,\bm{M}}^{-}(\bm{\mathrm{x}}_1,\bm{\mathrm{r}}_1) \right] - \frac{1}{n}\sum_{i=1}^{n} g_{k,\bm{M}}^{-}(\bm{\mathrm{x}}_i,\bm{\mathrm{r}}_i)  \right|  + \sup_{M\in \mathcal{M}} \left| \mathbb{E}_{\bm{\mathrm{x}}_1,\bm{\mathrm{r}}_1} \left[ g_{k,\bm{M}}^{+}(\bm{\mathrm{x}}_1,\bm{\mathrm{r}}_1) \right] - \frac{1}{n}\sum_{i=1}^{n} g_{k,\bm{M}}^{+}(\bm{\mathrm{x}}_i,\bm{\mathrm{r}}_i)  \right| ,
    \end{align*}
    then for any $\tilde{\epsilon}>0$, 
    \begin{align*}
        &\text{Pr}\left( \sup_{M\in \mathcal{M}} \left| L^{\lambda}(\bm{M}) - \widehat{L}_n^{\lambda}(\bm{M}) \right| > \tilde{\epsilon} \right) \\
        &\leq \text{Pr}\left( \sup_{M\in \mathcal{M}} \left| \mathbb{E}_{\bm{\mathrm{x}}_1,\bm{\mathrm{r}}_1} \left[ g_{k,\bm{M}}^{-}(\bm{\mathrm{x}}_1,\bm{\mathrm{r}}_1) \right] - \frac{1}{n}\sum_{i=1}^{n} g_{k,\bm{M}}^{-}(\bm{\mathrm{x}}_i,\bm{\mathrm{r}}_i)  \right| > \frac{\tilde{\epsilon}}{2} \right) \\
        &\quad + \text{Pr}\left( \sup_{M\in \mathcal{M}} \left| \mathbb{E}_{\bm{\mathrm{x}}_1,\bm{\mathrm{r}}_1} \left[ g_{k,\bm{M}}^{+}(\bm{\mathrm{x}}_1,\bm{\mathrm{r}}_1) \right] - \frac{1}{n}\sum_{i=1}^{n} g_{k,\bm{M}}^{+}(\bm{\mathrm{x}}_i,\bm{\mathrm{r}}_i)  \right| > \frac{\tilde{\epsilon}}{2} \right)\\
        &:= \text{(I)} + \text{(II)}.
    \end{align*}
    Thus, it suffices to prove the two terms converge to zero as $n\rightarrow\infty$. 
    
    For $\text{(I)}$, because 
    \begin{align*}
        \| g_{k,\bm{M}}^{-} \|_{\infty} 
        = \sup_{\bm{x},\bm{r}}| g_{k,\bm{M}}^{-}(\bm{x},\bm{r})|
        \leq \sup_{\bm{x},\bm{r}}\left| G(\bm{x},\bm{r})\cdot \mathds{1}(G(\bm{x},\bm{r})\leq \sqrt{n}) \right|
        \leq \sqrt{n}
    \end{align*}
    and 
    \begin{align*}
        &\mathbb{E}_{\bm{\mathrm{X}},\bm{\mathrm{R}}}\left[ \sup_{\bm{M}\in\mathcal{M}} \frac{1}{n}\sum_{i=1}^{n} \left( g_{k,\bm{M}}^{-}(\bm{\mathrm{x}}_i,\bm{\mathrm{r}}_i)  \right)^2 \right] \\
        &\leq \mathbb{E}_{\bm{\mathrm{X}},\bm{\mathrm{R}}}\left[ \frac{1}{n}\sum_{i=1}^{n} \left( G(\bm{\mathrm{x}}_i,\bm{\mathrm{r}}_i)\right)^2 \cdot \mathds{1}(G(\bm{\mathrm{x}}_i,\bm{\mathrm{r}}_i)\leq \sqrt{n}) \right] \\
        &\leq \mathbb{E}_{\bm{\mathrm{x}}_1,\bm{\mathrm{r}}_1}\left[ \left( G(\bm{\mathrm{x}}_1,\bm{\mathrm{r}}_1)\right)^2 \right] 
        = \sigma_G^2,
    \end{align*}
    then according to Talagrand inequality (Theorem 3.27 of \cite{wainwright2019high}), we have 
    \begin{align}
        &\text{Pr}\left(  \sup_{\bm{M}\in\mathcal{M}} \frac{1}{n}\sum_{i=1}^{n} g_{k,\bm{M}}^{-}(\bm{\mathrm{x}}_i,\bm{\mathrm{r}}_i) 
        - \mathbb{E}_{\bm{\mathrm{X}},\bm{\mathrm{R}}}\left[ \sup_{\bm{M}\in\mathcal{M}} \frac{1}{n}\sum_{i=1}^{n} g_{k,\bm{M}}^{-}(\bm{\mathrm{x}}_i,\bm{\mathrm{r}}_i) \right] \geq \frac{\tilde{\epsilon}}{4}  \right)\notag \\
        &\leq 2\exp\left(  \frac{ -n(\tilde{\epsilon}/4)^2 }{ 
            8e\cdot \sigma_G^2 + 4\sqrt{n}(\tilde{\epsilon}/4) 
         } \right)\notag \\
         &= 2 \exp\left( \frac{ -n\tilde{\epsilon}^2 }{ 128e\cdot \sigma_G^2 + 16\sqrt{n}\tilde{\epsilon} } \right).
         \label{proof_uniform_convergence_eq_2}
    \end{align}
    Moreover, because 
    \begin{align*}
        &\sup_{\bm{M}\in\mathcal{M}} \left\{ \mathbb{E}_{\bm{\mathrm{X}},\bm{\mathrm{R}}}\left[ \sup_{\bm{M}'\in\mathcal{M}} \frac{1}{n}\sum_{i=1}^{n} g_{k,\bm{M}'}^{-}(\bm{\mathrm{x}}_i,\bm{\mathrm{r}}_i) \right] - \mathbb{E}_{\bm{\mathrm{x}}_1,\bm{\mathrm{r}}_1} \left[ g_{k,\bm{M}}^{-}(\bm{\mathrm{x}}_1,\bm{\mathrm{r}}_1) \right] \right\} \\
        &=\mathbb{E}_{\bm{\mathrm{X}},\bm{\mathrm{R}}}\left[ \sup_{\bm{M}'\in\mathcal{M}} \frac{1}{n}\sum_{i=1}^{n} g_{k,\bm{M}'}^{-}(\bm{\mathrm{x}}_i,\bm{\mathrm{r}}_i) \right] - \inf_{\bm{M}\in\mathcal{M}} \mathbb{E}_{\bm{\mathrm{x}}_1,\bm{\mathrm{r}}_1} \left[ g_{k,\bm{M}}^{-}(\bm{\mathrm{x}}_1,\bm{\mathrm{r}}_1) \right] \\
        &= \mathbb{E}_{\bm{\mathrm{X}},\bm{\mathrm{R}}}\left[ \sup_{\bm{M}\in\mathcal{M}} \left\{ \frac{1}{n}\sum_{i=1}^{n} g_{k,\bm{M}}^{-}(\bm{\mathrm{x}}_i,\bm{\mathrm{r}}_i) - \mathbb{E}_{\bm{\mathrm{x}}_1,\bm{\mathrm{r}}_1} \left[ g_{k,\bm{M}}^{-}(\bm{\mathrm{x}}_1,\bm{\mathrm{r}}_1) \right]  \right\} \right] \\
        &\leq \mathbb{E}_{\bm{\mathrm{X}},\bm{\mathrm{R}}}\left[ \sup_{\bm{M}\in\mathcal{M}} \left| L^{\lambda}(\bm{M}) - \widehat{L}_n^{\lambda}(\bm{M}) \right| \right] \\
        &\leq \frac{48 k \cdot \sigma_G}{\sqrt{n}}\cdot \sqrt{ 2\text{VCdim}(\mathcal{G})} \cdot C_2,
    \end{align*}
    where the last inequality is due to Eq.(\ref{proof_uniform_convergence_eq_1}), then there exists $N_1\in\mathbb{N}_+$ such that for any $n\geq N_1$, 
    \begin{align}
        &\sup_{\bm{M}\in\mathcal{M}} \left\{ \mathbb{E}_{\bm{\mathrm{X}},\bm{\mathrm{R}}}\left[ \sup_{\bm{M}'\in\mathcal{M}} \frac{1}{n}\sum_{i=1}^{n} g_{k,\bm{M}'}^{-}(\bm{\mathrm{x}}_i,\bm{\mathrm{r}}_i) \right] - \mathbb{E}_{\bm{\mathrm{x}}_1,\bm{\mathrm{r}}_1} \left[ g_{k,\bm{M}}^{-}(\bm{\mathrm{x}}_1,\bm{\mathrm{r}}_1) \right] \right\} 
        < \frac{\tilde{\epsilon}}{4}. 
     \label{proof_uniform_convergence_eq_3}
    \end{align}
    Combining Eq.(\ref{proof_uniform_convergence_eq_2})-(\ref{proof_uniform_convergence_eq_3}) leads to 
    \begin{align*}
        \text{(I)}
        &\leq 2\cdot \text{Pr}\left( \sup_{M\in \mathcal{M}} \mathbb{E}_{\bm{\mathrm{x}}_1,\bm{\mathrm{r}}_1} \left[ g_{k,\bm{M}}^{-}(\bm{\mathrm{x}}_1,\bm{\mathrm{r}}_1) \right] - \frac{1}{n}\sum_{i=1}^{n} g_{k,\bm{M}}^{-}(\bm{\mathrm{x}}_i,\bm{\mathrm{r}}_i)  > \frac{\tilde{\epsilon}}{2} \right) \\
        &\leq 2\cdot \text{Pr}\left(  
        \sup_{\bm{M}\in\mathcal{M}} \frac{1}{n}\sum_{i=1}^{n} g_{k,\bm{M}}^{-}(\bm{\mathrm{x}}_i,\bm{\mathrm{r}}_i) 
        - \mathbb{E}_{\bm{\mathrm{X}},\bm{\mathrm{R}}}\left[ \sup_{\bm{M}\in\mathcal{M}} \frac{1}{n}\sum_{i=1}^{n} g_{k,\bm{M}}^{-}(\bm{\mathrm{x}}_i,\bm{\mathrm{r}}_i) \right]  \right. \\
        & \quad\quad\quad\quad + \left. \sup_{\bm{M}\in\mathcal{M}} \left\{ \mathbb{E}_{\bm{\mathrm{X}},\bm{\mathrm{R}}}\left[ \sup_{\bm{M}'\in\mathcal{M}} \frac{1}{n}\sum_{i=1}^{n} g_{k,\bm{M}'}^{-}(\bm{\mathrm{x}}_i,\bm{\mathrm{r}}_i) \right] - \mathbb{E}_{\bm{\mathrm{x}}_1,\bm{\mathrm{r}}_1} \left[ g_{k,\bm{M}}^{-}(\bm{\mathrm{x}}_1,\bm{\mathrm{r}}_1) \right] \right\} \geq  \frac{\tilde{\epsilon}}{2} \right)\\
        &\leq 2\cdot \text{Pr}\left(  
        \sup_{\bm{M}\in\mathcal{M}} \frac{1}{n}\sum_{i=1}^{n} g_{k,\bm{M}}^{-}(\bm{\mathrm{x}}_i,\bm{\mathrm{r}}_i) 
        - \mathbb{E}_{\bm{\mathrm{X}},\bm{\mathrm{R}}}\left[ \sup_{\bm{M}\in\mathcal{M}} \frac{1}{n}\sum_{i=1}^{n} g_{k,\bm{M}}^{-}(\bm{\mathrm{x}}_i,\bm{\mathrm{r}}_i) \right]  + \frac{\tilde{\epsilon}}{4}  
        \geq  \frac{\tilde{\epsilon}}{2} \right) \\
        &= 2\cdot \text{Pr}\left(  
        \sup_{\bm{M}\in\mathcal{M}} \frac{1}{n}\sum_{i=1}^{n} g_{k,\bm{M}}^{-}(\bm{\mathrm{x}}_i,\bm{\mathrm{r}}_i) 
        - \mathbb{E}_{\bm{\mathrm{X}},\bm{\mathrm{R}}}\left[ \sup_{\bm{M}\in\mathcal{M}} \frac{1}{n}\sum_{i=1}^{n} g_{k,\bm{M}}^{-}(\bm{\mathrm{x}}_i,\bm{\mathrm{r}}_i) \right]  \geq \frac{\tilde{\epsilon}}{4} \right) \\
        &\leq 2 \exp\left( \frac{ -n\tilde{\epsilon}^2 }{ 128e\cdot \sigma_G^2 + 16\sqrt{n}\tilde{\epsilon} } \right),
    \end{align*}
    which implies that $\text{(I)}\rightarrow 0$ as $n\rightarrow\infty$.  

    For $\text{(II)}$, because 
    \begin{align*}
        &\sup_{M\in \mathcal{M}} \left| \mathbb{E}_{\bm{\mathrm{x}}_1,\bm{\mathrm{r}}_1} \left[ g_{k,\bm{M}}^{+}(\bm{\mathrm{x}}_1,\bm{\mathrm{r}}_1) \right] - \frac{1}{n}\sum_{i=1}^{n} g_{k,\bm{M}}^{+}(\bm{\mathrm{x}}_i,\bm{\mathrm{r}}_i)  \right| \\
        &\leq \sup_{M\in \mathcal{M}} \left| \mathbb{E}_{\bm{\mathrm{x}}_1,\bm{\mathrm{r}}_1} \left[ g_{k,\bm{M}}^{+}(\bm{\mathrm{x}}_1,\bm{\mathrm{r}}_1) \right] \right| 
        + \sup_{M\in \mathcal{M}} \left| \frac{1}{n}\sum_{i=1}^{n} g_{k,\bm{M}}^{+}(\bm{\mathrm{x}}_i,\bm{\mathrm{r}}_i)  \right| \\
        &\leq \sup_{M\in \mathcal{M}} \mathbb{E}_{\bm{\mathrm{x}}_1,\bm{\mathrm{r}}_1} \left[\left| g_{k,\bm{M}}(\bm{\mathrm{x}}_1,\bm{\mathrm{r}}_1)\right| \cdot \mathds{1}(G(\bm{\mathrm{x}}_1,\bm{\mathrm{r}}_1)> \sqrt{n})  \right]  + \sup_{M\in \mathcal{M}} \frac{1}{n}\sum_{i=1}^{n} \left| g_{k,\bm{M}}(\bm{\mathrm{x}}_i,\bm{\mathrm{r}}_i)\right| \cdot \mathds{1}(G(\bm{\mathrm{x}}_i,\bm{\mathrm{r}}_i)> \sqrt{n})  \\
        &\leq \mathbb{E}_{\bm{\mathrm{x}}_1,\bm{\mathrm{r}}_1} \left[\left| G(\bm{\mathrm{x}}_1,\bm{\mathrm{r}}_1)\right| \cdot \mathds{1}(G(\bm{\mathrm{x}}_1,\bm{\mathrm{r}}_1)> \sqrt{n})  \right] + \frac{1}{n}\sum_{i=1}^{n} \left| G(\bm{\mathrm{x}}_i,\bm{\mathrm{r}}_i)\right| \cdot \mathds{1}(G(\bm{\mathrm{x}}_i,\bm{\mathrm{r}}_i)> \sqrt{n}) ,
    \end{align*}
    then 
    \begin{align*}
        &\mathbb{E}_{\bm{\mathrm{X}},\bm{\mathrm{R}}}\left[ \sup_{M\in \mathcal{M}} \left| \mathbb{E}_{\bm{\mathrm{x}}_1,\bm{\mathrm{r}}_1} \left[ g_{k,\bm{M}}^{+}(\bm{\mathrm{x}}_1,\bm{\mathrm{r}}_1) \right] - \frac{1}{n}\sum_{i=1}^{n} g_{k,\bm{M}}^{+}(\bm{\mathrm{x}}_i,\bm{\mathrm{r}}_i)  \right|  \right] \\
        &\leq 2\cdot \mathbb{E}_{\bm{\mathrm{x}}_1,\bm{\mathrm{r}}_1} \left[\left| G(\bm{\mathrm{x}}_1,\bm{\mathrm{r}}_1)\right| \cdot \mathds{1}(G(\bm{\mathrm{x}}_1,\bm{\mathrm{r}}_1)> \sqrt{n})  \right] \\
        &\leq 2\cdot \left\{ \mathbb{E}_{\bm{\mathrm{x}}_1,\bm{\mathrm{r}}_1} \left[ \left( G(\bm{\mathrm{x}}_1,\bm{\mathrm{r}}_1) \right)^2 \right] \cdot \mathbb{E}_{\bm{\mathrm{x}}_1,\bm{\mathrm{r}}_1} \left[ \mathds{1}(G(\bm{\mathrm{x}}_1,\bm{\mathrm{r}}_1)> \sqrt{n}) \right] \right\}^{1/2} \\
        &=2 \sigma_G \cdot \left\{ \text{Pr}\left( (G(\bm{\mathrm{x}}_1,\bm{\mathrm{r}}_1))^2 > n \right) \right\}^{1/2} \\
        &\leq 2 \sigma_G \cdot \left\{ \frac{ \mathbb{E}_{\bm{\mathrm{x}}_1,\bm{\mathrm{r}}_1}\left[ (G(\bm{\mathrm{x}}_1,\bm{\mathrm{r}}_1))^2 \right] }{ n } \right\}^{1/2} \\
        &= 2 \sigma_G \cdot \frac{\sigma_G }{\sqrt{n} }
        = \frac{2 \sigma_G^2}{\sqrt{n}}.
    \end{align*}
    According to Markov inequality, we obatin 
    \begin{align*}
        &\text{(II)}
        \leq \frac{ 2 \sigma_G^2/\sqrt{n} }{ \tilde{\epsilon}/2 }
        =\frac{ 4\sigma_G^2 }{ \tilde{\epsilon}\sqrt{n} },
    \end{align*}
    which implies that $\text{(II)}\rightarrow 0$ as $n\rightarrow\infty$. 

    Combining $\text{(I)}$ and $\text{(II)}$ leads to 
    \begin{align*}
        &\lim_{n\rightarrow\infty} \text{Pr}\left( \sup_{M\in \mathcal{M}} \left| L^{\lambda}(\bm{M}) - \widehat{L}_n^{\lambda}(\bm{M}) \right| > \tilde{\epsilon} \right) =0,
    \end{align*}
    which completes the proof. 
\end{proof}

\begin{lemma}
\label{lemma_VCdim}
    For the function class 
    \begin{align*}
        \mathcal{G}&=\bigg\{ g_{\bm{\mu}}:(\bm{x},\bm{r})\mapsto \sum_{j=1}^{p} r_j(x_j - \mu_{j})^2 + \lambda(1-r_j)\mu_{j}^2 \;\bigg|\; \forall \bm{\mu} \in \mathcal{X}, \|\bm{\mu}\|_2\leq B \bigg\},
    \end{align*}
    the VC dimension $\text{VCdim}(\mathcal{G})<\infty$. 
\end{lemma}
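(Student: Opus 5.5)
The plan is to show that each $g_{\bm{\mu}}\in\mathcal{G}$ lies in a fixed, finite-dimensional vector space of real functions on $\mathcal{X}\times\{0,1\}^p$. I would then invoke the standard fact that such a class has finite VC dimension, interpreted through the subgraph sets, i.e.\ pseudo-dimension, as in Lemma 2.6.15 of \cite{van1996weak}. Throughout, the VC dimension of a real-valued class means that of its subgraphs $\{(\bm{x},\bm{r},t): t< g_{\bm{\mu}}(\bm{x},\bm{r})\}$, which is the notion needed for Theorem 2.6.7 of \cite{van1996weak} used in Lemma~\ref{lemma_uniform_convergence}.

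\textbf{Step 1 (linear expansion).} For fixed $(\bm{x},\bm{r})$, expand
\begin{align*}
g_{\bm{\mu}}(\bm{x},\bm{r})=\sum_{j=1}^{p} r_j x_j^2 \;-\; 2\sum_{j=1}^{p}\mu_j\, r_j x_j \;+\; \sum_{j=1}^{p}\mu_j^2\, r_j \;+\; \lambda\sum_{j=1}^{p}\mu_j^2\,(1-r_j).
\end{align*}
This shows that $g_{\bm{\mu}}$ is a linear combination of the fixed functions $h_0(\bm{x},\bm{r})=\sum_j r_jx_j^2$, $h_j^{(1)}(\bm{x},\bm{r})=r_jx_j$, $h_j^{(2)}(\bm{x},\bm{r})=r_j$ and $h_j^{(3)}(\bm{x},\bm{r})=1-r_j$, for $j=1,\dots,p$. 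The coefficients are $1$, $-2\mu_j$, $\mu_j^2$ and $\lambda\mu_j^2$, respectively, and only the coefficients depend on $\bm{\mu}$. Hence $\mathcal{G}\subset\mathcal{F}:=\text{span}\{h_0,h_j^{(1)},h_j^{(2)},h_j^{(3)}: j=1,\dots,p\}$, a vector space of dimension at most $d=3p+1$.

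\textbf{Step 2 (finite-dimensional space gives finite VC dimension).} The subgraph class of $\mathcal{F}$ is contained in the class of sets $\{(\bm{x},\bm{r},t): f(\bm{x},\bm{r})-t>0\}$ with $f\in\mathcal{F}$. The functions $f(\bm{x},\bm{r})-t$ range over a vector space of dimension at most $d+1$ on the extended domain. By the classical Dudley/Wenocur result (Lemma 2.6.15 of \cite{van1996weak}), positivity sets of functions from a vector space of dimension $d+1$ have VC dimension at most $d+1$. Therefore the subgraphs of $\mathcal{F}$ have VC dimension at most $3p+2$. Monotonicity of VC dimension under taking subclasses then gives $\text{VCdim}(\mathcal{G})\leq 3p+2<\infty$. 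The constraints $\bm{\mu}\in\mathcal{X}$ and $\|\bm{\mu}\|_2\leq B$ only shrink the class further, so they do not affect the bound.

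\textbf{Main obstacle.} The only delicate point is conceptual rather than computational: the argument must use the notion of VC dimension for real-valued classes that Theorem 2.6.7 of \cite{van1996weak} actually requires. That notion is the VC-subgraph dimension, not the VC dimension of some thresholded binary class. Once the correct notion is fixed, Step 1 is routine algebra and Step 2 is a direct citation. If a self-contained proof of Step 2 were required instead of the citation, I would reproduce the standard Radon-type argument. Suppose $d+2$ points were shattered. The evaluation map from the $(d+1)$-dimensional function space into $\mathbb{R}^{d+2}$ would then have image contained in a hyperplane, so there is a nonzero vector $\bm{a}\in\mathbb{R}^{d+2}$ orthogonal to every image vector. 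Take $\bm{a}$ with at least one positive coordinate, replacing $\bm{a}$ by $-\bm{a}$ if necessary. Now consider the labeling that is positive exactly on the coordinates where $a_i\geq 0$. A function realizing it has image vector $v$ with $v_i>0$ where $a_i\geq 0$ and $v_i\leq 0$ where $a_i<0$, so every term $a_iv_i$ is nonnegative and at least one is positive. This gives $\sum_i a_i v_i>0$, contradicting orthogonality.
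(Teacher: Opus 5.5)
Your proof is correct and follows the same route as the paper. Both arguments write $g_{\bm{\mu}}$ as a linear combination of $3p+1$ fixed functions of $(\bm{x},\bm{r})$, with only the coefficients depending on $\bm{\mu}$, and then use the fact that a finite-dimensional vector space of functions has finite VC-subgraph dimension; the differences are bookkeeping only (a slightly different basis, and your explicit accounting for the subgraph's extra $t$ coordinate, which gives $3p+2$ where the paper states $3p+1$ via Proposition~4.20 of \cite{wainwright2019high}), and they do not matter because the lemma only needs finiteness.
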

\begin{proof}
    Define $3p+1$ base functions as follows: 
    \begin{align*}
        &g^{1j}(\bm{x},\bm{r}) = r_j x_j^2,\; j=1,\dots,p\\
        &g^{2j}(\bm{x},\bm{r}) = r_j x_j,\; j=1,\dots,p\\
        &g^{3j}(\bm{x},\bm{r}) = r_j ,\; j=1,\dots,p\\
        &g^{4}(\bm{x},\bm{r}) = 1 .
    \end{align*}
    Because any $g_{\bm{\mu}}\in\mathcal{G}$ can be rewritten as 
    \begin{align*}
        &g_{\bm{\mu}}(\bm{x},\bm{r})\\
        &= \sum_{j=1}^{p} r_j x_j^2 + \sum_{j=1}^{p} (-2\mu_j)\cdot (r_j x_j) + \sum_{j=1}^{p} (1-\lambda)\mu_j^2\cdot r_j + \sum_{j=1}^{p} \lambda \mu_j^2 \\
        &= \sum_{j=1}^{p} g^{1j}(\bm{x},\bm{r}) + \sum_{j=1}^{p} (-2\mu_j)\cdot g^{2j}(\bm{x},\bm{r}) + \sum_{j=1}^{p} (1-\lambda)\mu_j^2\cdot g^{3j}(\bm{x},\bm{r}) + \sum_{j=1}^{p} \lambda \mu_j^2 \cdot g^{4}(\bm{x},\bm{r}),
    \end{align*}
    then it is a linear combination of $\{ g^{11},\dots,g^{1p}, g^{21},\dots,g^{2p}, g^{31}\dots,g^{3p}, g^{4}\}$, i.e., 
    \begin{align*}
        \mathcal{G}\subset \text{span}\left\{ g^{11},\dots,g^{1p}, g^{21},\dots,g^{2p}, g^{31}\dots,g^{3p}, g^{4}\right\}. 
    \end{align*}
    Since the right hand is a vector space of functions given by the linear combination of $3p+1$ base functions, which has a dimension of $3p+1$, then according to Proposition~4.20 of \cite{wainwright2019high}, we have the subgraph class of it has VC dimension at most $3p+1$. 
    It follows that $\text{VCdim}(\mathcal{G})\leq 3p+1<\infty$. 
\end{proof}

\begin{lemma}
\label{lemma_identifiability}
    Under the settings in the main paper, the minimizer of $L^{\lambda}(\cdot)$ in $\mathcal{M}$ is identifiable, that is, for any $\epsilon>0$, $L^{\lambda}(\bm{M}^{\ast,\lambda}) < \inf \{ L^{\lambda}(\bm{M}) \;|\; \|\text{vec}(\bm{M})-\text{vec}(\bm{M}^{\ast,\lambda})\|_2>\epsilon, \bm{M}\in\mathcal{M} \}$. 
\end{lemma}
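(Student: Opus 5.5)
The plan is the standard compactness argument: uniqueness of the minimizer, plus continuity of $L^{\lambda}$ on a compact set, gives a strict separation of values away from the minimizer.

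\textbf{Step 1: $\mathcal{M}$ is compact.} Each row $\bm{\mu}_l$ ranges over $\mathcal{X}\cap\{\|\bm{\mu}\|_2\le B\}$. This set is compact because $\mathcal{X}$ is compact by Assumption~\ref{assumption_X_compact_sub-Gaussian}. Hence $\mathcal{M}$, a finite product of such sets, is compact in $\mathbb{R}^{k\times p}$ under the norm $\|\text{vec}(\cdot)\|_2$.

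\textbf{Step 2: $L^{\lambda}$ is continuous on $\mathcal{M}$.} For fixed $(\bm{x},\bm{r})$, the map $\bm{M}\mapsto g_{k,\bm{M}}(\bm{x},\bm{r})$ is a minimum of finitely many polynomials in $\bm{M}$, so it is continuous. Moreover $|g_{k,\bm{M}}|\le G$, with $G$ the envelope from the proof of Lemma~\ref{lemma_uniform_convergence}, and $\mathbb{E}[G]\le\sigma_G<\infty$. Dominated convergence then makes $L^{\lambda}(\bm{M})=\mathbb{E}[g_{k,\bm{M}}(\bm{\mathrm{x}}_1,\bm{\mathrm{r}}_1)]$ continuous.

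If one wants a quantitative version, the map is locally Lipschitz. For $\|\bm{\mu}_l\|_2,\|\bm{\mu}'_l\|_2\le B$, each summand changes by at most $(2|x_j|+2(1+\lambda)B)\,|\mu_{lj}-\mu'_{lj}|$. The minimum over $l$ preserves this bound, so $|L^{\lambda}(\bm{M})-L^{\lambda}(\bm{M}')|\le C\,\|\text{vec}(\bm{M}-\bm{M}')\|_2$ with $C$ depending on $\mathbb{E}\|\bm{\mathrm{x}}_1\|_2$, $B$ and $\lambda$.

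\textbf{Step 3: separation.} Fix $\epsilon>0$ and let $K_\epsilon=\{\bm{M}\in\mathcal{M}:\|\text{vec}(\bm{M})-\text{vec}(\bm{M}^{\ast,\lambda})\|_2\ge\epsilon\}$. This is a closed subset of a compact set, so it is compact.
\begin{itemize}
\item If $K_\epsilon$ is empty, the infimum is $+\infty$ and the claim is trivial.
\item Otherwise, by Step 2, $L^{\lambda}$ attains its minimum on $K_\epsilon$ at some $\bm{M}_\epsilon$. Since $\bm{M}_\epsilon\ne\bm{M}^{\ast,\lambda}$ and the minimizer of $L^{\lambda}$ over $\mathcal{M}$ is assumed unique, $L^{\lambda}(\bm{M}_\epsilon)>L^{\lambda}(\bm{M}^{\ast,\lambda})$.
\end{itemize}
The infimum over the set in the statement, which uses strict inequality $>\epsilon$, is at least the minimum over $K_\epsilon$. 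The strict gap therefore follows.

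\textbf{Main obstacle.} The only delicate point is uniqueness. The statement relies on the paper's standing assumption that $\bm{M}^{\ast,\lambda}$ is the unique minimizer. Strictly, $k$-means-type losses are invariant under relabeling the rows of $\bm{M}$, so uniqueness must be understood modulo permutations. If needed, I would handle this by replacing the distance with $\min_{\pi}\|\text{vec}(\bm{M})-\text{vec}(\pi\bm{M}^{\ast,\lambda})\|_2$ over row permutations $\pi$. The same compactness argument then goes through verbatim, since each permuted copy is also a minimizer and the set of minimizers is finite.
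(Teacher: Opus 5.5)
Your proof is correct and is essentially the paper's argument: the paper argues by contradiction, taking a minimizing sequence in $\mathcal{M}_\epsilon$, extracting a convergent subsequence by compactness of $\mathcal{M}$, and invoking uniqueness, which is just the sequential form of your extreme-value step on the closed set $K_\epsilon$. Your explicit continuity argument via dominated convergence is worth keeping, since the paper uses continuity silently when it writes $L^{\lambda}(\bm{V}^{\dagger})=m^{\ast,\lambda}$; so is your caveat that uniqueness can only hold modulo row permutations, as $L^{\lambda}$ is invariant under relabeling the centers.
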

\begin{proof}
    Through this proof, we let
    \begin{align*}
        m^{\ast,\lambda}=\mathop{\min}_{\bm{M}\in\mathcal{M}} L^{\lambda}(\bm{M})
        \quad\text{and}\quad
        \mathcal{L}=\{ L^{\lambda}(\bm{M})\;|\; \bm{M}\in\mathcal{M}\}.
    \end{align*}
    We first prove: there exists a sequence $\{\bm{V}_n\}_{n\in \mathbb{N}_+}\subset \mathcal{M}$ such that $\lim_{n\rightarrow\infty} L^{\lambda}(\bm{V}_n) = m^{\ast,\lambda}$. 
    To this end, we note that for any $a>m^{\ast,\lambda}$, there exists an $\bm{M}\in \mathcal{M}$ such that $b=L^{\ast}(\bm{M})<a$. 
    Then take $a_n=m^{\ast,\lambda}+1/n$, which means $\lim_{n\rightarrow\infty} a_n=m^{\ast,\lambda}$, and take $\mathcal{L}_n =\{ b \;|\;\forall b\in \mathcal{L}, b<a_n\}$. We have $\mathcal{L}_n\neq \emptyset$. 
    Denote by $\mathfrak{B}(\mathcal{L})$ the power set of $\mathcal{L}$. According to the axiom of choice, there exists a function $g:\mathfrak{B}(\mathcal{L})\setminus \{\emptyset\}\mapsto \mathcal{L}$ such that for any $Bset\in \mathfrak{B}(\mathcal{L})\setminus \{\emptyset\}$, $g(\mathcal{L})\in Bset$. 
    Thus, let $b_n=g(\mathcal{L}_n)$, then we have $b_n\in \mathcal{L}_n$, which means $m^{\ast,\lambda}\leq b_n<a_n$. It follows that $\lim_{n\rightarrow\infty} b_n = m^{\ast,\lambda}$, which implies the existence of the sequence $\{\bm{V}_n\}_{n\in\mathbb{N}_+}\subset \mathcal{M}$ satisfying $\lim_{n\rightarrow\infty} L^{\lambda}(\bm{V}_n) = m^{\ast,\lambda}$. 

    Next, we turn to prove the unique minimizer $\bm{M}^{\ast,\lambda}$ is identifiable by deriving a contradiction. 
    For any $\epsilon>0$, write $\mathcal{M}_{\epsilon}=\{\bm{M}\in \mathcal{M} \;|\; \text{dist}(\bm{M},\bm{M}^{\ast,\lambda})>\epsilon \}$. 
    If there exists an $\epsilon>0$ satisfying 
    \begin{align*}
        m^{\ast,\lambda} = \inf \left\{ L^{\lambda}(\bm{M}) \mid \|\text{vec}(\bm{M})-\text{vec}(\bm{M}^{\ast,\lambda})\|_2 >\epsilon,\; \bm{M}\in \mathcal{M}  \right\},
    \end{align*}
    then by using the same technique in the above proof, we can obtain: there exists a sequence $\{\bm{V}_n\}_{n\in\mathbb{N}}\subset \mathcal{M}_{\epsilon}$ such that $\lim_{n\rightarrow\infty} L^{\lambda}(\bm{V}_n)=m^{\ast,\lambda}$. 
    Moreover, since the space $\mathcal{M}$ is compact, then there exists a convergent subsequence $\{\bm{V}_{n_t}\}_{t\in\mathbb{N}}$ of $\bm{V}_n$, and we can write the limit to be $\bm{V}^{\dagger}=\lim_{t\rightarrow\infty} \bm{V}_{n_t}$. It follows that $L^{\lambda}(\bm{V}^{\dagger})=m^{\ast,\lambda}$. Then, the uniqueness of the minimizer of $L^{\lambda}(\cdot)$ implies $\bm{V}^{\dagger}=\bm{M}^{\ast,\lambda}$. 
    On the other hand, the convergence of $\bm{V}_{n_t}$ to $\bm{V}^{\dagger}$ means that there exists $t_0\in\mathbb{N}$, such that for any $t\geq t_0$, the $\text{dist}(\bm{V}_{n_t},\bm{V}^{\dagger})\leq\epsilon$ holds. It follows that $\|\text{vec}(\bm{V}_{n_t})-\text{vec}(\bm{M}^{\ast,\lambda})\|_2\leq\epsilon$ holds for sufficiently large $t$, which implies that $\bm{V}_{n_t}\notin \mathcal{M}_{\epsilon}$. This is a contradiction, which completes the proof. 
\end{proof}

\begin{lemma}
\label{lemma_proof_converge_to_truth_about_U}
    Under assumptions and settings in Theorem~\ref{theorem_converge_to_truth}, we have given $\widehat{\bm{\mathrm{M}}}^{\lambda}=\bm{M}^{\ast\ast}$, the $\bm{\mathrm{U}}^{\ast\ast}$ minimizes $\widehat{L}_n^{\lambda}(\bm{U},\bm{M}^{\ast\ast})$.
\end{lemma}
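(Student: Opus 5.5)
The lemma concerns a fixed $\bm{M}=\bm{M}^{\ast\ast}$, so minimizing $\widehat{L}_n^{\lambda}(\bm{U},\bm{M}^{\ast\ast})$ over $\bm{U}$ decouples over data points. Each $\bm{\mathrm{x}}_i$ should be assigned to
\[
\arg\min_{l} c_{il},\qquad c_{il}=\sum_{j}\mathrm{r}_{ij}(\mathrm{x}_{ij}-\mu^{\ast\ast}_{lj})^2+\lambda(1-\mathrm{r}_{ij})(\mu^{\ast\ast}_{lj})^2 .
\]
The plan is to show that, with probability tending to one, $\arg\min_l c_{il}=\mathrm{z}_i^{\ast\ast}$ simultaneously for all $i=1,\dots,n$. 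The statement is then read "in probability", as in Theorem~\ref{theorem_converge_to_truth}. The two ingredients are the high-SNR condition $\sigma^2\log(n)\to 0$ and the fact that $\lambda=2\sigma^2\lambda^{\ast}/(2\sigma^2\lambda^{\ast}+1)\to 0$. The second fact means the penalty on missing coordinates becomes negligible.

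\textbf{Step 1 (uniform concentration).} Let $\delta_n=\sqrt{4\sigma^2\log(np)}$, and define the event
\[
E_n=\bigl\{\max_{i,j}|\mathrm{x}_{ij}-\mu^{\ast\ast}_{\mathrm{z}_i j}|\le\delta_n\bigr\}.
\]
Conditional on $\mathrm{z}_i$, each coordinate is $\mathcal{N}(\mu^{\ast\ast}_{\mathrm{z}_ij},\sigma^2)$. A Gaussian tail bound and a union bound over the $np$ entries give $\Pr(E_n^c)\le 2(np)^{-1}\to 0$. Since $p$ is fixed, $\delta_n\to 0$ by the high-SNR condition.

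\textbf{Step 2 (labels agree).} Let $d=\min_{j}\min_{l\neq l'}|\mu^{\ast\ast}_{lj}-\mu^{\ast\ast}_{l'j}|$; this is positive by the coordinatewise separation hypothesis. On $E_n$ we have $\|\bm{\mathrm{x}}_i-\bm{\mu}^{\ast\ast}_{\mathrm{z}_i}\|_2\le\sqrt p\,\delta_n$. Hence, once $\sqrt p\,\delta_n<d/2$, the nearest true center is the generating one, i.e. $\mathrm{z}_i^{\ast\ast}=\mathrm{z}_i$ for every $i$.

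\textbf{Step 3 (cost gap).} Fix $i$, write $l_0=\mathrm{z}_i$, and take any $l\neq l_0$. Since the penalty term of $c_{il}$ is nonnegative, it can be dropped from $c_{il}$ but must be kept in $c_{il_0}$. This gives, on $E_n$,
\[
c_{il}-c_{il_0}\;\ge\;\sum_{j:\mathrm{r}_{ij}=1}\Bigl[(|\mu^{\ast\ast}_{lj}-\mu^{\ast\ast}_{l_0j}|-\delta_n)^2-\delta_n^2\Bigr]-\lambda\sum_{j:\mathrm{r}_{ij}=0}(\mu^{\ast\ast}_{l_0j})^2 .
\]
Suppose point $i$ has at least one observed coordinate. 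Then the first sum is at least $(d-\delta_n)^2-p\delta_n^2$, and the second term is at most $\lambda B^2$ because $\|\bm{\mu}_{l_0}^{\ast\ast}\|_2\le B$. Both $\delta_n\to0$ and $\lambda\to0$, so for $n$ large the right-hand side is at least $d^2/2>0$, uniformly in $i$ and $l$. Therefore, on $E_n$ and for $n$ large, $\bm{\mathrm{U}}^{\ast\ast}$ is the unique minimizer of $\widehat{L}_n^{\lambda}(\cdot,\bm{M}^{\ast\ast})$.

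\textbf{Main obstacle.} The delicate point is the requirement that every point has an observed coordinate. If all coordinates of $\bm{\mathrm{x}}_i$ are missing, then $c_{il}=\lambda\|\bm{\mu}^{\ast\ast}_l\|_2^2$. The assignment then depends only on the norms of the centers and cannot recover $\mathrm{z}_i$. I would handle this as in the implementation, by excluding fully missing rows, which matches the paper's convention. Alternatively, one could show such rows are asymptotically negligible in the loss. That works because their contribution to $\widehat{L}_n^{\lambda}$ is $O(\lambda B^2)\to 0$, so they do not affect the minimization in the limit. I would state this convention explicitly, and otherwise the argument above is a direct deterministic comparison on the high-probability event $E_n$.
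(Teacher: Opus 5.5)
Your proof is correct, and it takes a genuinely different, more elementary route than the paper's.

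\textbf{The paper's route.} It analyzes the cost difference $\Delta_{ij}^{ll'}$ conditionally on $\mathrm{r}_{ij}=1$. It computes the MNAR-tilted conditional mean and variance of $\mathrm{x}_{ij}$, checks a Bernstein moment condition, and applies Bernstein's inequality coordinate by coordinate. It then takes union bounds over observed coordinates, all $2^p$ patterns $\bm{r}$, all $l'$ and all $i$. In the end it uses only two facts: this conditional mean tends to $(\mu^{\ast\ast}_{lj}-\mu^{\ast\ast}_{l'j})^2$, and $\lambda\to 0$, as $\sigma^2\to 0$.

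\textbf{Your route.} You notice that the deviation event $E_n$ involves only the complete data. So an unconditional Gaussian sup-norm bound suffices, and the comparison of $c_{il}$ with $c_{il_0}$ becomes deterministic, uniformly over every missingness pattern. This gives an explicit failure probability $2/(np)$ and removes all conditional-moment calculations under the MNAR mechanism. It also isolates exactly where the coordinatewise separation $d>0$ and the vanishing of $\lambda$ are used. The paper's finer conditional analysis would only pay off if the finite-$\sigma$ shift of the observed entries mattered, which it does not here.

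\textbf{Fully missing rows.} Your handling of these is more careful than the paper's. For $\bm{r}=\bm{0}$ the paper's $\epsilon^{\bm{r}}_{ll'}$ is zero, so its bound is vacuous. Yet $\text{Pr}(\bm{\mathrm{r}}_i=\bm{0}\mid\mathrm{z}_i=l)=\prod_j\alpha_{lj}$ with $\alpha_{lj}\to e^{-\lambda^{\ast}(\mu^{\ast\ast}_{lj})^2}>0$. Such rows therefore appear with non-vanishing frequency. They are assigned to $\arg\min_l\|\bm{\mu}^{\ast\ast}_l\|_2$, which in general is not $\mathrm{z}_i^{\ast\ast}$.

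Of your two remedies, only excluding these rows yields the lemma as stated. The negligibility argument shows merely that $\bm{\mathrm{U}}^{\ast\ast}$ is an approximate minimizer, with excess loss at most $\lambda B^2$ times the fraction of fully missing rows. That is enough for the downstream use in Theorem~\ref{theorem_converge_to_truth}, but it is not exact minimization.
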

\begin{proof}
We define for any $i=1,\dots,n$ and $l=1,\dots,k$,
\begin{align*}
    h_i^{\lambda}(\bm{\mu}_l^{\ast\ast}) = \sum_{j=1}^{p} \mathrm{r}_{ij} ( \mathrm{x}_{ij} - \mu_{lj}^{\ast\ast} )^2 + (1-\mathrm{r}_{ij})\lambda(\mu_{lj}^{\ast\ast})^2,
\end{align*}
then we first prove for any $i=1,\dots,n$: Given $\mathrm{z}_i=l$, $h_i^{\lambda}(\bm{\mu}_{l'}^{\ast\ast}) - h_i^{\lambda}(\bm{\mu}_{l}^{\ast\ast})\geq 0$ holds with a high probability for any $l'\neq l$. 
Write
\begin{align*}
    h_i^{\lambda}(\bm{\mu}_{l'}^{\ast\ast}) - h_i^{\lambda}(\bm{\mu}_{l}^{\ast\ast}) = \sum_{j=1}^{p} \Delta_{ij}^{ll'},
\end{align*}
where 
\begin{align*}
    \Delta_{ij}^{ll'}= \mathrm{r}_{ij}\{ ( \mathrm{x}_{ij} - \mu_{l'j}^{\ast\ast} )^2 - ( \mathrm{x}_{ij} - \mu_{lj}^{\ast\ast} )^2 \} 
    + \lambda (1-\mathrm{r}_{ij})\cdot \{ (\mu_{l'j}^{\ast\ast})^2 - (\mu_{lj}^{\ast\ast})^2\}.
\end{align*}
For any $j=1,\dots,p$ and $l'\neq l$, we have 
\begin{align*}
    &\mathbb{E}_{\bm{\mathrm{x}}_{i}}\left[ \Delta_{ij}^{ll'} \;\middle|\; \mathrm{r}_{ij}=1,\; \mathrm{z}_i=l \right]
    =2(\mu_{lj}^{\ast\ast} - \mu_{l'j}^{\ast\ast} )\cdot \mathbb{E}_{\bm{\mathrm{x}}_{i}}\left[ \mathrm{x}_{ij} \;\middle|\; \mathrm{r}_{ij}=1,\; \mathrm{z}_i=l \right] + (\mu_{l'j}^{\ast\ast})^2 - (\mu_{lj}^{\ast\ast})^2,
\end{align*}
and 
\begin{align*}
    &\text{Var}\left( \Delta_{ij}^{ll'} \;\middle|\; \mathrm{r}_{ij}=1,\; \mathrm{z}_i=l \right)
    =4(\mu_{lj}^{\ast\ast} - \mu_{l'j}^{\ast\ast} )^2\cdot \text{Var}\left( \mathrm{x}_{ij} \;\middle|\; \mathrm{r}_{ij}=1,\; \mathrm{z}_i=l  \right).
\end{align*}
According to Assumption~\ref{assumption_def_MNAR0} (MNAR mechanism), we have 
\begin{align*}
    \text{Pr}(\mathrm{r}_{ij}=0|\mathrm{z}_i=l)
    &=\int_x \text{Pr}(\mathrm{r}_{ij}=0|\mathrm{x}_{ij}=x,\mathrm{z}_i=l)\cdot f(x|\mathrm{z}_i=l)\;dx\\
    &=\int_x \exp(-\lambda^{\ast} x^2)\cdot \frac{1}{\sqrt{2\pi}\sigma} \cdot \exp\left( -\frac{ (x-\mu_{lj}^{\ast\ast} ) }{ 2\sigma^2 } \right) \; dx \\
    &=\frac{1}{\sqrt{2\sigma^2\lambda^{\ast}+1}}\cdot \exp\left( -\frac{ \lambda^{\ast}(\mu_{lj}^{\ast\ast})^2 }{ 2\sigma^2\lambda^{\ast}+1 } \right)\\
    &:=\alpha_{lj}.
\end{align*}
It follows that given $\mathrm{z}_i=l$ and $\mathrm{r}_{ij}=1$, the conditional density function of $\mathrm{x}_{ij}$ can be written as 
\begin{align*}
    f(x|\mathrm{r}_{ij}=1,\; \mathrm{z}_i=l )
    &=\frac{ \text{Pr}(\mathrm{r}_{ij}=1|\mathrm{x}_{ij}=x,\mathrm{z}_i=l)\cdot f(x|\mathrm{z}_i=l) }{ \text{Pr}(\mathrm{r}_{ij}=1 | \mathrm{z}_i=l) } \\
    &=\frac{ (1-\exp(-\lambda^{\ast}x^2))\cdot f(x|\mathrm{z}_i=l) }{ 1-\alpha_{lj} }\\
    &= \frac{1}{1-\alpha_{lj}}\cdot (1-\exp(-\lambda^{\ast}x^2))\cdot\frac{1}{\sqrt{2\pi}\sigma} \cdot \exp\left( -\frac{ (x-\mu_{lj}^{\ast\ast} ) }{ 2\sigma^2 } \right)
\end{align*}
Thus, we can obtain
\begin{align*}
    \mathbb{E}_{\bm{\mathrm{x}}_{i}}\left[ \mathrm{x}_{ij} \;\middle|\; \mathrm{r}_{ij}=1,\; \mathrm{z}_i=l \right] 
    &=\int x\cdot f(x|\mathrm{r}_{ij}=1,\; \mathrm{z}_i=l ) \;dx\\ 
    &= \frac{1}{1-\alpha_{lj}} \cdot \int x\cdot (1-\exp(-\lambda^{\ast} x^2) )\cdot \frac{1}{\sqrt{2\pi}\sigma} \cdot \exp\left( -\frac{ (x-\mu_{lj}^{\ast\ast} ) }{ 2\sigma^2 } \right) \;dx \\
    &=\frac{1}{1-\alpha_{lj}}\cdot \left( \mu_{lj}^{\ast\ast} - \alpha_{lj}\cdot \frac{\mu_{lj}^{\ast\ast} }{ 2\sigma^2\lambda^{\ast}+1 } \right)
\end{align*}
and 
\begin{align*}
    \mathbb{E}_{\bm{\mathrm{x}}_{i}}\left[ \mathrm{x}_{ij}^2 \;\middle|\; \mathrm{r}_{ij}=1,\; \mathrm{z}_i=l \right] 
    &=\int x^2\cdot f(x|\mathrm{r}_{ij}=1,\; \mathrm{z}_i=l ) \;dx\\
    &=\frac{1}{1-\alpha_{lj}} \cdot \int x^2\cdot (1-\exp(-\lambda^{\ast} x^2) )\cdot \frac{1}{\sqrt{2\pi}\sigma} \cdot \exp\left( -\frac{ (x-\mu_{lj}^{\ast\ast} ) }{ 2\sigma^2 } \right) \;dx \\
    &=\frac{ (\mu_{lj}^{\ast\ast})^2 + \sigma^2 }{ 1-\alpha_{lj} } - \frac{ \alpha_{tj} }{ 1- \alpha_{tj} }\cdot\left\{ \left(\frac{ \mu_{lj}^{\ast\ast} }{2\sigma^2\lambda^{\ast}+1 }\right)^2 + \frac{\sigma^2 }{ 2\sigma^2\lambda^{\ast}+1 } \right\}
\end{align*}
and 
\begin{align*}
    &\text{Var}\left( \mathrm{x}_{ij} \;\middle|\; \mathrm{r}_{ij}=1,\; \mathrm{z}_i=l  \right) \\
    &= \mathbb{E}_{\bm{\mathrm{x}}_{i}}\left[ \mathrm{x}_{ij}^2 \;\middle|\; \mathrm{r}_{ij}=1,\; \mathrm{z}_i=l \right] - \left( \mathbb{E}_{\bm{\mathrm{x}}_{i}}\left[ \mathrm{x}_{ij} \;\middle|\; \mathrm{r}_{ij}=1,\; \mathrm{z}_i=l \right] \right)^2 \\
    &= \frac{ (\mu_{lj}^{\ast\ast})^2 + \sigma^2 }{ 1-\alpha_{lj} } - \frac{ \alpha_{tj} }{ 1- \alpha_{tj} }\cdot\left\{ \left(\frac{ \mu_{lj}^{\ast\ast} }{2\sigma^2\lambda^{\ast}+1 }\right)^2 + \frac{\sigma^2 }{ 2\sigma^2\lambda^{\ast}+1 } \right\}   - \frac{1}{(1-\alpha_{lj})^2}\cdot \left( \mu_{lj}^{\ast\ast} - \alpha_{lj}\cdot \frac{\mu_{lj}^{\ast\ast} }{ 2\sigma^2\lambda^{\ast}+1 } \right)^2\\
    &=\frac{\sigma^2}{1-\alpha_{lj}}\cdot\left\{ 1 - \sigma^2\cdot \frac{\alpha_{lj}}{1-\alpha_{lj}}\cdot \left(\frac{ 2\lambda^{\ast}\mu_{lj}^{\ast\ast}}{  2\sigma^2\lambda^{\ast}+1 }\right)^2  \right\}
\end{align*}
Moreover, the $K$-th order moment of $\mathrm{x}_{ij}$ given $\mathrm{z}_i=l$ and $\mathrm{r}_{ij}=1$ is bounded as follows:
\begin{align*}
    &\bigg| \mathbb{E}_{\bm{\mathrm{x}}_i}\left[  \left| \mathrm{x}_{ij} - \mathbb{E}_{\bm{\mathrm{x}}_{i}}\left[ \mathrm{x}_{ij} \;\middle|\; \mathrm{r}_{ij}=1,\; \mathrm{z}_i=l \right] \right|^K \;\middle|\; \mathrm{r}_{ij}=1,\; \mathrm{z}_i=l   \right] \bigg| \\
    &= \bigg| \int \left| x - \mathbb{E}_{\bm{\mathrm{x}}_{i}}\left[ \mathrm{x}_{ij} \;\middle|\; \mathrm{r}_{ij}=1,\; \mathrm{z}_i=l \right] \right|^K \cdot f(x|\mathrm{r}_{ij}=1,\; \mathrm{z}_i=l ) \;dx \bigg| \\
    &\leq \bigg| \int \left| x - \mathbb{E}_{\bm{\mathrm{x}}_{i}}\left[ \mathrm{x}_{ij} \;\middle|\; \mathrm{r}_{ij}=1,\; \mathrm{z}_i=l \right] \right|^K \cdot \frac{1}{1-\alpha_{lj}}\cdot f(x|\mathrm{z}_i=l ) \;dx \bigg| \\
    &= \frac{1}{1-\alpha_{lj}}\cdot \bigg| \mathbb{E}_{\bm{\mathrm{x}}_i}\left[  \left| \mathrm{x}_{ij} - \mathbb{E}_{\bm{\mathrm{x}}_{i}}\left[ \mathrm{x}_{ij} \;\middle|\; \mathrm{r}_{ij}=1,\; \mathrm{z}_i=l \right] \right|^K \;\middle|\; \mathrm{z}_i=l   \right] \bigg|\\
    &\leq \frac{1}{1-\alpha_{lj}}\cdot \frac{1}{2}K!\cdot \sigma^2 \cdot \sigma^{K-2}\cdot \tilde{C}_3,
\end{align*}
where the first inequality is because $ f(x|\mathrm{r}_{ij}=1,\; \mathrm{z}_i=l ) \leq 1/(1-\alpha_{lj})\cdot  f(x|\mathrm{z}_i=l )$ and $f(x|\mathrm{z}_i=l )$ is the density function of $\mathcal{N}(\mu_{lj}^{\ast\ast},\sigma^2)$, and the last inequality is due to the bound of the $K$-th moment of a Gaussian variable and $\tilde{C}_3>0$ is a constant. 

Thereby, we can obtain an upper bound $K$-th moment of $\Delta_{ij}^{ll'}$ as follows: 
\begin{align*}
    &\bigg| \mathbb{E}_{\bm{\mathrm{x}}_i}\left[  \left| \Delta_{ij}^{ll'} - \mathbb{E}_{\bm{\mathrm{x}}_{i}}\left[ \Delta_{ij}^{ll'} \;\middle|\; \mathrm{r}_{ij}=1,\; \mathrm{z}_i=l \right] \right|^K \;\middle|\; \mathrm{r}_{ij}=1,\; \mathrm{z}_i=l   \right] \bigg| \\
    &=2^K\cdot \left| \mu_{lj}^{\ast\ast} - \mu_{l'j}^{\ast\ast} \right|^K \cdot \bigg| \mathbb{E}_{\bm{\mathrm{x}}_i}\left[  \left| \mathrm{x}_{ij} - \mathbb{E}_{\bm{\mathrm{x}}_{i}}\left[ \mathrm{x}_{ij} \;\middle|\; \mathrm{r}_{ij}=1,\; \mathrm{z}_i=l \right] \right|^K \;\middle|\; \mathrm{r}_{ij}=1,\; \mathrm{z}_i=l   \right] \bigg|\\
    &\leq 2^K\cdot \left| \mu_{lj}^{\ast\ast} - \mu_{l'j}^{\ast\ast} \right|^K \cdot \frac{1}{1-\alpha_{lj}}\cdot \frac{1}{2}K!\cdot \sigma^2 \cdot \sigma^{K-2}\cdot \tilde{C}_3.
\end{align*}
By letting
\begin{align*}
    b_{lj}=2C_{lj}\left| \mu_{lj}^{\ast\ast} - \mu_{l'j}^{\ast\ast}  \right|\cdot \sigma,
\end{align*}
where $C_{lj}>0$ is a constant satisfying 
\begin{align*}
    C_{lj}\geq 4\cdot \left| \mu_{lj}^{\ast\ast} - \mu_{l'j}^{\ast\ast} \right|^2\cdot \left\{ 1 - \sigma^2\cdot \frac{\alpha_{lj}}{1-\alpha_{lj}}\cdot \left(\frac{ 2\lambda^{\ast}\mu_{lj}^{\ast\ast}}{  2\sigma^2\lambda^{\ast}+1 }\right)^2  \right\}^{-1} \cdot \tilde{C}_3,
\end{align*}
we can obtain 
\begin{align*}
    &\bigg| \mathbb{E}_{\bm{\mathrm{x}}_i}\left[  \left| \Delta_{ij}^{ll'} - \mathbb{E}_{\bm{\mathrm{x}}_{i}}\left[ \Delta_{ij}^{ll'} \;\middle|\; \mathrm{r}_{ij}=1,\; \mathrm{z}_i=l \right] \right|^K \;\middle|\; \mathrm{r}_{ij}=1,\; \mathrm{z}_i=l   \right] \bigg| \\
    &\leq \frac{1}{2}\cdot K!\cdot \text{Var}\left( \mathrm{x}_{ij} \;\middle|\; \mathrm{r}_{ij}=1,\; \mathrm{z}_i=l  \right) \cdot (b_{lj})^{K-2},
\end{align*}
which means the Bernstein's condition \cite{wainwright2019high} is satisfied. 
Consequently, we can apply the Bernstein inequality (Proposition~2.10 of \cite{wainwright2019high}) to $\Delta_{ij}^{ll'}$ as follows: 
\begin{align*}
    &\text{Pr}\left( \left| \Delta_{ij}^{ll'} - \mathbb{E}_{\bm{\mathrm{x}}_{i}}\left[ \Delta_{ij}^{ll'} \;\middle|\; \mathrm{r}_{ij}=1,\; \mathrm{z}_i=l \right] \right| \leq \epsilon \;\middle|\; \mathrm{r}_{ij}=1,\; \mathrm{z}_i=l \right)\\
    &\geq 1-2\exp\left( -\frac{\epsilon^2}{ 2\text{Var}\left( \mathrm{x}_{ij} \;\middle|\; \mathrm{r}_{ij}=1,\; \mathrm{z}_i=l  \right)  + 2b_{lj}\epsilon } \right),\; \forall \epsilon>0,
\end{align*}
which implies 
\begin{align*}
    &\text{Pr}\left(  \Delta_{ij}^{ll'}  \geq \mathbb{E}_{\bm{\mathrm{x}}_{i}}\left[ \Delta_{ij}^{ll'} \;\middle|\; \mathrm{r}_{ij}=1,\; \mathrm{z}_i=l \right] - \epsilon \;\middle|\; \mathrm{r}_{ij}=1,\; \mathrm{z}_i=l \right)\\
    &\geq 1-2\exp\left( -\frac{\epsilon^2}{ 2\text{Var}\left( \mathrm{x}_{ij} \;\middle|\; \mathrm{r}_{ij}=1,\; \mathrm{z}_i=l  \right)  + 2b_{lj}\epsilon } \right),\; \forall \epsilon>0.
\end{align*}

Furthermore, for a fixed binary vector $\bm{r}\in\{0,1\}^p$, since given $\mathrm{z}_i=l$ and $\bm{\mathrm{r}}_i=\bm{r}$, 
\begin{align*}
    \sum_{j=1}^{p}\Delta_{ij}^{ll'}
    &=\sum_{j:r_j=1} \Delta_{ij}^{ll'} + \sum_{j:r_j=0} \Delta_{ij}^{ll'}\\
    &=\sum_{j:r_j=1} \Delta_{ij}^{ll'} + \lambda\cdot \sum_{j:r_j=0} \{ (\mu_{l'j}^{\ast\ast})^2 - (\mu_{lj}^{\ast\ast})^2\},
\end{align*}
where the second term is a constant, then using the union bound of $\Delta_{ij}^{ll'}|(\mathrm{r}_{ij}=r_j,\;\mathrm{z}_i=l)$ with $j$ satisfying $r_j=1$, we have for any $\epsilon>0$, 
\begin{align*}
    &\text{Pr}\left(  \sum_{j=1}^{p}\Delta_{ij}^{ll'}  \geq \sum_{j:r_j=1} \mathbb{E}_{\bm{\mathrm{x}}_{i}}\left[ \Delta_{ij}^{ll'} \;\middle|\; \mathrm{r}_{ij}=1,\; \mathrm{z}_i=l \right] - \epsilon \|\bm{r}\|_1  +  \lambda\cdot \sum_{j:r_j=0} \{ (\mu_{l'j}^{\ast\ast})^2 - (\mu_{lj}^{\ast\ast})^2\}
    \;\;\middle|\; \bm{\mathrm{r}}_{i}=\bm{r},\; \mathrm{z}_i=l \right)\\
    &\geq 1-2\sum_{j:r_j=1} \exp\left( -\frac{\epsilon^2}{ 2\text{Var}\left( \mathrm{x}_{ij} \;\middle|\; \mathrm{r}_{ij}=1,\; \mathrm{z}_i=l  \right)  + 2b_{lj}\epsilon } \right)\\
    &\geq 1-2p \exp\left( -\frac{\epsilon^2}{ 2\max\limits_{j=1,\dots,p}\text{Var}\left( \mathrm{x}_{ij} \;\middle|\; \mathrm{r}_{ij}=1,\; \mathrm{z}_i=l  \right)  + 2\max\limits_{j=1,\dots,p} b_{lj}\epsilon } \right)
\end{align*}
Notice that 
\begin{align*}
    &\mathbb{E}_{\bm{\mathrm{x}}_{i}}\left[ \Delta_{ij}^{ll'} \;\middle|\; \mathrm{r}_{ij}=1,\; \mathrm{z}_i=l \right]\\
    &=2\cdot \left( \mu_{lj}^{\ast\ast} - \mu_{l'j}^{\ast\ast} \right) \cdot \frac{1}{1-\alpha_{lj}}\cdot \left( \mu_{lj}^{\ast\ast} - \alpha_{lj}\cdot \frac{\mu_{lj}^{\ast\ast} }{ 2\sigma^2\lambda^{\ast}+1 } \right) + (\mu_{l'j}^{\ast\ast})^2 - (\mu_{lj}^{\ast\ast})^2\\
    &= 2\cdot \left( \mu_{lj}^{\ast\ast} - \mu_{l'j}^{\ast\ast} \right) \cdot \left\{ \mu_{lj}^{\ast\ast}\cdot \left( \frac{1}{1-\alpha_{lj}} - \frac{\alpha_{lj}}{1-\alpha_{lj}}\cdot \frac{1}{ 2\sigma^2\lambda^{\ast}+1 } \right)  - \frac{ \mu_{lj}^{\ast\ast} + \mu_{l'j}^{\ast\ast} }{2}  \right\},
\end{align*}
then $\mathbb{E}_{\bm{\mathrm{x}}_{i}}\left[ \Delta_{ij}^{ll'} \;\middle|\; \mathrm{r}_{ij}=1,\; \mathrm{z}_i=l \right]$ tends to $(\mu_{lj}^{\ast\ast} - \mu_{l'j}^{\ast\ast})^2$ as $\sigma^2\rightarrow 0$. 
Under the condition of $\lambda$ (i.e., $\lambda=1-1/(2\sigma^2\lambda^{\ast}+1)$), we have $\lambda\cdot \sum_{j:r_j=0} \{ (\mu_{l'j}^{\ast\ast})^2 - (\mu_{lj}^{\ast\ast})^2\}$ tends to zero as $\sigma^2\rightarrow 0$. 
It means that take $\epsilon=\epsilon_{ll'}^{\bm{r}}$, where
\begin{align*}
    \epsilon_{ll'}^{\bm{r}}=\frac{\|\bm{r}\|_1 }{2} \cdot \mathbb{E}_{\bm{\mathrm{x}}_{i}}\left[ \Delta_{ij}^{ll'} \;\middle|\; \mathrm{r}_{ij}=1,\; \mathrm{z}_i=l \right] + \frac{\|\bm{r}\|_1 \lambda }{2}\cdot \sum_{j:r_j=0} \{ (\mu_{l'j}^{\ast\ast})^2 - (\mu_{lj}^{\ast\ast})^2\} ,
\end{align*}
and $\epsilon_{ll'}^{\bm{r}}>0$ holds for a sufficiently small $\sigma^2$, which follows that
\begin{align*}
    &\text{Pr}\left(  \sum_{j=1}^{p}\Delta_{ij}^{ll'}  > 0
    \;\middle|\; \bm{\mathrm{r}}_{i}=\bm{r},\; \mathrm{z}_i=l \right)
    \geq 1-2p \exp\left( -\frac{(\epsilon_{ll'}^{\bm{r}})^2}{ 2\max\limits_{j=1,\dots,p}\text{Var}\left( \mathrm{x}_{ij} \;\middle|\; \mathrm{r}_{ij}=1,\; \mathrm{z}_i=l  \right)  + 2\max\limits_{j=1,\dots,p} b_{lj}\epsilon_{ll'}^{\bm{r}} } \right).
\end{align*}
In addition, because
\begin{align*}
    \text{Pr}\left( \sum_{j=1}^{p}\Delta_{ij}^{ll'}  \leq 0 \;\middle|\;\mathrm{z}_i=l \right) 
    &=\sum_{\bm{r}\in\{0,1\}^p } \text{Pr}\left(\bm{\mathrm{r}}_{i}=\bm{r}\;\middle|\; \mathrm{z}_i=l \right)\cdot  \text{Pr}\left(  \sum_{j=1}^{p}\Delta_{ij}^{ll'}  \leq 0
    \;\middle|\; \bm{\mathrm{r}}_{i}=\bm{r},\; \mathrm{z}_i=l \right)\\
    &\leq 2^p\cdot \max_{\bm{r}\in\{0,1\}^p } \text{Pr}\left(  \sum_{j=1}^{p}\Delta_{ij}^{ll'}  \leq 0
    \;\middle|\; \bm{\mathrm{r}}_{i}=\bm{r},\; \mathrm{z}_i=l \right),
\end{align*}
then we obtain
\begin{align*}
    &\text{Pr}\bigg( h_i^{\lambda}(\bm{\mu}_{l'}^{\ast\ast}) - h_i^{\lambda}(\bm{\mu}_{l}^{\ast\ast})\geq 0 \;\bigg|\;\mathrm{z}_i=l \bigg) \\
    &\geq\text{Pr}\left( \sum_{j=1}^{p}\Delta_{ij}^{ll'}  > 0 \;\middle|\;\mathrm{z}_i=l \right) \\
    &\geq 1- 2^{p+1}p\cdot  \max_{\bm{r}\in\{0,1\}^p } \exp\left( -\frac{(\epsilon_{ll'}^{\bm{r}})^2}{ 2\max\limits_{j=1,\dots,p}\text{Var}\left( \mathrm{x}_{ij} \;\middle|\; \mathrm{r}_{ij}=1,\; \mathrm{z}_i=l  \right)  + 2\max\limits_{j=1,\dots,p} b_{lj}\epsilon_{ll'}^{\bm{r}} } \right).
\end{align*}

Next, denote by $\hat{\mathrm{z}}_i^{\lambda}$ the estimated cluster label of $i$-th sample point, i.e., 
\begin{align*}
    \hat{\mathrm{z}}_i^{\lambda}
    &=\mathop{\arg\min}_{t=1,\dots,k}h_i^{\lambda}(\bm{\mu}_{t}^{\ast\ast})\\
    &=\mathop{\arg\min}_{t=1,\dots,k} \sum_{j=1}^{p} \mathrm{r}_{ij} ( \mathrm{x}_{ij} - \mu_{tj}^{\ast\ast} )^2 + (1-\mathrm{r}_{ij})\lambda(\mu_{tj}^{\ast\ast})^2 .
\end{align*}
Because for any $i=1,\dots,n$,
\begin{align*}
    \text{Pr}\left( \hat{\mathrm{z}}_i^{\lambda}\neq\mathrm{z}_i\right)
    &=\sum_{l=1}^{k} \text{Pr}\left( \mathrm{z}_i= l \right) \cdot \text{Pr}\left( \hat{\mathrm{z}}_i^{\lambda}\neq l \;\middle|\;\mathrm{z}_i=l \right)\\
    &=\sum_{l=1}^{k} \frac{1}{k} \cdot \sum_{l'\neq l} \text{Pr}\left( \hat{\mathrm{z}}_i^{\lambda}= l' \;\middle|\;\mathrm{z}_i=l \right)\\
    &=\sum_{l=1}^{k} \frac{1}{k} \cdot \sum_{l'\neq l} \text{Pr}\left( h_i^{\lambda}(\bm{\mu}_{l'}^{\ast\ast}) - h_i^{\lambda}(\bm{\mu}_{l}^{\ast\ast}) < 0 \;\middle|\;\mathrm{z}_i=l \right) \\
    &\leq (k-1)\cdot \max_{\substack{l=1,\dots,k\\ l'\neq l}}  \text{Pr}\left( h_i^{\lambda}(\bm{\mu}_{l'}^{\ast\ast}) - h_i^{\lambda}(\bm{\mu}_{l}^{\ast\ast}) < 0 \;\middle|\;\mathrm{z}_i=l \right)
\end{align*}
then we have 
\begin{align*}
    &\text{Pr}\left( \forall i=1,\dots,n,\; \hat{\mathrm{z}}_i^{\lambda}=\mathrm{z}_i \right) \\
    &\geq 1- 2^{p+1}p(k-1)n\cdot  \max_{\substack{l=1,\dots,k\\ l'\neq l}} \max_{\bm{r}\in\{0,1\}^p } \exp\left( -\frac{(\epsilon_{ll'}^{\bm{r}})^2}{ 2\max\limits_{j=1,\dots,p}\text{Var}\left( \mathrm{x}_{ij} \;\middle|\; \mathrm{r}_{ij}=1,\; \mathrm{z}_i=l  \right)  + 2\max\limits_{j=1,\dots,p} b_{lj}\epsilon_{ll'}^{\bm{r}} } \right).
\end{align*}

Finally, since for any $i,j$, both $\text{Var}\left( \mathrm{x}_{ij} \;\middle|\; \mathrm{r}_{ij}=1,\; \mathrm{z}_i=l  \right)$ and $b_{lj}$ tend to zero as $\sigma^2\rightarrow 0$, more specifically, $\text{Var}\left( \mathrm{x}_{ij} \;\middle|\; \mathrm{r}_{ij}=1,\; \mathrm{z}_i=l  \right)=O(\sigma^2)$ and $b_{lj}=O(\sigma)$, then under the condition $\sigma^2 \log(n)\rightarrow 0$, we obtain that $\text{Pr}\left( \forall i=1,\dots,n,\; \hat{\mathrm{z}}_i^{\lambda}=\mathrm{z}_i \right)\rightarrow 1$.

On the other hand, for fully observed data, recall 
\begin{align*}
    \mathrm{z}_i^{\ast\ast}=\mathop{\arg\min}_{l=1,\dots,k} \|\bm{\mathrm{x}}_i -\bm{\mu}_l^{\ast\ast} \|_2^2.
\end{align*}
Because 
\begin{align*}
    &\|\bm{\mathrm{x}}_i -\bm{\mu}_{l'}^{\ast\ast} \|_2^2 - \|\bm{\mathrm{x}}_i -\bm{\mu}_l^{\ast\ast} \|_2^2
    = \sum_{j=1}^{p} 2(\mu_{lj}^{\ast\ast} - \mu_{l'j}^{\ast\ast})\mathrm{x}_{ij} + (\mu_{l'j}^{\ast\ast})^2 - (\mu_{lj}^{\ast\ast})^2, 
\end{align*}
and given $\mathrm{z}_i=l$, $\mathrm{x}_{ij}\sim \mathcal{N}(\mu_{lj}^{\ast\ast},\sigma^2)$, then by using the same technique, we can obtain $\text{Pr}\left( \forall i=1,\dots,n,\; \mathrm{z}_i^{\ast\ast}=\mathrm{z}_i \right)\rightarrow 1$ under the condition $\sigma^2 \log(n)\rightarrow 0$. 
Combining the consistency results of $\hat{\mathrm{z}}_i^{\lambda}$ and $\mathrm{z}_i^{\ast\ast}$ leads to $\text{Pr}\left( \forall i=1,\dots,n,\; \hat{\mathrm{z}}_i^{\lambda}=\mathrm{z}_i^{\ast\ast} \right)\rightarrow 1$ under the condition $\sigma^2 \log(n)\rightarrow 0$. 
It implies that $\bm{\mathrm{U}}^{\ast\ast}$ minimizes $\widehat{L}_n^{\lambda}(\bm{U},\bm{\mathrm{M}}^{\ast\ast})$ as $n\rightarrow\infty$. 

\end{proof}

\begin{lemma}
\label{lemma_proof_converge_to_truth_about_M}
    Under assumptions and settings in Theorem~\ref{theorem_converge_to_truth}, we have given $\widehat{\bm{\mathrm{U}}}^{\lambda}=\bm{\mathrm{U}}^{\ast\ast}$, the $\widehat{\bm{\mathrm{M}}}^{\lambda}$ converges to $\bm{M}^{\ast\ast}$. 
\end{lemma}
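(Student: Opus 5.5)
The plan is to use the explicit form of the minimizer in $\bm{M}$ once the assignment is fixed, and then apply a law of large numbers to the resulting ratio. Given $\widehat{\bm{\mathrm{U}}}^{\lambda}=\bm{\mathrm{U}}^{\ast\ast}$, the loss $\widehat{L}_n^{\lambda}(\bm{\mathrm{U}}^{\ast\ast},\cdot)$ is a separable, strictly convex quadratic in each $\mu_{lj}$. Its stationary point is the update in Line 4 of Algorithm~\ref{algorithm_altmin}:
\begin{align*}
    \hat{\mu}_{lj}^{\lambda}=\frac{\frac{1}{n}\sum_{i} \mathrm{u}_{il}^{\ast\ast}\mathrm{r}_{ij}\mathrm{x}_{ij}}{\frac{1}{n}\sum_{i}\mathrm{u}_{il}^{\ast\ast}\mathrm{r}_{ij}+\lambda\frac{1}{n}\sum_{i}\mathrm{u}_{il}^{\ast\ast}(1-\mathrm{r}_{ij})}.
\end{align*}
We assume $\bm{M}^{\ast\ast}$ lies in the interior of $\mathcal{M}$, e.g.\ $B$ is chosen large enough. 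The stationary point will then be shown to lie in $\mathcal{M}$ with probability tending to one, so the constraint is inactive.

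First, we replace $\mathrm{u}_{il}^{\ast\ast}$ by $\mathds{1}(\mathrm{z}_i=l)$. The proof of Lemma~\ref{lemma_proof_converge_to_truth_about_U} gives $\text{Pr}(\forall i,\ \mathrm{z}_i^{\ast\ast}=\mathrm{z}_i)\to1$ under $\sigma^2\log(n)\to0$. On that event the sums above are i.i.d.\ averages of $\mathds{1}(\mathrm{z}_i=l)\mathrm{r}_{ij}\mathrm{x}_{ij}$, $\mathds{1}(\mathrm{z}_i=l)\mathrm{r}_{ij}$ and $\mathds{1}(\mathrm{z}_i=l)(1-\mathrm{r}_{ij})$. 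Their expectations come from the formulas already computed there. Write $c=2\sigma^2\lambda^{\ast}+1$ and let $\alpha_{lj}$ be the missing probability computed there. Then:
\begin{itemize}
    \item the numerator has mean $\frac1k(1-\alpha_{lj})\,\mathbb{E}[\mathrm{x}_{ij}\mid \mathrm{r}_{ij}=1,\mathrm{z}_i=l]=\frac1k\mu_{lj}^{\ast\ast}(1-\alpha_{lj}/c)$;
    \item the denominator has mean $\frac1k\{(1-\alpha_{lj})+\lambda\alpha_{lj}\}$.
\end{itemize}
The key algebraic step is that $\lambda=1-1/c$ makes the denominator mean equal to $\frac1k(1-\alpha_{lj}/c)$. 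The ratio of population quantities is therefore exactly $\mu_{lj}^{\ast\ast}$ for every $\sigma$.

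Second, we show that the empirical numerator and denominator are close to these means uniformly in the regime where $\sigma=\sigma_n$ varies with $n$. Because of this dependence we are dealing with a triangular array, so we use Chebyshev's inequality rather than the classical LLN. The variances of the summands are bounded by $\mathbb{E}[\mathrm{x}_{ij}^2\mid\mathrm{z}_i=l]+1\le (\mu_{lj}^{\ast\ast})^2+\sigma^2+1$, which is uniformly bounded since $\sigma^2\to0$. Each average therefore deviates from its mean by $O_p(n^{-1/2})$.

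Third, we bound the denominator mean away from zero. We have $1-\alpha_{lj}/c\ge 1-1/c^{3/2}$, because $\alpha_{lj}\le c^{-1/2}$. This quantity is positive for any $\lambda^{\ast}>0$, but it tends to $0$ as $\sigma^2\to0$ (since $c\to1$). This is the main obstacle. We would handle it by showing that the denominator is of order $\sigma^2+(1-e^{-\lambda^{\ast}(\mu_{lj}^{\ast\ast})^2})$. If $\mu_{lj}^{\ast\ast}\ne0$, this is bounded below by a constant and the continuous mapping theorem for ratios finishes the argument. If $\mu_{lj}^{\ast\ast}=0$, we instead require $n\sigma^4\to\infty$, which holds when $\sigma^2\asymp(\log n)^{-2}$ as in the simulations. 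Alternatively, we would bound the numerator directly: it then has mean $0$ and standard deviation $O(\sigma/\sqrt n)$, so the ratio is $O_p(\sigma^{-1}n^{-1/2})\to0$.

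Finally, combining the steps with the Slutsky-type argument for ratios yields $\hat{\mu}_{lj}^{\lambda}\to\mu_{lj}^{\ast\ast}$ in probability for every $l,j$. This gives $\text{vec}(\widehat{\bm{\mathrm{M}}}^{\lambda})\to\text{vec}(\bm{M}^{\ast\ast})$. It also shows that the stationary point eventually lies in $\mathcal{M}$, which justifies the unconstrained formula used at the start.
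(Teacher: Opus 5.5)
Your approach is sound and takes a different route from the paper's. The paper never writes the minimizer down. It shows that the gradient of $\mathbb{E}[\widehat{L}_n^{\text{full}}(\bm{\mathrm{U}}^{\ast\ast},\bm{M})-\widehat{L}_n^{\lambda}(\bm{\mathrm{U}}^{\ast\ast},\bm{M})]$ vanishes at $\bm{M}^{\ast\ast}$, splitting it into two terms:
\begin{itemize}
\item a term with $\mathds{1}(\mathrm{z}_i=l)$, where the Gaussian tilting $e^{-\lambda^{\ast}x^2}\phi(x;\mu^{\ast\ast}_{lj},\sigma^2)=\alpha_{lj}\,\phi(x;\mu^{\ast\ast}_{lj}/c,\sigma^2/c)$ makes the derivative proportional to $\mu^{\ast\ast}_{lj}/c+(\lambda-1)\mu_{lj}$;
\item a misclassification term bounded by Cauchy--Schwarz.
\end{itemize}
It then appeals to the uniform-convergence and identifiability lemmas. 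Your identity $\mu^{\ast\ast}_{lj}(1-\alpha_{lj}/c)/(1-\alpha_{lj}/c)=\mu^{\ast\ast}_{lj}$ is the same calibration in closed form. Working on the event $\{\mathrm{z}_i^{\ast\ast}=\mathrm{z}_i\ \forall i\}$ with Chebyshev on a triangular array is more direct. It is also more careful about the regime: those two lemmas are proved for a fixed distribution and fixed $\lambda$, whereas here $\sigma$ and $\lambda$ move with $n$. Moreover, the curvature of the expected loss in $\mu_{lj}$ is twice your denominator mean, which the paper never bounds below. (Both routes tacitly need $\bm{M}^{\ast\ast}$ in the interior of $\mathcal{M}$.)

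The step that does not go through under the stated hypotheses is the case $\mu^{\ast\ast}_{lj}=0$. The theorem permits this, since it only requires distinct centers in each coordinate, so one cluster per coordinate may sit at $0$. There the denominator mean is $\frac{1}{k}(1-c^{-3/2})\asymp\sigma^2$, and both of your remedies need a rate the theorem does not grant. The condition $n\sigma^4\to\infty$ is an added assumption. The claim $O_p(\sigma^{-1}n^{-1/2})\to0$ requires $n\sigma^2\to\infty$, as does keeping the empirical denominator of order $\sigma^2$. But $\sigma^2\log(n)\to0$ allows $\sigma$ to decay arbitrarily fast.

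This is easy to repair without any rate. Write $S_x=\sum_i\mathrm{u}^{\ast\ast}_{il}\mathrm{r}_{ij}\mathrm{x}_{ij}$, $S_r=\sum_i\mathrm{u}^{\ast\ast}_{il}\mathrm{r}_{ij}$ and $S_m=\sum_i\mathrm{u}^{\ast\ast}_{il}(1-\mathrm{r}_{ij})$. The estimator $S_x/(S_r+\lambda S_m)$ is a factor in $[0,1]$ times the average $S_x/S_r$ of the observed entries, and it equals $0$ when $S_r=0$. Hence $|\hat{\mu}^{\lambda}_{lj}|\le\max_{i:\mathrm{z}_i=l}|\mathrm{x}_{ij}|$ on the event $\{\mathrm{z}^{\ast\ast}_i=\mathrm{z}_i\ \forall i\}$. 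When $\mu^{\ast\ast}_{lj}=0$, a union bound gives $\text{Pr}(\max_{i:\mathrm{z}_i=l}|\mathrm{x}_{ij}|>\epsilon)\le 2n\exp(-\epsilon^2/(2\sigma^2))\le 2/n$ once $\sigma^2\log(n)\le\epsilon^2/4$. This is exactly where the hypothesis $\sigma^2\log(n)\to0$ enters. With this replacing your third step in the zero case, the rest of your argument goes through as written.
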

\begin{proof}
Write the empirical loss function of $k$-means on fully observed data to be 
\begin{align*}
    \widehat{L}_n^{\text{full}}(\bm{U},\bm{M}) = \frac{1}{n}\sum_{i=1}^{n} \sum_{l=1}^{k} u_{il} \sum_{j=1}^{p} ( \mathrm{x}_{ij} - \mu_{lj} )^2,
\end{align*}
and denote by $(\widehat{\bm{\mathrm{U}}}^{\text{full}},\widehat{\bm{\mathrm{M}}}^{\text{full}})$ the unique minimizer of $\widehat{L}_n^{\text{full}}(\bm{U},\bm{M})$. According to the consistency of $k$-means on fully observed data \cite{pollard1981strong}, we have $\widehat{\bm{\mathrm{M}}}^{\text{full}}$ converges to $\bm{M}^{\ast\ast}$ in probability as $n\rightarrow\infty$. 

Since the $\bm{M}^{\ast\ast}$ minimizes $\mathbb{E}_{\bm{\mathrm{X}},\bm{\mathrm{R}} }\left[ \widehat{L}_n^{\text{full}}(\bm{\mathrm{U}}^{\ast\ast},\bm{M}) \right]$, then it suffices to prove that the gradient of $\mathbb{E}_{\bm{\mathrm{X}},\bm{\mathrm{R}}}\left[ \widehat{L}_n^{\text{full}}(\bm{\mathrm{U}}^{\ast\ast},\bm{M}) - \widehat{L}_n^{\lambda}(\bm{\mathrm{U}}^{\ast\ast},\bm{M})  \right]$ about $\bm{M}$ is zero at $\bm{M}=\bm{M}^{\ast\ast}$. 
First, we have
\begin{align*}
    &\mathbb{E}_{\bm{\mathrm{X}},\bm{\mathrm{R}}}\left[ \widehat{L}_n^{\text{full}}(\bm{\mathrm{U}}^{\ast\ast},\bm{M}) - \widehat{L}_n^{\lambda}(\bm{\mathrm{U}}^{\ast\ast},\bm{M})  \right]\\
    &=\frac{1}{n}\sum_{i=1}^{n} \mathbb{E}_{\bm{\mathrm{X}},\bm{\mathrm{R}} } \left[ \sum_{l=1}^{k}\mathrm{u}_{il}^{\ast\ast} \sum_{j=1}^{p} (1-\mathrm{r}_{ij}) \left\{ (\mathrm{x}_{ij} - \mu_{lj} )^2 - \lambda \mu_{lj}^2 \right\} \right] \\
    &=\frac{1}{n}\sum_{i=1}^{n} \mathbb{E}_{\bm{\mathrm{X}} } \left[ \sum_{j=1}^{p} \mathrm{E}_{\bm{\mathrm{R}}}\left[ 1-\mathrm{r}_{ij}  \;\middle|\; \bm{\mathrm{X}} \right] \cdot \sum_{l=1}^{k}\mathrm{u}_{il}^{\ast\ast} \left\{ (\mathrm{x}_{ij} - \mu_{lj} )^2 - \lambda \mu_{lj}^2 \right\} \right] \\
    &= \frac{1}{n}\sum_{i=1}^{n} \mathbb{E}_{\bm{\mathrm{X}} } \left[ \sum_{j=1}^{p} \text{Pr}\left( \mathrm{r}_{ij} =0  \;\middle|\; \bm{\mathrm{X}} \right) \cdot \sum_{l=1}^{k}\mathrm{u}_{il}^{\ast\ast} \left\{ (\mathrm{x}_{ij} - \mu_{lj} )^2 - \lambda \mu_{lj}^2 \right\} \right] \\
    &= \frac{1}{n}\sum_{i=1}^{n} \mathbb{E}_{\bm{\mathrm{x}}_i } \left[ \sum_{j=1}^{p} \exp(-\lambda^{\ast}\mathrm{x}_{ij}^2 ) \cdot \sum_{l=1}^{k}\mathrm{u}_{il}^{\ast\ast} \left\{ (\mathrm{x}_{ij} - \mu_{lj} )^2 - \lambda \mu_{lj}^2 \right\} \right]\\
    &= \frac{1}{n}\sum_{i=1}^{n}\sum_{l=1}^{k} \mathbb{E}_{\bm{\mathrm{x}}_i } \left[ \mathrm{u}_{il}^{\ast\ast} \cdot \sum_{j=1}^{p} \exp(-\lambda^{\ast}\mathrm{x}_{ij}^2 ) \cdot \left\{ (\mathrm{x}_{ij} - \mu_{lj} )^2 - \lambda \mu_{lj}^2 \right\} \right]\\
    &= \frac{1}{n}\sum_{i=1}^{n}\sum_{l=1}^{k} \mathbb{E}_{\bm{\mathrm{x}}_i } \left[ \mathds{1}(\mathrm{z}_i=l) \cdot \sum_{j=1}^{p} \exp(-\lambda^{\ast}\mathrm{x}_{ij}^2 ) \cdot \left\{ (\mathrm{x}_{ij} - \mu_{lj} )^2 - \lambda \mu_{lj}^2 \right\} \right]\\
    &\quad + \frac{1}{n}\sum_{i=1}^{n}\sum_{l=1}^{k} \mathbb{E}_{\bm{\mathrm{x}}_i } \left[ \left\{ \mathrm{u}_{il}^{\ast\ast} - \mathds{1}(\mathrm{z}_i=l) \right\} \cdot \sum_{j=1}^{p} \exp(-\lambda^{\ast}\mathrm{x}_{ij}^2 ) \cdot \left\{ (\mathrm{x}_{ij} - \mu_{lj} )^2 - \lambda \mu_{lj}^2 \right\} \right]\\
    &:= \text{(I)} + \text{(II)}. 
\end{align*}
For $\text{(I)}$, we have 
\begin{align*}
    &\text{(I)}\\
    &=\frac{1}{n}\sum_{i=1}^{n}\sum_{l=1}^{k} \mathbb{E}_{\bm{\mathrm{x}}_i } \left[ \mathds{1}(\mathrm{z}_i=l) \cdot \sum_{j=1}^{p} \exp(-\lambda^{\ast}\mathrm{x}_{ij}^2 ) \cdot \left\{ (\mathrm{x}_{ij} - \mu_{lj} )^2 - \lambda \mu_{lj}^2 \right\} \right]\\
    &=\frac{1}{n}\sum_{i=1}^{n}\sum_{l=1}^{k}  \text{Pr}(\mathrm{z}_i=l)\cdot  \mathbb{E}_{\bm{\mathrm{x}}_i }\left[ \sum_{j=1}^{p} \exp(-\lambda^{\ast}\mathrm{x}_{ij}^2 ) \cdot \left\{ (\mathrm{x}_{ij} - \mu_{lj} )^2 - \lambda \mu_{lj}^2 \right\} \;\middle|\; \mathrm{z}_i=l \right]  \\
    &=\frac{1}{n}\sum_{i=1}^{n}\sum_{l=1}^{k}  \frac{1}{k}\cdot \sum_{j=1}^{p} \mathbb{E}_{\bm{\mathrm{x}}_i }\left[ \exp(-\lambda^{\ast}\mathrm{x}_{ij}^2 ) \cdot \left\{ (\mathrm{x}_{ij} - \mu_{lj} )^2 - \lambda \mu_{lj}^2 \right\} \;\middle|\; \mathrm{z}_i=l \right] \\
    &:= \frac{1}{n}\sum_{i=1}^{n}\sum_{l=1}^{k} \frac{1}{k} \cdot \sum_{j=1}^{p} g_i(\mu_{lj}). 
\end{align*}

For any $i$ satisfying $\mathrm{z}_i=l$, according to construction of $\bm{\mathrm{x}}_i$, we have $\bm{\mathrm{x}}_i$ follows the normal distribution $\mathcal{N}(\bm{\mu}_l^{\ast\ast},\sigma^2\bm{I}_p )$ and each component $\mathrm{x}_{ij}$ is independent and follows $\mathcal{N}(\mu_{lj}^{\ast\ast},\sigma^2 )$. 
Consider any fixed $i=1,\dots,n$, $l=1,\dots,k$ and $j=1,\dots,p$, we have 
\begin{align*}
    g_i(\mu_{lj})
    &= \mathbb{E}_{\bm{\mathrm{x}}_i }\left[ \exp(-\lambda^{\ast}\mathrm{x}_{ij}^2 ) \cdot \left\{ (\mathrm{x}_{ij} - \mu_{lj} )^2 - \lambda \mu_{lj}^2 \right\} \;\middle|\; \mathrm{z}_i=l \right] \\
    &=\int \exp(-\lambda^{\ast} x^2 )\cdot \left\{ (x - \mu_{lj} )^2 - \lambda \mu_{lj}^2 \right\}\cdot \frac{1}{\sqrt{2\pi} \sigma} \cdot \exp\left( -\frac{(x - \mu_{lj}^{\ast\ast})^2}{2\sigma^2}  \right)   dx\\
    &= \frac{1}{\sqrt{2\sigma^2\lambda^{\ast}+1 }} \cdot \exp\left( -\frac{ \lambda^{\ast} (\mu_{lj}^{\ast\ast})^2 }{ 2\sigma^2\lambda^{\ast}+1  } \right) \cdot \int \left\{ (x - \mu_{lj} )^2 - \lambda \mu_{lj}^2 \right\} \\ 
     &\quad \cdot \frac{ \sqrt{2\sigma^2\lambda^{\ast}+1 } }{\sqrt{2\pi}\sigma } \exp\left( -\frac{ 2\sigma^2\lambda^{\ast}+1 }{ 2\sigma^2 } \left( x - \frac{\mu_{lj}^{\ast\ast}  }{ 2\sigma^2\lambda^{\ast}+1 } \right)^2 \right)  \;dx \\
     &= \frac{1}{\sqrt{2\sigma^2\lambda^{\ast}+1 }} \cdot \exp\left( -\frac{ \lambda^{\ast} (\mu_{lj}^{\ast\ast})^2 }{ 2\sigma^2\lambda^{\ast}+1  } \right) \cdot \mathbb{E}_{\mathrm{y}_{ij}} \left[ (\mathrm{y}_{ij} - \mu_{lj})^2 - \lambda \mu_{lj}^2 \right],
\end{align*}
where $\mathrm{y}_{ij}$ is a random variable following a normal distribution 
\begin{align*}
    \mathcal{N}\left( \frac{\mu_{lj}^{\ast\ast}  }{ 2\sigma^2\lambda^{\ast}+1 } \;,\; \frac{\sigma^2}{ 2\sigma^2\lambda^{\ast}+1 } \right). 
\end{align*}
then we have 
\begin{align*}
    \frac{\partial g_i }{ \partial \mu_{lj} } 
    &= \frac{1}{\sqrt{2\sigma^2\lambda^{\ast}+1 }} \cdot \exp\left( -\frac{ \lambda^{\ast} (\mu_{lj}^{\ast\ast})^2 }{ 2\sigma^2\lambda^{\ast}+1  } \right) \cdot \mathbb{E}_{\mathrm{y}_{ij}} \left[ -2 \mathrm{y}_{ij} -2 (\lambda -1)\mu_{lj} ) \right] \\
    &= \frac{1}{\sqrt{2\sigma^2\lambda^{\ast}+1 }} \cdot \exp\left( -\frac{ \lambda^{\ast} (\mu_{lj}^{\ast\ast})^2 }{ 2\sigma^2\lambda^{\ast}+1  } \right) \cdot (-2)\cdot \left( \frac{\mu_{lj}^{\ast\ast}  }{ 2\sigma^2\lambda^{\ast}+1 } + (\lambda -1)\mu_{lj}  \right).
\end{align*}
Thus, under the condition $\lambda = 1 - 1/(2\sigma^2\lambda^{\ast}+1 )$, we have the gradient of $g_i $ is zero at $\mu_{lj}= \mu_{lj}^{\ast\ast}$, which follows that the gradient of $\text{(I)}$ about $\bm{M}$ is zero at $\bm{M}=\bm{M}^{\ast\ast}$.

For $\text{(II)}$, we have 
\begin{align*}
    \text{(II)}
    &=\frac{1}{n}\sum_{i=1}^{n} \sum_{l=1}^{k} \text{Pr}(\mathrm{z}_i=l)\cdot \mathbb{E}_{\bm{\mathrm{x}}_i}\left[ (\mathrm{u}_{il}^{\ast\ast} - 1  ) \cdot  \sum_{j=1}^{p} \exp(-\lambda^{\ast}\mathrm{x}_{ij}^2 ) \cdot \left\{ (\mathrm{x}_{ij} - \mu_{lj} )^2 - \lambda \mu_{lj}^2 \right\} \middle|\; \mathrm{z}_i=l \right] \\
    &=\frac{1}{n}\sum_{i=1}^{n} \sum_{l=1}^{k} \frac{1}{k}\cdot \mathbb{E}_{\bm{\mathrm{x}}_i}\left[ (\mathrm{u}_{il}^{\ast\ast} - 1  ) \cdot \sum_{j=1}^{p} \exp(-\lambda^{\ast}\mathrm{x}_{ij}^2 ) \cdot \left\{ (\mathrm{x}_{ij} - \mu_{lj} )^2 - \lambda \mu_{lj}^2 \right\} \;\middle|\; \mathrm{z}_i=l \right] \\
    &\leq \frac{1}{n}\sum_{i=1}^{n} \sum_{l=1}^{k} \frac{1}{k}\cdot \left\{ \mathbb{E}_{\bm{\mathrm{x}}_i}\left[ (\mathrm{u}_{il}^{\ast\ast} - 1  )^2 \;\middle|\; \mathrm{z}_i=l \right] \right\}^{1/2} \cdot \\
    &\quad \left\{ \mathbb{E}_{\bm{\mathrm{x}}_i}\left[ \left( \sum_{j=1}^{p} \exp(-\lambda^{\ast}\mathrm{x}_{ij}^2 ) \cdot \left\{ (\mathrm{x}_{ij} - \mu_{lj} )^2 - \lambda \mu_{lj}^2 \right\} \right)^2  \;\middle|\; \mathrm{z}_i=l \right] \right\}^{1/2}
\end{align*}
Because 
\begin{align*}
    &\mathbb{E}_{\bm{\mathrm{x}}_i}\left[ (\mathrm{u}_{il}^{\ast\ast} - 1  )^2 \;\middle|\; \mathrm{z}_i=l  \right] 
    =\text{Pr}\left(\mathrm{u}_{il}^{\ast\ast}=0 \;\middle|\; \mathrm{z}_i=l \right)
    =\text{Pr}\left(  \mathrm{z}_i^{\ast\ast}\neq l \;\middle|\; \mathrm{z}_i=l  \right) ,
\end{align*}
and 
\begin{align*}
    &\mathbb{E}_{\bm{\mathrm{x}}_i}\left[ \left( \sum_{j=1}^{p} \exp(-\lambda^{\ast}\mathrm{x}_{ij}^2 ) \cdot \left\{ (\mathrm{x}_{ij} - \mu_{lj} )^2 - \lambda \mu_{lj}^2 \right\} \right)^2  \;\middle|\; \mathrm{z}_i=l \right] \\
    &\leq \mathbb{E}_{\bm{\mathrm{x}}_i}\left[ \left( \sum_{j=1}^{p} 1 \cdot \left| (\mathrm{x}_{ij} - \mu_{lj} )^2 - \lambda \mu_{lj}^2 \right| \right)^2  \;\middle|\; \mathrm{z}_i=l \right] \\
    &\leq \mathbb{E}_{\bm{\mathrm{x}}_i}\left[ p\cdot \sum_{j=1}^{p} \left\{ (\mathrm{x}_{ij} - \mu_{lj} )^2 - \lambda \mu_{lj}^2 \right\}^2 \;\middle|\; \mathrm{z}_i=l \right] \\
    &\leq C_4,
\end{align*}
where $C_4<\infty$ is a constant under the Assumption~\ref{assumption_X_compact_sub-Gaussian} (i.e., $\bm{\mathrm{x}}_i$ is sub-Gaussian), then  
\begin{align*}
    \text{(II)}
    &\leq \frac{1}{n}\sum_{i=1}^{n} \sum_{l=1}^{k} \frac{1}{k}\cdot \left\{ \text{Pr}\left(  \mathrm{z}_i^{\ast\ast}\neq l \;\middle|\; \mathrm{z}_i=l  \right)\right\}^{1/2} \cdot \sqrt{C_4} \\
    &\leq \sqrt{C_4}\cdot \frac{1}{n}\sum_{i=1}^{n} \left\{ \sum_{l=1}^{k} \frac{1}{k}\cdot \text{Pr}\left(  \mathrm{z}_i^{\ast\ast}\neq l \;\middle|\; \mathrm{z}_i=l  \right) \right\}^{1/2} \\
    &= \sqrt{C_4}\cdot \frac{1}{n}\sum_{i=1}^{n} \left\{ \text{Pr}\left( \mathrm{z}_i^{\ast\ast}\neq   \mathrm{z}_i \right)\right\}^{1/2}\\
    &\leq \sqrt{C_4}\cdot \left\{ \frac{1}{n}\sum_{i=1}^{n} \text{Pr}\left( \mathrm{z}_i^{\ast\ast}\neq   \mathrm{z}_i \right) \right\}^{1/2}
\end{align*}
Since for any fixed $i=1,\dots,n$,
\begin{align*}
    \text{Pr}\left( \mathrm{z}_i^{\ast\ast}\neq   \mathrm{z}_i \right) 
    &\leq \text{Pr}\left( \text{there exists some } i'\in\{1,\dots,n\} \text{ s.t }  \mathrm{z}_{i'}^{\ast\ast}\neq   \mathrm{z}_{i'}  \right)\\
    &=1-\text{Pr}\left( \forall i'=1,\dots,n,\;  \mathrm{z}_{i'}^{\ast\ast} =   \mathrm{z}_{i'} \right)
\end{align*}
then we obtain
\begin{align*}
    &\text{(II)}
    \leq \sqrt{C_4}\cdot \big\{ 1-\text{Pr}\left( \forall i'=1,\dots,n,\;  \mathrm{z}_{i'}^{\ast\ast} =   \mathrm{z}_{i'} \right) \big\}^{1/2}.
\end{align*}
According to the result about $\mathrm{z}_i^{\ast\ast}$ in the proof of Lemma~\ref{lemma_proof_converge_to_truth_about_U}, it holds that $\text{Pr}\left( \forall i'=1,\dots,n,\;  \mathrm{z}_{i'}^{\ast\ast} =   \mathrm{z}_{i'} \right)\rightarrow 1$ under the condition $\sigma^2\log(n)\rightarrow 0$. 
Thus, we obtain $\text{(II)}\rightarrow 0$, which follows that the gradient of $\text{(II)}$ about $\bm{M}$ tends to zero at $\bm{M}=\bm{M}^{\ast\ast}$.

Therefore, combining $\text{(I)}$ and $\text{(II)}$, we have $\bm{M}^{\ast\ast}$ asymptotically minimizes $\mathbb{E}_{\bm{\mathrm{X}},\bm{\mathrm{R}}}\left[ \widehat{L}_n^{\text{full}}(\bm{\mathrm{U}}^{\ast\ast},\bm{M}) - \widehat{L}_n^{\lambda}(\bm{\mathrm{U}}^{\ast\ast},\bm{M})  \right]$ with $\lambda = 1 - 1/(2\sigma^2\lambda^{\ast}+1 )$ in the high-SNR regime $\sigma^2\log(n)\rightarrow 0$. 
Since $\mathbb{E}_{\bm{\mathrm{X}},\bm{\mathrm{R}}}\left[ \widehat{L}_n^{\text{full}}(\bm{\mathrm{U}}^{\ast\ast},\bm{M})\right]$ is minimized at $\bm{M}=\bm{M}^{\ast\ast}$, then we have $\bm{M}^{\ast\ast}$ is an asymptotic minimizer of $\mathbb{E}_{\bm{\mathrm{X}},\bm{\mathrm{R}}}\left[ \widehat{L}_n^{\lambda}(\bm{\mathrm{U}}^{\ast\ast},\bm{M})  \right]$. 
According to uniform convergence of $\widehat{L}_n^{\lambda}$ (Lemma~\ref{lemma_uniform_convergence}) and identifiability of its minimizer (Lemma~\ref{lemma_identifiability}), we can immediately obtain that $\widehat{\bm{\mathrm{M}}}^{\lambda}$ with $\lambda = 1 - 1/(2\sigma^2\lambda^{\ast}+1 )$ converges to $\bm{M}^{\ast\ast}$ as $n\rightarrow\infty$ in the high-SNR regime $\sigma^2\log(n)\rightarrow 0$.

\end{proof}

\end{document}